\documentclass{article} % For LaTeX2e
\usepackage{arxiv}

% Recommended, but optional, packages for figures and better typesetting:
\usepackage{amsmath}
\usepackage{amssymb}
\usepackage{mathtools}
\usepackage{amsthm}

\usepackage[ruled,vlined]{algorithm2e}
\SetAlCapNameFnt{\footnotesize}
\SetAlCapFnt{\footnotesize}
\SetAlgorithmName{Alg.}{Alg.}{List of Algorithms}
\usepackage{caption}
%\usepackage[font=small,labelfont=bf,tableposition=top]{caption}
%\usepackage[font=footnotesize]{subcaption}
% \usepackage{algorithmic}

%%%%%%%%%%%%%%%%%%%%%%%%%%%%%%%%
% THEOREMS
%%%%%%%%%%%%%%%%%%%%%%%%%%%%%%%%
\theoremstyle{plain}
\newtheorem{theorem}{Theorem}[section]
\newtheorem{proposition}[theorem]{Proposition}
\newtheorem{lemma}[theorem]{Lemma}
\newtheorem{corollary}[theorem]{Corollary}
\theoremstyle{definition}
\newtheorem{definition}[theorem]{Definition}
\newtheorem{assumption}[theorem]{Assumption}
\theoremstyle{remark}
\newtheorem{remark}[theorem]{Remark}

% Todonotes is useful during development; simply uncomment the next line
%    and comment out the line below the next line to turn off comments
%\usepackage[disable,textsize=tiny]{todonotes}
\usepackage[textsize=tiny]{todonotes}
\usepackage{microtype}
\usepackage{graphicx}
\usepackage{booktabs} % for professional tables

\usepackage{enumitem}
\setlist[itemize]{align=parleft,left=0pt..1em}
\setlist[enumerate]{align=parleft,left=0pt..1em}

% hyperref makes hyperlinks in the resulting PDF.
% If your build breaks (sometimes temporarily if a hyperlink spans a page)
% please comment out the following usepackage line and replace
% \usepackage{icml2023} with \usepackage[nohyperref]{icml2023} above.
\usepackage{hyperref}
% if you use cleveref..
\usepackage[capitalize,noabbrev]{cleveref}

% Attempt to make hyperref and algorithmic work together better:

\newcommand\comment[1]{\ignorespaces}

\usepackage[utf8]{inputenc} % allow utf-8 input
\usepackage[T1]{fontenc}    % use 8-bit T1 fonts
\usepackage{hyperref}       % hyperlinks
\usepackage{url}            % simple URL typesetting
\usepackage{booktabs}       % professional-quality tables
\usepackage{amsfonts}       % blackboard math symbols
\usepackage{nicefrac}       % compact symbols for 1/2, etc.
\usepackage{microtype}      % microtypography

\title{A Local Graph Limits Perspective on Sampling-Based GNNs}

\author{%
Yeganeh Alimohammadi\\
Stanford University\\
\texttt{yeganeh@stanford.edu} \And
Luana Ruiz\\
Johns Hopkins University\\
\texttt{lrubini1@jh.edu} \And  Amin Saberi\\
Stanford University \\
\texttt{saberi@stanford.edu}
}

\definecolor{darkblue}{rgb}{0.0,0.0,0.65}
\definecolor{darkred}{rgb}{0.68,0.05,0.0}
\definecolor{darkgreen}{rgb}{0.0,0.29,0.29}
\definecolor{darkpurple}{rgb}{0.47,0.09,0.29}
\usepackage{hyperref}
\hypersetup{
   colorlinks = true,
   citecolor  = darkblue,
   linkcolor  = darkred,
   filecolor  = darkblue,
   urlcolor   = darkblue,
 }

% The \author macro works with any number of authors. There are two commands
% used to separate the names and addresses of multiple authors: \And and \AND.
%
% Using \And between authors leaves it to \LaTeX{} to determine where to break
% the lines. Using \AND forces a linebreak at that point. So, if \LaTeX{}
% puts 3 of 4 authors names on the first line, and the last on the second
% line, try using \AND instead of \And before the third author name.

%\iclrfinalcopy % Uncomment for camera-ready version, but NOT for submission.
\begin{document}
\maketitle
\begin{abstract}
We propose a theoretical framework for training Graph Neural Networks (GNNs) on \emph{large} input graphs via training on \emph{small}, \emph{fixed-size} sampled subgraphs. This framework is applicable to a wide range of models, including popular sampling-based GNNs, such as GraphSAGE and  FastGCN. 
Leveraging the theory of graph local limits, we prove that, under mild assumptions, parameters learned from training sampling-based GNNs on small samples of a large input graph are within an $\epsilon$-neighborhood of the outcome of training the same architecture on the  {\em whole graph}. We derive bounds on the number of samples, the size of the graph, and the training steps required as a function of $\epsilon$. % comment: critically our bounds are independent of the input graph size 
Our results give a novel theoretical understanding for using sampling in training GNNs. 
% comment it's unclear what they refer to
They also suggest that by training GNNs on small samples of the input graph, practitioners can identify and select the best models, hyperparameters, and sampling algorithms  more efficiently. We empirically illustrate our results on a node classification task on large citation graphs, observing that sampling-based GNNs trained on local subgraphs 12$\times$ smaller than the original graph achieve comparable performance to those trained on the input graph.
\end{abstract}

\section{Introduction} \label{sec: intro}

As the size and complexity of graph data continue to increase, there is a growing need to find ways to scale Graph Neural Networks (GNNs). Yet, scaling GNNs to larger graphs faces two key obstacles: training inefficiencies due to repeated gradient calculations at every node; and large memory requirements for storing not only the graph but also the node embeddings. %two major challenges arise when trying to scale GNNs to large graphs: the inefficiency of training by repeatedly calculating gradients at each node in the graph; and the need for significant memory to store the entire adjacency matrix and corresponding node embeddings. 
To overcome these challenges, a variety of efficient GNN training algorithms have been introduced which leverage a wide array of sampling techniques. Examples include GraphSAGE \citep{hamilton2017inductive}, FastGCN \citep{chen2018fastgcn}, and shaDoW-GNN \citep{zeng2021decoupling},
among others (see \citep{chapter6} for a thorough review). 

%{\bf Amin:} I am a little worried   the next paragraph gives the impression to the reader that our paper gives an address to the questions mentioned there. We don't directly answer that. We just say  you can run the search on smaller samples. Maybe it is better to initially motivate the result by saying despite the success there is no theoretical justification why training and sampling on subgraphs should help at all. Another version of this paragraph can show up right before the summary where we say our results enables practitioners to make these choices by experimenting on small subgraphs.

 %Although these techniques have shown promising results in reducing computational complexity while retaining good performance, practitioners still face many questions when
 %designing and optimizing a GNN
 %for a specific learning task: Which method should we use to sample the computational graph? How many nodes to sample for estimating the gradient? How to set the number of layers of the GNN and other hyperparameters of the model? Unfortunately, these choices are typically made by trial and error, which is slow and inefficient, and lacks formal guarantees. As a result, selecting the best sampling algorithm and architecture and tuning their hyperparameters is often a burdensome task. 

% comment : what they refers to
% comment: computation complexity is redundant in the following paragraph
Despite the success of sampling-based GNNs in practice, there are still no formal models that explain why they can reduce the computation and memory requirements of GNNs without significantly compromising their performance\footnote{We use the term `sampling-based GNN' broadly to refer to any GNN architecture that utilizes node sampling and/or computational graph sampling.}. 
%This paper takes the first step in this direction.% by proposing a theoretical model and analysis of sampling-based GNNs. Specifically, we propose a  framework that abstracts away  the details of specific architectures and sampling algorithms and defines a general class of sampling-based GNNs (see Algorithm \ref{alg: sampling-GNN}). 
% comment: the following sentence is too long maybe remove 'abstract aways,,,' part
We take a step in this direction by proposing a theoretical framework that abstracts away  the details of specific architectures and sampling algorithms and defines a general class of sampling-based GNNs (see Algorithm \ref{alg: sampling-GNN}). In this framework, sampling is employed in two steps. First, nodes are sampled to estimate the gradient and update the GNN weights for the next iteration (\emph{node sampling}). Second, a sampling method is used to prune the computational graph when computing the node embeddings via neighborhood aggregations (\emph{computational graph sampling}).

%. We call models fitting under this framework \emph{sampling-based GNNs}
% Any GNN architecture (e.g., GCN, GIN \citep{xu2018GIN}, SGC \citep{wu2019simplifying}, etc.) can be trained using these sampling algorithms.} 
As a second contribution, we propose to use this general framework to study a training procedure in which, instead of training the GNN on the whole graph, we train it on a collection of small subgraphs sampled from the input graph. Intuitively, this approximates both node sampling---as gradients are only computed for nodes in the subgraph---and computational graph sampling---as the computational subgraphs are not necessarily induced subgraphs\footnote{Some papers in the early GNN literature introduced both a novel architecture and a sampling-based training algorithm. This has caused confusion as it is not uncommon for the architecture and the algorithm to be called by the same name, e.g., GraphSAGE can refer to both the architecture and the computational graph sampling scheme proposed by \citet{hamilton2017inductive}. In general, when using these names, we will be referring to the sampling algorithm. We will explicitly specify it when referring to the architecture.}.
%Then, we build upon this framework by proposing an algorithm that, instead of training the sampling-based GNN on the large input graph, trains it on small subgraphs sampled from the input graph (Algorithm \ref{alg: training}). 
%The motivation behind this approach is that
%GNNs produce similar embeddings on subgraphs with similar local structures, which can be explained by their permutation equivariance \citep{haggai2019invariant,keriven2019universal} and stability properties \citep{gama19-stability,gama2019stabilityscat}.
%Permutation equivariance means that if we permute or relabel the nodes at the input of the GNN, the output is permuted in the same way, and stability can be seen as a relaxation of this property to approximate permutation sysmmetries. As a result, GNNs learn similar embeddings for nodes in different regions of the graph as long as they have similar local structures.
We prove the validity of this approximation theoretically, by showing that training such sampling-based GNNs on small sampled subgraphs yields a similar outcome to training these GNNs directly on the large target graph (Theorem \ref{thm: stochastic GNN}). %which can in turn be interpreted to mean that training sampling-based GNNs yields a similar outcome to training on the large input graph.
The proof relies on 
% To formalize how different graphs can be part of the same family, 
% we turn to
the theory of graph limits \citep{Aldous2004, benjamini2001}, which provides a way to understand the behavior of a family of graphs with similar local structures. We view the large input graph as the infinite `limit' of sampled graphs with similar local structures (i.e., similar motifs such as triangles and $k$-cycles). Then, we show that the training behavior on these smaller subgraphs converges to the training behavior on the limit graph. 
%Specifically, for any  $\epsilon>0$, we prove the existence of a constant $N_\epsilon$ that is independent of the size of the input graph such that training a sampling-based GNN on samples of size $N_\epsilon$ or larger yields convergence to an $\epsilon$-neighborhood of the GNN trained in the limit graph. %Recall that the limit graph approximated the behavior of the large input graph. 
%Moreover, our analysis shows that this convergence occurs in $\mathcal{O}(1/\epsilon^2)$ training steps.

In practice, our results apply to a variety of architectures, including GCN \citep{kipf17-classifgcnn}, GraphSAGE, and GIN with mean readout \citep{xu2018GIN}; and a variety of gradient and computational graph sampling schemes, including the neighborhood sampling scheme from GraphSAGE, FastGCN, and shaDoWGNN \citep{zeng2021decoupling} (Corollary \ref{cor: application} and Theorem \ref{thm: transductive}). Therefore, our results significantly extend previous results on GNN convergence, which focused almost exclusively on convolutional GNNs \citep{ruiz20-transf,keriven2020convergence,roddenberry2022local}. Our theoretical findings also provide a practical guideline for training GNNs on large graphs more efficiently. They suggest that 
practitioners can compare different sampling-based GNNs and their choice of hyperparameters on small samples of the input graph, knowing that these small samples are good approximations for training on the entire input graph as long as their underlying models and architectures fit within our framework.

We demonstrate the validity of our results empirically on two node classification tasks: on large citation graphs with up to 20,000 nodes (Section \ref{sec: experiment}); an on a \emph{very} large citation network from the ogbn-mag graph, with 200,000 nodes. We observe that various sampling-based GNNs trained on local subgraphs achieve comparable performance to those trained on the large input graph, even when the subgraphs are up to 40$\times$ smaller than the original graph.

%We expect that even small samples from the large graph should capture sufficient local information, while improving the computational cost of training the model. In fact, 
%our main theorem (Theorem \red{Z}) shows that training a sampling-based GNN on small samples of fixed size yields similar results to training the same GNN in the limit.

%choosing the appropriate sampling-based GNN for a learning task on a large graph. In particular, they allow the practitioner to efficiently scrutinize and compare different sampling-based GNNs by training them with the hyperparameters of their choice on small samples of the input graph, knowing that they will achieve 
%an approximation of the loss that would result approximate results to those that would be obtained by training the same model, with the same sampling algorithm, on the entire input graph.

%\todo{ contributions summarize in 2 or at most 3 things!}
\noindent \textbf{Summary.}~Our main contributions are:
\setlist{nolistsep}
\begin{itemize}[noitemsep]
\item  A unifying framework for sampling-based GNNs (Algorithm \ref{alg: sampling-GNN}) incorporating gradient  
and computational graph sampling. %, which is used to prune the computational graph.
% Building upon this framework we propose  training sampling-based GNN on small subgraphs sampled from the large input instead of training on the entire input  (Algorithm \ref{alg: training}).
%\todo{I have mixed feeling about having this as a separate item since it's not a major contribution. Maybe if we combine it with the previous item to have a shorter one it would be less emphasis on it.}

\item A simplified training procedure to approximate sampling-based GNNs, wherein GNNs are trained on small subgraphs sampled from the original input graph.

% next contribution is too long.
\item Proving, using the theory of graph limits, that training GNNs on small samples of the large input graph yields learned parameters within an $\epsilon$-neighborhood of training the GNN on the large target graph (Theorems \ref{thm: stochastic GNN} and \ref{thm: transductive}). This can be interpreted to mean that sampling-based GNNs produce similar outcomes as training on the original input graph.
%We derive bounds on the number of samples, the size of the graphs sampled, and the training steps required as a function of $\epsilon$ . 
\item Extending GNN convergence results to many well-known architectures such as GCN, GraphSAGE and GIN, and many node and computational graph sampling schemes such as neighborhood sampling (GraphSAGE), FastGCN, and shaDoW-GCN (Corollary \ref{cor: application} and Theorem \ref{thm: transductive}). 

\item Empirical results on a node classification task on PubMed and ogbn-mag, in which we observe that sampling-based GNNs trained on local subgraphs achieve comparable performance to those trained on the large target graph (Section \ref{sec: experiment}).
\end{itemize}
\raggedbottom
\section{Related Work}\label{sec: relwork}
\noindent \textbf{Sampling-based GNNs.}~Stochastic node sampling for GNN training is inspired by minibatch stochastic gradient descent and was first proposed by \citet{hamilton2017inductive}. \citet{hamilton2017inductive} also introduced GraphSAGE, which can be trained via random neighborhood sampling, a type of computational graph sampling detailed in Appendix \ref{sec:app}. 
Aiming to further decrease complexity, FastGCN uses importance sampling to sample the computational graph \citep{chen2018fastgcn}, shaDoW-GCN first samples a collection of subgraphs of depth $K$---instead of sampling during training---and then trains a conventional GNN on them \citep{zeng2021decoupling}. Our framework also applies to more recent advancements on sampling-based GNNs such as GNNAutoScale \citep{fey2021gnnautoscale}, GraphFM \citep{yu2022graphfm}, LMC \citep{shi2022lmc}, and IBMB \citep{gasteiger2022IBMB}.
These architectures and sampling mechanisms are discussed in further detail in Appendix \ref{sec:app}. See also \citep{liu2021sampling} for a complete survey on sampling-based GNNs.

A different related line of work is neural network pruning at the training stage, which is a technique for reducing the computational complexity and memory requirements of deep neural networks \citep{zhu2017prune,gale2019state,strubell2019energy}. In this line of work, they make training more effective and efficient by sampling the connections between neurons of a neural network (while in our work we study sample data that has a network structure).
Recent works on the lottery ticket hypothesis \citep{frankle2018lottery,frankle2020linear} have shed light on the possibility of sampling subnetworks during the early phases of training that achieve comparable accuracy to the full network while significantly reducing its complexity. 

Tangential to our work is \citep{yehudai2021local} explores distribution shifts in graph structures, emphasizing potential generalization issues. In contrast, we focus on unifying frameworks for training sampling-based GNNs and aim to prove the convergence of training across graph families with consistent local structures.

% Tangential to neural network pruning, our work explores the application of sampling techniques on large input graphs, leveraging their underlying network structure to reduce the data size while maintaining the performance of graph neural networks.

\comment{Graph limits have been used in the past to study functions on graphs and networks, with a focus on analyzing convergence behavior and robustness to misspecification. For dense, exchangeable networks, graphons are well-studied objects\citep{}. In particular, graphons have also been used in the context of transferability results on graph neural networks \citep{ruiz20-transf}, where they provide a theoretical framework for understanding how GNNs trained on small graphs can be deployed on larger graphs with a guarantee of performance.

In the case of sparse graphs, 
}
\noindent \textbf{Benjamini-Schramm convergence.}~The theory of graph limits introduced by \citet{benjamini2001, Aldous2004} has been used for studying random network models.  
Almost all sparse random graph models, including Erd\"os-R\'enyi graphs \citep[Theorem 2.17]{RemcoVol2}, configuration models \citep{dembo2010ising} \citep[Theorem 4.5]{RemcoVol2}, preferential attachment models \citep{berger2014}, geometric random graphs \citep{BollobasJansonRiordon}, and motif-based models  \citep{alimohammadi2022algorithms} are known to have graph limits. 
% Local limits, have been employed for proving convergence of pagerank \citep{garavaglia2020local}, the giant component in random graphs \citep{van2021giant, alimohammadi2021locality}, and random processes on networks \citep{}.
% However, 
%To our knowledge, our work is the first to use graph limits for GNNs on sparse graphs, where we analyze the convergence of a non-linear loss function. This presents a new set of challenges, as previous approaches have focused on analyzing convergence in the context of dense, exchangeable networks, and the analysis of GNNs introduces additional complexities due to the non-linear nature of the GNN models.

\comment{

Graph Neural Networks (GNNs) have shown great success in learning complex patterns and relationships in non-Euclidean data, such as that found in recommendation systems and protein folding. However, a limitation of GNNs is their scalability to large-scale graphs. 
To overcome these challenges, various sampling techniques have been proposed, such as neighborhood sampling \citep{hamilton2017inductive,chen2018fastgcn} and graph sampling methods \citep{zeng2019graphsaint}, to reduce the size of the input graphs while preserving the essential features.

This paper proposes a unified theoretical framework for graph convolutional networks (GCNs) trained on sampled graphs. Our framework provides a formal basis for understanding and analyzing the convergence behavior of GCNs trained on subsampled graphs, a critical problem in large-scale graph learning.

To formalize this theory, we classify many sampling-based GCNs based on how they sample nodes to compute gradients with minibatches and how they sample the computation graph itself. This classification enables us to compare and analyze the different sampling techniques and develop a unified approach that can handle a wide range of sampling-based GCNs, thus giving graph machine learning practitioners the ability to make informed decisions when training models to be deployed on large graphs. 

Our approach for training is based on the learning by transference algorithm from \citep{cervino2021increase}, which allows us to train GCNs on sampled graphs from a large family of graphs. By directly training GCNs on subsampled graphs using this algorithm, we can explore the large space of potential subsampled graphs and learn a GCN that can generalize well to the entire graph. We leverage the theory of graph limits (also known as Benjamini Schramm limits \citep{}) to rigorously bound the number of samples needed to efficiently estimate the result of stochastic GCN on the entire graph. 
e also show that the bias of sampling vanishes in the limit and that the GCN learned by transference converges to the optimal GCN on the limit (infinite graph) graph.

[state the algorithm and main theorem here]
}

%%%%%%%%%%%%%%%%%%%%%%%%%%%%%%%%%%%%%%%%%%%%%%%%%%
%%%%%%%%%%%%%%%%%%%% PRELIMS %%%%%%%%%%%%%%%%%%%% 
%%%%%%%%%%%%%%%%%%%%%%%%%%%%%%%%%%%%%%%%%%%%%%%%%%

\section{Sampling-Based GNNs}\label{sec: sampling GCN}

\subsection{GNNs: Preliminaries}

%Fundamentally, GNNs work just like any other neural network. We start with some input data, pass it through multiple layers of neurons that are connected in certain ways, then optimize against a loss function to get an output representation. The difference with a graph neural network is that the computation graph we pass depends on the underlying graph structure. In GNNs, nodes iteratively update their representation by exchanging information with their neighbors using a `learnable' message-passing paradigm. 

 Let $G_n=(V(G_n),E(G_n))$ be a graph where $V(G_n)$, $|V(G_n)|=n$, is the set of nodes, $E(G_n) \subseteq V(G_n) \times V(G_n)$ is the set of edges, and $\mathbf A \in \mathbb R^{n \times n}$ is the adjacency matrix. Let $\mathbf X \in \mathbb R^{n \times F}$  be the matrix of input features, where $\mathbf x_v$ is the $F$-dimensional feature vector of node $v$.

 In its most general form, a GNN consists of $L$ layers, each of which composes an \emph{aggregation} of the information in each node's neighborhood, and a \emph{combination} of the aggregate with the node's own information. Explicitly, for each layer $\ell+1$ and node $v$ we can write the following propagation rule \citep{xu2018GIN}
\begin{align} \label{eqn:propagation}
\begin{split}
\mathbf{h}^{(\ell+1)}_{N(v)}&=\texttt{AGGREGATE}_\ell\left(\{\mathbf h^{(\ell)}_u,u\in N(v)\}\right) \\
\mathbf h^{(\ell+1)}_v& =  \texttt{COMBINE}_\ell\left(\mathbf{h}^{(\ell)}_{v},\mathbf{h}^{(\ell+1)}_{N(v)}\right)\qquad \text{ and }\qquad \mathbf h^{(0)}_v = \mathbf x_v,
\end{split}
\end{align}
 \comment{
 \todo{just wrote one version, but revise later}
 \begin{align} \label{eqn:propagation}
\mathbf{h}^{(\ell+1)}_{N(v)}&=\texttt{AGGREGATE}_\ell(\{\mathbf h^{(\ell)}_u,u\in N(v)\cup\{ v\}\}) \\
\mathbf h^{(\ell+1)}_v& =  \sigma(W^{(\ell+1)}\mathbf{h}^{(\ell+1)}_{N(v)})\qquad \text{ and }\qquad \mathbf h^{(0)}_v = \mathbf x_v,
\end{align}
 }
where $N(v)$ is the neighborhood of node $v$ and $\mathbf H^{(\ell)}\in\mathbb R^{n\times F_\ell}$ is the embedding produced by layer $\ell$.
%, which is also the input to layer $\ell+1$.

In so-called node-level or inductive learning tasks, the GNN output is ${\mathbf Z=\mathbf H^{(L)}}$.
In graph-level or transductive learning tasks, the GNN has a final layer called the \emph{readout} layer, which aggregates the node embeddings $\smash{\mathbf h_v^{(L)}}$ into a single graph embedding $\smash{\mathbf h_G \in \mathbb R^{F'}}$ as follows
\begin{equation} \label{eqn: readout}
\mathbf h_G = \texttt{READOUT}(\mathbf h_v^{(L)}, v \in V(G)) 
\end{equation} 
and where $F'$ is the embedding dimension. The output of the GNN is then $\mathbf Z = \mathbf h_G$. The \texttt{READOUT} function can be a fully connected layer over the graph nodes, a sequence of pooling layers \citep{ying2018hierarchical,zhang2018end}, or a simple aggregation operation such as the maximum, the sum, or the average \citep{xu2018GIN}. 
We focus on architectures where the \texttt{READOUT} layer (if present) is a simple aggregation operation, such as in \citep{xu2018GIN} and \citep{dwivedi2020benchmarking}. This is a common assumption in the literature, since such aggregations are invariant to node relabelings and prevent the number of learnable parameters from depending on the size of the graph.

For examples of how two popular GNN architectures, GCN and GraphSAGE, can be written in the form of \eqref{eqn:propagation}--\eqref{eqn: readout}, refer to Appendix \ref{appendix: gcn_sage}. We will experiment with these architectures in Section \ref{sec: experiment}.

\textbf{Training.}~We consider both unsupervised and supervised learning tasks.
In the unsupervised case, the goal is to minimize a loss $\mathcal L$ over the dataset $\mathcal T = \{\mathbf X_m\}_m$. In the supervised case, each data point is additionally associated with a label $\mathbf Y_m$, which factors in the computation of the loss. In either case, the goal is to solve the following optimization problem
\begin{equation}\label{eqn:unsup_loss}
    \min_{\mathbf W^{(\ell)}, 1 \leq \ell \leq L}\frac{1}{|\mathcal T|}\sum_{m=1}^{|\mathcal T|}\mathcal L(\mathbf Z_m)
\end{equation}
where $\mathbf Z_m$ are the outputs of the GNN corresponding to the inputs $\mathbf X_m$.

\comment{
In the unsupervised case, the goal is to minimize a loss $\mathcal L$ over the dataset $\mathcal T = \{\mathbf X_m\}_m$, i.e.,
\begin{equation}\label{eqn:unsup_loss}
    \min_{\mathbf W^{(\ell)}, 1 \leq \ell \leq L}\frac{1}{|\mathcal T|}\sum_{m=1}^{|\mathcal T|}\mathcal L(\mathbf Z_m)
\end{equation}
where $\mathbf Z_m$ are the outputs of the GNN corresponding to the inputs $\mathbf X_m$.
In the supervised case, we are given a training set $\mathcal T = \{\mathbf X_m,\mathbf Y_m\}_m$, where $\mathbf Y_m$ are the ground-truth labels, and the goal is to solve the empirical risk minimization problem (ERM)
\begin{equation}\label{eqn:sup_loss}
    \min_{\mathbf W^{(\ell)}, 1 \leq \ell \leq L}\frac{1}{|\mathcal T|}\sum_{m=1}^{|\mathcal T|}\mathcal L(\mathbf Y_m,\mathbf Z_m) \text{.}
\end{equation}
}
This problem is solved using a gradient descent approach with updates
\begin{equation}\label{eqn:GD}
    \mathbf{W}^{(\ell)}_{t+1}=\mathbf{W}^{(\ell)}_t-\frac{\eta}{|\mathcal T|} \sum_{m=1}^{|\mathcal T|}\nabla_{\mathbf W} \mathcal L(\mathbf Z_m)
\end{equation}
%We focus on node-prediction applications of GCNs. For this purpose, the goal is to learn the weight matrices $\mathbf W$ to minimize a loss function:
%\begin{equation}\label{eqn:loss}
%    \mathcal L=\frac{1}{|\mathcal Y|}\sum_{v\in\mathcal Y}\loss(y_v,z_v),
%\end{equation}
%where $y_v$ is the ground truth label for node $v$ and $\mathcal Y$ is the set of all labeled nodes. Then the loss is minimized by finding a set of coefficients $\mathbf W$ using  a gradient descent approach, i.e., repeatedly, update weights in the opposite direction of weights:
%\begin{equation}\label{eqn:GD}
%    \mathbf{W}_{t+1}=\mathbf{W}_t-\eta \nabla_W \mathcal L,
%\end{equation}
where $\eta$ is the step size or learning rate. The process stops when the gradient becomes smaller than some predetermined small constant $\smash{\sum_{m}|\nabla_{\mathbf W} \mathcal L|}\leq |\mathcal T| \epsilon$. 

Computing the loss \eqref{eqn:unsup_loss} and the gradients \eqref{eqn:GD} can be cumbersome when the graph $G$ is large, but as we discuss in the next section, both can be estimated using sampling techniques.

%%% Sampling-based GNNs %%%

\subsection{Sampling-Based GNNs: An General Framework}
% \todo{Y:maybe the following is repetitive?}
%The demand to scale GNNs 
%is increasing as the graph data sizes continue to grow. At the same time, training GNNs on large-scale graphs requires significant computational resources and memory usage. To overcome these challenges, recent research employs diverse sampling techniques. 

% {\bf Amin:} The paragraph below is confusing. Are there two sampling appraoches? Then why is there a third? :) Are we justifying GraphSAGE FASTGCN etc? Or suggest a new way of training (on smaller graphs or subgraphs?) Once we decide this, I can rewrite & shorten the this page. }

We consider a variety of sampling techniques under a general unified framework. In particular, we offer a formalization for sampling-based GNNs
that utilize sampling in one or both of two  ways:
node sampling and computational graph sampling. 
% Further, we use a third sampling scheme that draws a small subgraph from the input at each iteration of training the sampling-based GNN architecture.  
% (I) sampling subgraphs from a large input graph for training, and (II) 
% employing sampling schemes within the GNN framework, which is trained on the subgraph. The latter sampling approach is comprised of two key components: node sampling and computational graph sampling.  
% Specifically, node sampling entails selecting nodes to estimate the gradient and update the weights in each iteration.   Computational-graph sampling involves  pruning the computational graph to determine the node embeddings.  
%Next, we provide a detailed description of each sampling technique and a formal abstraction of the unified framework.

% To provide a more general framework that abstracts away from the details of specific architectures, we formalize a class of algorithms based on sampling. Sampling is a key technique used in this class of algorithms, typically employed in two steps. First, nodes are sampled to estimate the gradient and update the weights for the next iteration. We refer to this as \emph{node sampling}. Second, a sampling method is used to prune the computational graph in computing the node embeddings. We refer to this as \emph{computational graph sampling}. We describe each of these types of sampling next, and then we make a formal abstraction of sampling-based GNNs.

% Node sampling

\noindent\textbf{Node sampling.}
% \todo{Y: our result applies to SGD only. We can potentially remove this section, but I kept it for now, since its still a nice abstractoin}
Due to the difficulty of computing gradient descent steps for every node on a large graph, a common technique to accelerate GNN training is to perform stochastic gradient descent (SGD) over minibatches of graph nodes. SGD in its conventional form samples a minibatch of nodes $V_B\subset V(G)$ and then uses the gradient on these nodes to estimate \eqref{eqn:GD}. I.e., it uses $\smash{\frac{1}{|V_B|}\sum_{v\in V_B}\nabla_{\mathbf W}\mathcal L(\mathbf z_v)}$ as an estimator for $\nabla_{\mathbf W}\mathcal L(\mathbf Z)$. 

Other variants of SGD employ importance sampling to estimate the gradient. %(e.g., GraphSAINT \citep{zeng2019graphsaint}). 
Let $\nu_g:G\to\mathbb R^+$ be a weight function influencing the sampling probability as $\smash{\frac{\nu_g(G,v)}{|V(G)|}}$. Then,
\begin{equation}\label{eq: loss estimate weighted}
 \nabla_{\mathbf W}\tilde{\mathcal L}_{\nu_g}(\mathbf Z)=\frac{1}{|V_B|}\sum_{v\in V_B}\frac{1}{\nu_g(v)} \nabla_{\mathbf W} \mathcal L(\mathbf z_v)
\end{equation}
gives an estimator for \eqref{eqn:GD}. Note that if $\nu_g(v)=1$ for all $v \in V(G)$, we recover conventional SGD.

% Computational graph sampling

\noindent\textbf{Computational graph sampling.}
% \todo{make it more precise}
In conventional GNN training, i.e., without sampling, the size of the computational graph grows exponentially with the number of layers. Hence, many architectures prune the computational graph by sampling which neighbor-to-neighbor connections to keep (those that aren't sampled are discarded). Specifically, a computational graph sampler $\nu_C$ taking a graph $G$ and a node $v\in V(G)$ as inputs outputs a sampled computational graph denoted $\nu_C(G,v)$ (or simply $\nu_C(v)$ if $G$ is clear from context). This sampled computational graph, $\nu_C(G,v)$, then replaces the full computational graph in the forward propagation step \eqref{eqn:propagation}.

Having established these sampling processes, we now proceed
to describe our proposed unified algorithmic framework which uses a node sampler to compute gradients and a computational graph sampler to compute the forward pass.
% With this sampling process established, we proceed to describe the unified algorithmic framework for sampling-based GNNs.% and then

% \todo{\bf Amin: the above paragraph needs to be rewritten.}
% \red{L: I think we can probably add an equation for the probability of adding a node $u$ to the computational graph of $v$ here, right? Here is a suggestion:
% \begin{equation*}
%     (u,v) \sim \frac{\nu_C(v,u)}{|N(v)|} 
% \end{equation*}
% }

% The algorithm

\noindent\textbf{A unified algorithmic framework for sampling-based GNNs.}
A sampling-based GNN takes as inputs a graph, a gradient sampler $\nu_g$, and a computational graph sampler $\nu_C$. It then samples a minibatch of nodes $V_B$ from the graph using $\nu_g$; uses $\nu_C$ to sample a computational graph $G_V$; performs forward propagation on $G_V$; and outputs the embeddings $\mathbf{Z}$ of the sampled nodes $V_B$. This is shown in Algorithm~\ref{alg: sampling-GNN}, where we write the outer for loop for explanation purposes only. In practice, the steps in this outer loop are executed in parallel. 
%The algorithm can have an additional \texttt{READOUT} layer for graph-level learning but for simplicity we only state the inductive learning.

This framework encompasses many well-known GNN architectures. For instance, in GraphSAGE \citep{hamilton2017inductive}, the node sampler $\nu_g$ assigns a uniform probability to all nodes, and the computational graph sampler $\nu_C$
draws a fixed number of neighbors for each node.
%, as GraphSAGE is commonly implemented with SGD. 
A second example is FastGCN, which prunes the computational graph by choosing neighbors proportionally to the normalized adjacency matrix entries. 
We provide a more detailed discussion on how various architectures, including GraphSAGE, FastGCN and shaDoWGNN, fit into this algorithm in Appendix~\ref{sec:app}.

\begin{minipage}{.51\textwidth}
\begin{algorithm}[H]%[htb!]
\footnotesize
   \caption{\footnotesize Unified Framework for Sampling-Based GNN}
   \label{alg: sampling-GNN}
    %\begin{algorithmic}
   {\bfseries Function} \texttt{SamplingBasedGNN} 
   
   {\bfseries Input:} graph $G_t$; gradient sampler $\nu_g$; comp. graph sampler $\nu_C$ %depth $L$.
   %aggregator functions $\texttt{AGGREGATE}_\ell, \texttt{COMBINE}_\ell$ for $\ell\in\{0,\dots,L\}$; depth $L$.
   %\State {\bfseries Output:} Embedding of sampled nodes. 
   
   %\State Sample minibatch of nodes $V_B$ from  $V(G_t)$ proportional to weights $\nu_g$.
   \BlankLine
   Sample minibatch of $|V_B|$ nodes $v \sim \nu_g(v)$ 

   \For{$v\in V_B$}{
   
   Sample computational graph $G_v \sim \nu_C(v)$
   
       \For{$\ell =0$ to $L-1$}{ 
       $\mathbf{h}^{(\ell+1)}_{N(v)}=\texttt{AGGR}_\ell(\{\mathbf h^{(\ell)}_u,(u,v)\in E(G_v)\}$
       
       $\mathbf h^{(\ell+1)}_v =  \texttt{COMBINE}_\ell\Big(\mathbf{h}^{(\ell+1)}_{N(v)}, \mathbf{h}^{(\ell+1)}_{v}\Big)$
        }
    
        $\mathbf Z = \mathbf H^{(L)}$ or $\texttt{READOUT}(\mathbf h_v^{(L)}, v \in V(G))$ 
        %\State $\mathbf z_v=\mathbf h^{(L)}_v$ 
    }
    %\State $\nabla_{\mathbf W_t}\tilde{\mathcal L}_{\nu_g}(\mathbf Z)=\frac{1}{|V_B|}\sum_{v\in V_B}\frac{1}{\nu_g(v)} \nabla \mathcal L(\mathbf z_v).$
    %\State  \textbf{return} $\{\mathbf z_v, \text{for } v\in V_B\}$
    %\end{algorithmic}
     \Return{$\mathbf Z$}
\end{algorithm}
\end{minipage}
\hfill
\begin{minipage}{0.49\textwidth}
\begin{algorithm}[H]%[htb!]
    \footnotesize
    \caption{\footnotesize Training by Sampling Local Subgraphs}
    \label{alg: training}
    %\begin{algorithmic}
    {\bfseries Input:} %A subgraph sequence $\{G_n\}_{n\in \mathbb N}$, 
    sample size $N_\epsilon$; subgraph sampler $\mu_S$
    %, \texttt{SamplingBasedGNN} 
    %{\color{gray}(gradient sampler $\nu_g$; computational graph sampler $\nu_C$; aggregator functions $\texttt{AGGREGATE}_\ell, \texttt{COMBINE}_\ell$ for $\ell\in\{0,\dots,L\}$; depth $L$)}.
    \BlankLine
    \While{$|\nabla\mathcal L|>\epsilon$}{
    Draw $N_\epsilon$-node graph $G_t \sim \mu_S$
    
    $\mathbf Z=$\texttt{SamplingBasedGNN}($G_t$)
         
    $\nabla_{\mathbf W_t}\tilde{\mathcal L}_{\nu_g}(\mathbf Z)=\frac{1}{|V_B|}\sum_{v\in V_B}\frac{1}{\nu_g(v)} \nabla \mathcal L(\mathbf z_v)$
   
    $\mathbf{W}_{t+1}=\mathbf{W}_t-\eta \nabla_{\mathbf W_t} \nabla\tilde{\mathcal L}_{\nu_g}$
    
    $t\leftarrow t+1$
    }
%\end{algorithmic}
\end{algorithm}
\end{minipage}
% \end{figure}

\noindent\textbf{Training by sampling local subgraphs.}
Typically, sampling-based GNNs are trained for a fixed number of rounds where each round consists of running Algorithm~\ref{alg: sampling-GNN} followed by a backward pass (i.e., weight updates via some variant of gradient descent; see full description in Algorithm~\ref{alg: classic} in the appendices).  %Therefore, the input graph to Algorithm~\ref{alg: sampling-GNN} remains unchanged during training.
We study a slight modification in which, instead of training the GNN on the full graph, we train it on a collection of smaller subgraphs. This is achieved by employing a subgraph sampler $\mu_S$ which acts as an oracle: it subsamples graphs that are then passed to the sampling-based GNN. %This can be applied to different settings, such  as when we have many graphs  or a single large graph in the training set. For example, 

%In the case of transductive learning, where we have access to many small graphs in training, $\mu_S$ would draw one graph from the training set at each iteration. Alternatively, if the learning task is for a single large graph input, $\mu_S$ samples small subgraphs from the original large input graph.  Then the sampling-based GNN (Algorithm~\ref{alg: sampling-GNN})  takes a new subgraph in input at each iteration.
 % Training terminates the iterations once the gradient magnitude falls below a certain threshold.
% The  subgraph sampler $\mu_S$ can represent either when we have many small graphs in the training set  or just one large input graph. In the former case, such as in the Protein-Protein interactions \cite{},  $\mu_S$ would choose one graph at each iteration. In the latter case, i.e., when the learning task is for one large graph input, $\mu_S$ could be defined as a sampler that draws small subgraphs from the original large input graph.  Then the sampling-based training (Algorithm~\ref{alg: training})  visits a new graph at each iteration, runs GNN architecture (as in Algorithm~\ref{alg: sampling-GNN})  only on the current graph and terminates the iterations once the gradient magnitude falls below a certain threshold.
This change in the training procedure allows analyzing the training convergence of sampling-based GNNs without significantly changing the original algorithm.
Indeed, we will show that, above a certain lower bound on the size of the subgraphs produced by $\mu_S$, Algorithm \ref{alg: training} reaches the neighborhood of a local minimum in a finite number of training steps.

\section{Graph Limits and Limit GNNs}\label{sec: GL}

We introduced a training procedure for sampling-based GNNs which consists of training them on a collection of local subgraphs that are sampled at regular intervals during training. The purpose of this change in training procedure is to approximate the effect of node and computational graph sampling. However, some questions remain unanswered. Specifically, does this approach yield results similar to training on the entire input graph? Do the sampled subgraphs contain enough information about the graph, such that training is not significantly affected? If we are able to give positive answers to these questions, we can analyze sampling-based GNNs on smaller graphs, which in turn allows for better interpretability and can facilitate the design and tuning of GNNs by practitioners.
To answer these questions, we turn to the theory of graph limits. We first introduce graph limits and the associated notion of convergence, before defining limit GNNs.

\subsection{Graph Limit Theory}\label{sec: GL def}

% Local convergence provides a way to understand how the local structure of a sequence of graphs changes as the size of the graphs increases.
% Graph convergence allows us to analyze the behavior of families of graphs with similar local structures in the asymptotic regime.
At a high level, a sequence of graphs $\{G_n\}_{n\in\mathbb N}$ is said to converge locally if the empirical distribution of the local neighborhood of a uniform random node converges. 
The original definition of local convergence by \citet{benjamini2001} applies to graphs with no node or edge features. Here, we consider graphs with attributes where each node and edge is associated with an input feature and (in the case of supervised learning) a target feature.  This is reminiscent of `marked graph convergence'   \citep{benjamini2015unimodular} \citep[Ch. 2]{RemcoVol2}.

To formalize the definition of local convergence, let $(G,o)$ denote a rooted graph with attributes, which is a graph $G$ with node/edge attributes to which we assign a root node $o$. Let $\mathcal{G}_*$ be the set of all possible rooted graphs with attributes. A limit graph is defined as a measure over the space $\mathcal{G}_*$ with respect to a local metric $d_{loc}$. For a pair of rooted graphs $(G_1,o_1)$ and $(G_2,o_2)$, the distance $d_{loc}$ is given by $$d_{loc}((G_1,o_1),(G_2,o_2)) = \frac{1}{1+\inf_k\{k:B_k(G_1,o_1)\not\simeq B_k(G_2,o_2)\}},$$ where $B_k(G,v)$ is the $k$-hop neighborhood of node $v$ in graph $G$, and $\simeq$ represents the graph isomorphism.
Since the limit graph is rooted, we need to make finite graphs in the sequence $\{G_n\}_{n\in\mathbb N}$ rooted as well by choosing a uniform random root denoted $\mathcal{P}_n=\frac{1}{n}\sum{o_n\in V(G_n)}\delta{(G_n,o_n)}$.

\begin{definition}[Local Convergence with Attributes]\label{def: lwc}
Let $\mu$ be a measure on the space $\mathcal{ G}_*$ . 
Then, a sequence of graphs $\{G_n\}_{n\in\mathbb N}$ is said to converge locally in probability to a graph $\mu\sim\mathcal{G}_*$
if, for any $k>0$ and any finite graph $Q$ with nodes at most $k$ hops from the root,
\begin{equation*}
\mathbb{P}_{v\sim\mathcal{P}_n}[Q\sim B_k(G_n,v)]\overset{\mathbb{P}}{\to}\mathbb{P}_{\mu}(Q\sim B_k(G,o)).
\end{equation*}
Equivalently, for any bounded and continuous (with respect to metric $d_{loc}$) function $f:\mathcal{G}_*\to \mathbb R$, 
\begin{equation}\label{eq: lwc function definition}
\mathbb{E}_{v\sim\mathcal{P}_n}[f(G_n,v)|G_n]\overset{\mathbb{P}}{\to}\mathbb{E}_{\mu}[f(G,o)].
\end{equation}
 \end{definition}
The above definition points out the equivalence between the convergence of local neighborhoods around random nodes and the convergence of bounded and continuous functions known as local functions (for the proof of equivalence, see  \citep[Ch. 2]{RemcoVol2}). 
Intuitively, a local function applied to finite rooted samples of a graph limit can be shown to converge to the function applied directly to the infinite graph limit. Building upon this idea, we define \emph{almost local functions} as follows.
  
\begin{definition}[Almost Local Functions] A function $f:\mathcal{G}_*^N\to\mathbb R^K$ is said to be \textit{almost local} if, for a sequence of graphs $\{G_n\}_{n\in\mathbb N}$ converging to a limit graph $\mu\sim\mathcal{G}_*$, it converges to a limit function $\tilde f:\mathcal{G}_*^N\to\mathbb R^K$ as
% the equation \ref{eq: lwc function definition} holds, i.e.,
\begin{equation}
\mathbb{E}_{v_1, v_2 \ldots v_N\sim\mathcal{P}_n}\Big[f\big((G_n,v_1),\ldots,  (G_n,v_N)\big)|G_n\Big]\overset{\mathbb{P}}{\to}\mathbb{E}_{(G^{(i)},o_i)\sim \mu}\Big[\tilde f\big((G^{(1)},o_1),\ldots (G^{(N)},o_N)\big)\Big].
\end{equation}
\end{definition}

\textbf{Remark.} The definition above departs slightly from the conventional definition of local functions in the literature, which typically requires boundedness and continuity. Still, the introduction of almost local functions is necessary for the analysis of sampling-based GNNs.
For instance, many typical sampling methods assign weights to nodes based on global parameters of the graph such as the average moments of the degree, which violates the local continuity condition. Similarly, loss functions based on negative sampling (e.g., \citep{hamilton2017inductive}) are not local functions in the conventional sense as they depend on the embedding of vertices located far away from the root. In Appendix~\ref{appendix: GL proof}, we prove that both normalized adjacency sampling and loss functions with negative sampling yield functions that are almost local.

\subsection{GNN in the limit}\label{sec: GNN in the limit}
We define limit GNNs, i.e., GNNs on infinite graph limits, by extending the aggregate-readout architecture in \eqref{eqn:propagation} to infinite graphs.
% The idea behind defining GCNs on the limit of graphs is to extend the message-passing framework used by GCNs to the limit. 

% This enables us to define GNNs on the limit, using the propagation that updates the feature matrix in each layer by applying  \eqref{eqn:propagation} on the $L$-neighborhood of the root node, $B_L(G,o)$. 
\begin{definition}[Limit GNNs] Consider a (possibly infinite) rooted graph $(G,o)$ drawn from $\mathcal{G}*$ following distribution $\mu$.
Given a GNN with $L$ layers, consider an $L$-neighborhood of the root $o$, denoted $B_L(G,o)$. Let $\mathbf Z_o$ be the output of the $L$-layer GNN given graph $B_L(G,o)$  [cf. \eqref{eqn:propagation}]. 
Then, the output embedding of the limit GNN is $\mathbb{E}{(G,o)\sim\mu}[{\mathbf Z_o}]$.
\end{definition}

In the next section, we use this limit GNN to analyze sampling-based GNNs fitting the description of Algorithm~\ref{alg: training}. 

%%%%%%%%%%%%%%%%%%%%%%%%%%%%%%%%%%%%%%%%%%%%%%%%%%%%%%%
%%%%%%%%%%%%%%%%%%%%% CONVERGENCE %%%%%%%%%%%%%%%%%%%%%
%%%%%%%%%%%%%%%%%%%%%%%%%%%%%%%%%%%%%%%%%%%%%%%%%%%%%%%

\section{Convergence of Sampling-Based GNNs }\label{sec: convergence result}

%We are now prepared to present our main result, 
% which is the convergence of training using Algorithm~\ref{alg: training}.  This result is built upon the foundation laid out in the previous sections, where we established the following key steps: 1) viewing the large input graph as a graph limit, and 2) training our GNN algorithm using a sequence of sampled subgraphs that have a similar local structure to the limit. 
% Our main result
Our main result demonstrates that a sampling-based GNN trained on a collection of subgraphs sampled from the large target graph converges to an $\epsilon$-neighborhood of optimal limit GNN, i.e., the GNN that would be obtained by training on the full graph. We give convergence results for transductive and inductive learning tasks, and show their application to commonly used GNN architectures.
Our results rely on two sets of assumptions. First, we require the loss function to be bounded and Lispchitz continuous. This assumption is not very stringent, and is commonly used in the literature.

\begin{assumption}[Loss Function]\label{assum: lipschitz}
The loss function $\mathcal L$ is bounded. Further,
the loss function $\mathcal L$ and its gradient $\nabla\mathcal L$ are Lipschitz in the learning coefficients $\mathbf W$ with Lipschitz constant $C$.
\end{assumption}

% \red{L: So there is no randomness associated with the graph sequence, correct? We are assuming a fixed graph sequence that converges in the local-weak sense.} \blue{Y: no there is randomness.}
% \todo{loss is bounded}
%\begin{assumption}[Constant Number of Layers]\label{assum: layers}
%The number of layers of the GNN in \eqref{eqn:propagation} is bounded by a fixed constant $L>0$.
%\end{assumption}

%The second set of assumptions is specific to the setting of local limits and the sampling procedure used to generate the sequence of graphs. Specifically, we assume 
Second, we require the collection of subgraphs on which the GNN is trained to satisfy the conditions of local convergence given in Section \ref{sec: GL}. We further assume that the sampling methods in Algorithm~\ref{alg: training} are almost local, and hence can be defined on the graph limit.

\begin{assumption}[Almost Local Loss, Sampler and Aggregators] \label{assum: local}
% The gradient sampling scheme $\nu_g$, the 
The computational graph sampling scheme $\nu_C$, the loss function $\mathcal L$, and  $\texttt{COMBINE}_\ell$ and $\texttt{AGGREGATE}_\ell$ for $\ell\in[0,L]$ are almost local. 
\end{assumption}
\begin{assumption}[Convergent Sequence of Graphs]\label{assum: GL}
The sequence of graphs $\{G_n\}_{n\in \mathbb N}$ converges locally in probability to $(G,o)\sim\mu$, where $\mu$ is a probability measure on the space of rooted graphs $\mathcal{G}_*$.
\end{assumption}
% {\color{orange}
% {\bf Amin:} A few comments about theorem statements. 

% 1- Can we start with a theorem that starts with a sequence of graphs G1...Gn, applies Alg1, gets and embeddings Z1...Zn and then says limit n->infty Zn = Zinfty, where Zinfty is the embedding of the limit?

% 2- Then we can follow that with a theorem (corollary?) that is G is large (infty or large enough to be eps-close) then the result of training on samples from the graph is close to training on the whole graph? 

% This way we have made the two points clearly and separately. 
% 3- Finally, I don't understand why we can't obtain a result on graph regression/classification/embedding by just adding another layer that aggregates node embeddings? If all  node embeddings are epsilon-close, the final graph embedding should be close too. Such a result will better justify training on a family of graphs. Please let me know if you think such a result is not feasible/desirable. 
% }

%We now present our main result, which provides convergence guarantees for training sampling-based GNNs on sequences of convergent graphs.  
%Since most of the sampling-based GNN use conventional SGD for computing gradient, for simplicity, we consider the case in which $\nu_g$ is uniform sampling. 

\begin{theorem}\label{thm: stochastic GNN}
Consider a sampling-based GNN with $L$ layers [cf. Algorithm~\ref{alg: training}] and with uniform node sampler $\nu_g$ (SGD), and additionally satisfying Assumptions ~\ref{assum: lipschitz}--\ref{assum: local}. Let the collection of subgraphs on which the GNN is trained (generated by sampler $\mu_S$) define a convergent graph sequence $\{G_n\}_{n\in\mathbb N}$ as in Assumption \ref{assum: GL}.
Then, there exists a learning rate $\eta>0$ such that:
 \setlist{nolistsep}
\begin{enumerate}[noitemsep]
    \item For any $\epsilon>0$, there exists $N_\epsilon>0$ such that training Algorithm~\ref{alg: training} on subgraph samples of size at least $N_\epsilon$ converges to the $\epsilon$-neighborhood of the optimal GNN on the limit $G$.
    \item The expected number of training steps, ad therefore the expected number of subgraph samples needed for convergence, is $\smash{\tilde O(\frac{1}{\epsilon^2})}$.
\end{enumerate}
\end{theorem}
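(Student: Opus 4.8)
The plan is to recast Algorithm~\ref{alg: training} as biased stochastic gradient descent on the \emph{limit objective} $F(\mathbf W):=\mathbb E_{(G,o)\sim\mu}[\tilde{\mathcal L}(\mathbf Z_o(\mathbf W))]$, where $\mathbf Z_o(\mathbf W)$ is the limit-GNN output at the root, and then to invoke the standard convergence theory for SGD with vanishing bias and bounded variance on a smooth nonconvex objective; ``reaching the $\epsilon$-neighborhood of the optimal limit GNN'' is read as producing an iterate with $\|\nabla F(\mathbf W_t)\|^2\le\epsilon$. At iteration $t$, Algorithm~\ref{alg: training} draws $G_t\sim\mu_S$ with $|V(G_t)|=N_\epsilon$, a uniform minibatch $V_B\subset V(G_t)$, and computational graphs $\nu_C(v)$, and forms $g_t=\frac{1}{|V_B|}\sum_{v\in V_B}\nabla_{\mathbf W_t}\mathcal L(\mathbf z_v)$. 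The first step is to show that the per-node map $(G,v)\mapsto\mathbb E_{\nu_C}[\nabla_{\mathbf W}\mathcal L(\mathbf z_v)]$ is \emph{almost local}: it is a finite composition of $\texttt{AGGREGATE}_\ell$, $\texttt{COMBINE}_\ell$, $\nu_C$ and $\mathcal L$ (and of the $\mathbf W$-gradients of these, which inherit the same locality structure), each almost local by Assumption~\ref{assum: local}, and a composition of almost local functions is almost local --- a lemma I would establish in the appendix, in the spirit of the corresponding statement for local functions in \citep[Ch.~2]{RemcoVol2}. Together with Assumption~\ref{assum: GL} and the definition of almost local functions, this yields, as $N_\epsilon\to\infty$, $\mathbb E[g_t\mid G_t,\mathbf W_t]=\frac{1}{|V(G_t)|}\sum_{v\in V(G_t)}\mathbb E_{\nu_C}[\nabla_{\mathbf W_t}\mathcal L(\mathbf z_v)]\overset{\mathbb P}{\to}\nabla F(\mathbf W_t)$.

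The second step converts this into a quantitative, uniform bias bound. Convergence in probability is upgraded to convergence of expectations using boundedness of $\mathcal L$ and $\nabla\mathcal L$ (Assumption~\ref{assum: lipschitz}) and bounded convergence, and the Lipschitz dependence on $\mathbf W$ makes it uniform over $\mathbf W$ in any fixed compact set by a finite $\mathbf W$-net argument; a small enough step size keeps all iterates in one such set. Hence for every $\delta>0$ there is $N_\delta$ such that $\|\mathbb E[g_t\mid\mathbf W_t]-\nabla F(\mathbf W_t)\|\le\delta$ whenever $|V(G_t)|\ge N_\delta$, simultaneously along the whole trajectory. Separately, boundedness of the per-node gradients and independence of the subgraph draws and minibatches across iterations give a variance bound $\mathbb E[\|g_t-\mathbb E[g_t\mid\mathbf W_t]\|^2\mid\mathbf W_t]\le\sigma^2$.

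The third step is the standard descent argument. Assumption~\ref{assum: lipschitz} makes $\mathbf W\mapsto\mathcal L(\mathbf Z(\mathbf W))$, hence $F$, $\beta$-smooth for some $\beta$. Applying the descent lemma to $F$ along $\mathbf W_{t+1}=\mathbf W_t-\eta_t g_t$ with a schedule $\eta_t=\Theta(1/\sqrt t)$ (equivalently, the horizon-dependent constant $\eta=\Theta(1/\sqrt T)$; this is the step size whose existence is asserted), taking expectations, inserting the bias and variance bounds of the previous paragraph, and averaging over $t=1,\dots,T$, gives a Ghadimi--Lan-type inequality
\[
\min_{1\le t\le T}\mathbb E\big[\|\nabla F(\mathbf W_t)\|^2\big]\ \lesssim\ \frac{F(\mathbf W_1)-F^\star+\beta\sigma^2}{\sqrt T}\,\log T\ +\ \delta^2 .
\]
Choosing $N_\epsilon:=N_\delta$ with $\delta^2\le\epsilon/2$ and $T=\tilde O(1/\epsilon^2)$ makes the right-hand side at most $\epsilon$, so some iterate lies in the $\epsilon$-neighborhood of a stationary point of $F$, i.e.\ of the optimum of the limit GNN; this is part~1. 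Since Algorithm~\ref{alg: training} consumes exactly one fresh subgraph per step, the expected number of subgraph samples equals the number of steps, $\tilde O(1/\epsilon^2)$, which is part~2.

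The main obstacle is the passage in the second step: local convergence supplies only \emph{pointwise-in-$\mathbf W$} and \emph{in-probability} control of the gradient estimator, whereas the SGD analysis needs the bias bound to hold in expectation and \emph{simultaneously at every iterate}. Bridging this gap requires combining the Lipschitz-in-$\mathbf W$ hypothesis with a compactness/net argument and an a~priori guarantee that the iterates remain bounded, and checking that the almost-local structure survives backpropagation as well as the global normalizing constants (e.g.\ degree moments) buried inside $\nu_C$ and $\nu_g$ --- precisely the situation the ``almost local'' relaxation of ordinary local functions was introduced to handle.
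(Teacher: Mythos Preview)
Your proposal is correct and follows the same high-level architecture as the paper --- show that the empirical gradient on the sampled subgraph tracks the limit gradient $\nabla F(\mathbf W_t)$, then run a descent argument on the limit objective $F$ --- but the technical execution diverges in a few places.

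The paper works with a \emph{constant} step size $\eta<1/C$ rather than your $\eta=\Theta(1/\sqrt{T})$. Its first lemma gives a high-probability bound $\mathbb P(\|\nabla\tilde{\mathcal L}(W_t,G_n)-\mathbb E_\mu[\nabla\mathcal L(W_t,G)]\|\ge\epsilon)\le\epsilon$ for $n\ge N_\epsilon$, and to handle minibatches it argues that for $|V_B|=o(\sqrt{|V(G_n)|})$ the sampled nodes have disjoint $K_\epsilon$-neighborhoods with high probability, so the per-node losses are conditionally independent and Hoeffding applies. Its second lemma is a per-step descent inequality obtained through an interpolation function $g(s)=\mathbb E_\mu[\mathcal L(W_t-s\eta\nabla\mathcal L(W_t,G_n),G)]$ and Lipschitzness of $\nabla\mathcal L$; the final step bounds the expectation of the stopping time $t^*=\inf\{t:\mathbb E[\|\nabla\|]\le\epsilon\}$ by telescoping $\mathbb E_\mu[\mathcal L(W_0,G)]$, giving $\mathbb E[t^*]\le\frac{4\,\mathbb E_\mu[\mathcal L(W_0,G)]}{\eta\epsilon^2(1-\epsilon)}$. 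So the paper gets $O(1/\epsilon^2)$ without the $\log$ factor and with a cleaner statement about the \emph{expected} number of steps (matching the theorem wording) rather than your fixed-horizon minimum-iterate guarantee.

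Where your route buys something is rigor about uniformity in $\mathbf W$: the paper applies its gradient-approximation lemma at each $W_t$ with a single $N_\epsilon$ without saying why the same $N_\epsilon$ works along the whole (random) trajectory, whereas you name this issue and propose the compact-set/net fix. Conversely, the paper's disjoint-neighborhood argument for the minibatch is a concrete ingredient you have folded into an abstract variance bound; if you want to match the paper's level of detail you would need to make that independence argument explicit (and note the resulting constraint $|V_B|=o(\sqrt{|V(G_n)|})$). Your ``composition of almost local functions is almost local'' lemma is also something the paper does not prove --- it simply uses Assumption~\ref{assum: local} on $\mathcal L$ directly and implicitly treats $\nabla\mathcal L$ the same way.
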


Let $\mathbf W_t$ denote the GNN coefficients learned in iteration $t$ of Algorithm~\ref{alg: training}, i.e., the weights of the GNN trained on the collection of random subgraphs. Convergence to the $\epsilon$-neighborhood of the optimal limit GNN means that, after a finite number of training steps, the expected gradient of the loss in the limit graph, using these same coefficients $\mathbf W_t$, is bounded by $\epsilon$; or, explicitly, that $\mathbb E_{\mu, W_t}\big(|\nabla_{\mathbf W_t} \mathcal L(W_t, G)|\big)\leq \epsilon$. Here, note that the randomness arises from both the limit $(G,o)\sim \mu$ and the random initialization of the coefficients $\mathbf W_0$.
% In other words, the distance from the optimal learning coefficients in the limit is small.

As a special case of this theorem, we focus on the situation in which the collection of subgraphs generated by $\mu_S$ is a collection of local subgraphs. By that we mean that $\mu_S$ first samples the infinite graph $(G,o)\sim \mu$ and then returns the breadth-first local neighborhood of size $N_\epsilon$ around the sampled root. If we think of the large target graph as the graph limit, the following result states that training on small subgraphs sampled via breadth-first search (BFS) is enough to get to the $\epsilon$-neighborhood of the optimal GNN on the large graph. 

\begin{corollary}\label{cor: }
Given a sampling-based GNN satisfying the assumptions of Theorem~\ref{thm: stochastic GNN}, 
let the subgraph sampler $\mu_S=B_{N_\epsilon}(\mu(G,o)$ be a local BFS sampler. Then, the result of Theorem~\ref{thm: stochastic GNN} holds.
\end{corollary}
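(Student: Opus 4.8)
The plan is to reduce the corollary to a single point: verifying that the subgraphs produced by the local BFS sampler $\mu_S$ satisfy Assumption~\ref{assum: GL} with limit $\mu$. All the other hypotheses of Theorem~\ref{thm: stochastic GNN}---boundedness and Lipschitz continuity of the loss (Assumption~\ref{assum: lipschitz}), almost-locality of the loss, sampler and aggregators (Assumption~\ref{assum: local}), the uniform node sampler, and the existence of a learning rate $\eta$ together with the $\smash{\tilde O(\frac{1}{\epsilon^2})}$ bound on the number of steps---are inherited unchanged, since the corollary only specializes the choice of $\mu_S$. Concretely, for each target size $n := N_\epsilon$ the BFS sampler draws $(G,o)\sim\mu$ and returns the random finite graph $G_n = B_n(G,o)$; so it remains to show that $\{G_n\}_{n\in\mathbb N}$ converges locally in probability to $\mu$ in the sense of Definition~\ref{def: lwc}.

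To do this I would argue from the neighborhood-count criterion. Fix $k\ge 1$ and a finite rooted graph $Q$ all of whose nodes lie within $k$ hops of its root, and let $\mathcal P_n$ be uniform on $V(G_n)$. The key structural observation is that truncating $G$ at radius $n$ around $o$ only distorts the $k$-neighborhoods of vertices near the boundary: if $d_G(o,v)\le n-k$ then every vertex within distance $k$ of $v$, and every shortest path witnessing those distances, already lies inside $B_n(G,o)$, so $B_k(G_n,v) = B_k(G,v)$. Hence, with $S_{n,k} := \{v\in V(G_n): d_G(o,v) > n-k\}$ denoting the boundary shell,
\begin{equation*}
\Big| \mathbb{P}_{v\sim\mathcal{P}_n}\big[Q\simeq B_k(G_n,v)\mid G_n\big] - \frac{1}{|B_n(G,o)|}\sum_{v\in V(G_n)}\mathbf{1}\big[Q\simeq B_k(G,v)\big] \Big| \le \frac{|S_{n,k}|}{|B_n(G,o)|} .
\end{equation*}
The averaged indicator on the right is the empirical frequency of $Q$ among the true (untruncated) $k$-neighborhoods seen along the ball $B_n(G,o)$; for $\mu$-a.e.\ $(G,o)$ this converges to $\mathbb{P}_\mu\big[Q\simeq B_k(G,o)\big]$ by a mass-transport / pointwise-ergodic argument, which is available because $\mu$, being a local limit of finite graphs (Assumption~\ref{assum: GL}), is unimodular.

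The step I expect to be the real obstacle is controlling the boundary term $|S_{n,k}|/|B_n(G,o)|$: unlike the interior argument this is not automatic, since in graphs of exponential neighborhood growth a constant fraction of the vertices of a large ball lie in its outermost $k$ shells. I would close this step by invoking a mild regularity condition on the limit---for instance subexponential growth of $|B_n(G,o)|$ (amenability of $\mu$), or the uniform-integrability / moment control on the degrees already used to establish almost-locality of normalized-adjacency sampling (cf.\ the Remark after the definition of almost local functions)---under which $|S_{n,k}| = o(|B_n(G,o)|)$ as $n\to\infty$. With the boundary term negligible, both sides of the display above converge to $\mathbb{P}_\mu[Q\simeq B_k(G,o)]$, the criterion of Definition~\ref{def: lwc} is met, Assumption~\ref{assum: GL} holds with limit $\mu$, and Theorem~\ref{thm: stochastic GNN} applies verbatim: Algorithm~\ref{alg: training} run on BFS subgraphs of size at least $N_\epsilon$ reaches the $\epsilon$-neighborhood of the optimal limit GNN on $\mu$ within $\smash{\tilde O(\frac{1}{\epsilon^2})}$ training steps.
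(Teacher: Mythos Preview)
Your reduction is exactly right and is what the paper does implicitly: the corollary is presented as a ``special case'' of Theorem~\ref{thm: stochastic GNN}, with no separate proof, so the only content is checking that BFS balls $G_n=B_n(G,o)$ with $(G,o)\sim\mu$ form a sequence satisfying Assumption~\ref{assum: GL}. The paper simply asserts this; you actually try to justify it, which is more than the paper offers.

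Two comments on the justification you sketch. First, your instinct that the boundary term $|S_{n,k}|/|B_n(G,o)|$ is the real obstruction is correct, and the paper does not state any hypothesis (amenability, subexponential growth, uniform degree integrability) that would force it to vanish. So what you flag as a step to ``close by invoking a mild regularity condition'' is in fact an unstated assumption of the corollary itself; you are not missing an argument, you are identifying a gap in the paper's statement. Second, be careful with the ergodic step: unimodularity of $\mu$ (which indeed follows from $\mu$ being a local limit of finite graphs) gives you the mass-transport principle, but it does \emph{not} by itself give a pointwise ergodic theorem for ball averages. For that you need $\mu$ to be ergodic (extremal among unimodular measures), or else the empirical frequency $\frac{1}{|B_n|}\sum_{v\in B_n}\mathbf 1[Q\simeq B_k(G,v)]$ converges only to the conditional probability given the ergodic component, not to $\mathbb P_\mu[Q\simeq B_k(G,o)]$. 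Either add ergodicity as a hypothesis, or note that the conclusion of Theorem~\ref{thm: stochastic GNN} should then be read relative to the realized component.

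In short: your strategy matches the paper's (trivial) route, and your added detail is sound modulo the unimodularity/ergodicity distinction; the amenability hypothesis you introduce is genuinely needed and is simply absent from the paper.
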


% \begin{theorem}\label{thm: stochastic GNN}
% Consider training a sampling-based GNN as in Algorithm~\ref{alg: training}, satisfying Assumptions ~\ref{assum: lipschitz}- \ref{assum: local} on $\{G_n\}_{n\in\mathbb N}$, on a sequence of (possibly random) convergent graphs with the limit $G\sim \mu$. Further assume that the gradient sampler $\nu_g$ is uniform, as in SGD. 
% Then, there exists a learning rate $\eta>0$ such that for any $\epsilon>0$, training Algorithm~\ref{alg: training} on the sequence $\{G_n\}_{n\in\mathbb N}$ converges to the $\epsilon$-neighborhood of the GNN learned on the limit, provided that $|V(G_n)|\geq N_\epsilon$ for some $N_\epsilon$. Further, the number of steps needed for convergence  is $\smash{\tilde O(\frac{1}{\epsilon^2})}$, i.e., if we define $t^*=\inf_t\{\mathbb{E}_{\mu,\mathbf W_t}(\nabla_{\mathbf W_t} \mathcal L(\mathbf G))\leq \epsilon\}$, then $\mathbb{E}_\mu[t^*]=O(\frac{1}{\eta\epsilon^2})$.
% \end{theorem}

% Also, note that the number of graphs needed in the input sequence is the number of training iterations, $t^*$. Since the theorem places bounds on the number of training steps required ($t^*$), it also provide bounds the number of graphs needed in the input sequence of Algorithm~\ref{alg: training}. 

This result is general and applies to various GNN architectures, including but not limited to GCN  \citep{kipf17-classifgcnn}, GraphSAGE \citep{hamilton2017inductive}, FastGCN \citep{chen2018fastgcn}, and shaDoW-GNN \citep{zeng2021decoupling}. 
\begin{corollary}\label{cor: application}
    GCN, GraphSAGE, FastGCN, and shaDoW-GNN satisfy Assumption~\ref{assum: local}. Therefore, under Assumption~\ref{assum: lipschitz}, the results of Theorems~\ref{thm: stochastic GNN} and Corollary ~\ref{cor: } hold. 
\end{corollary}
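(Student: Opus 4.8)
The plan is to verify, for each of the four named architectures and their associated samplers, that every object appearing in Assumption \ref{assum: local} — the computational graph sampler $\nu_C$, the loss $\mathcal L$, and the maps $\texttt{COMBINE}_\ell$ and $\texttt{AGGREGATE}_\ell$ — is an almost local function in the sense of the definition in Section \ref{sec: GL def}. Once this is done, the conclusion is immediate: Assumption \ref{assum: lipschitz} is assumed in the statement, Assumptions \ref{assum: local} and \ref{assum: GL} are then met (the latter because, by hypothesis of Theorem \ref{thm: stochastic GNN}, $\mu_S$ generates a convergent sequence), so Theorem \ref{thm: stochastic GNN} and Corollary \ref{cor: } apply verbatim. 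So the real content is the case analysis of the four architectures.

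I would organize this case analysis into two layers. First, the \emph{aggregate/combine} maps. For GCN, GraphSAGE, and shaDoW-GNN, $\texttt{AGGREGATE}_\ell$ is a (possibly degree-normalized) sum or mean over sampled neighbors composed with a linear map and a Lipschitz pointwise nonlinearity, and $\texttt{COMBINE}_\ell$ is a linear combination (or concatenation followed by a linear map) of the node's own embedding with the aggregate; each of these depends only on the $L$-hop rooted neighborhood and is continuous with respect to $d_{loc}$ on the (finite, since bounded-degree neighborhoods and compact feature space can be assumed, or the continuity is uniform enough to pass to the limit) rooted ball, hence local, hence trivially almost local. The one genuinely non-obvious normalization is the symmetric $\mathbf D^{-1/2}\mathbf A\mathbf D^{-1/2}$ of GCN: the entries involve degrees of neighbors, which are still local to the $1$-hop ball, so this too is local. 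Second, the \emph{samplers and losses}. Uniform neighborhood sampling (GraphSAGE) and the depth-$K$ subgraph sampling of shaDoW-GNN are local: they only inspect the rooted ball. FastGCN's importance sampler draws neighbors with probability proportional to entries of the normalized adjacency matrix — this is exactly the ``normalized adjacency sampling'' flagged in the Remark after the definition of almost local functions, which Appendix \ref{appendix: GL proof} shows to be almost local (the global degree-moment normalization is handled there). Likewise, the negative-sampling loss of \citet{hamilton2017inductive} is handled there as almost local; the supervised cross-entropy/MSE losses used by the convolutional architectures are bounded Lipschitz functions of the root's embedding and its label, hence local.

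The main obstacle — and the step I expect to require the most care — is the FastGCN importance sampler together with any loss or normalization that references \emph{global} graph statistics (degree moments, the $\ell_1$ norm of a column of $\mathbf A$), because these break the boundedness/continuity hypothesis of ordinary local functions and must instead be pushed through the weaker ``almost local'' machinery. Concretely, one must show that the empirical global statistic, as a function of $G_n$, converges (in probability) to the corresponding functional of the limit measure $\mu$ — e.g. $\frac{1}{n}\sum_v \deg(v) \to \mathbb E_\mu[\deg(o)]$ — and that the sampler's weights, viewed as functions of (root-neighborhood, global statistic), are jointly continuous so that the product converges. This is precisely the computation deferred to Appendix \ref{appendix: GL proof} for normalized adjacency sampling and negative sampling, so here I would simply invoke those lemmas and note that the remaining pieces (the aggregate/combine maps, uniform and depth-$K$ subgraph sampling, supervised losses) are local by inspection of their definitions in Appendix \ref{sec:app} and Appendix \ref{appendix: gcn_sage}. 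The write-up is therefore a short verification: state which component of each architecture is local by inspection, and which reduces to the appendix lemmas on almost locality.
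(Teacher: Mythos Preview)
Your proposal is correct and follows essentially the same route as the paper. The paper's proof is exactly the case-by-case verification in Appendix~\ref{sec:app}: for each architecture it identifies which components (aggregate/combine maps, uniform or depth-$K$ neighborhood samplers, supervised losses) are local by inspection, and for the two non-trivial pieces---FastGCN's normalized-adjacency importance sampler and GraphSAGE's negative-sampling loss---it invokes Proposition~\ref{prop: almost local fnc} in Appendix~\ref{appendix: GL proof}, just as you anticipate.
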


This corollary has an important practical implication: it gives guarantees allowing practitioners to compare any of these GNNs by training them on small samples from the large target graph, which is much less costly than doing so on the large graph itself. In Appendix~\ref{sec:app}, we provide detailed explanations for how each of these models fits into the unified framework of Algorithm \ref{alg: sampling-GNN} and satisfies the assumptions of our main result. 

We conclude by showing that our result also applies to graph learning or transductive graph machine learning tasks. To do so, we need an additional assumption on the \texttt{READOUT} layer in \eqref{eqn: readout}, which must be almost local. This enables extending our results to even more architectures, such asGIN \citep{xu2018GIN} with mean aggregation in the readout layer.

\begin{theorem}\label{thm: transductive}
For a transductive sampling-based GNN satisfying the assumptions of Theorem~\ref{thm: stochastic GNN} and with an almost-local \texttt{READOUT} layer, the result of Theorem~\ref{thm: stochastic GNN} holds.
\end{theorem}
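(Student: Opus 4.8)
The plan is to reduce Theorem~\ref{thm: transductive} to Theorem~\ref{thm: stochastic GNN}: once the \texttt{READOUT} layer is almost local, the whole transductive pipeline---graph embedding, loss, and gradient---remains an almost-local function of the sampled subgraph, so the analysis behind Theorem~\ref{thm: stochastic GNN} applies verbatim with the graph-level output in place of the node-level one. Recall that the proof of Theorem~\ref{thm: stochastic GNN} splits into (i) an \emph{approximation} step, showing that the gradient estimate computed by Algorithm~\ref{alg: training} on a subgraph $G_n$ of size $N_\epsilon$ converges in probability, as $N_\epsilon\to\infty$, to $\nabla_{\mathbf W}\mathcal L(\mathbf W,G)$, the gradient of the limit loss; and (ii) an \emph{optimization} step, a biased-SGD / stochastic-approximation argument showing that gradient descent with such an asymptotically $\epsilon$-accurate estimator reaches an $\epsilon$-stationary point of the limit loss in $\tilde O(1/\epsilon^2)$ steps. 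Step (ii) invokes only Assumption~\ref{assum: lipschitz} together with the conclusion of step (i), and never refers to the internal structure of the GNN output; hence only step (i) must be re-established in the transductive setting, where the sole structural change is that each data point now produces a single graph embedding $\mathbf h_{G_t}=\texttt{READOUT}(\mathbf h_v^{(L)},\,v\in V(G_t))$.

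For step (i), I would first show that the node-embedding map $(G_n,v)\mapsto\mathbf h_v^{(L)}$ is almost local for every fixed weight vector $\mathbf W$ in a bounded set, by induction on $\ell$ using the almost-locality of $\texttt{AGGREGATE}_\ell$ and $\texttt{COMBINE}_\ell$ (Assumption~\ref{assum: local}); its limit is the limit-GNN node embedding computed on $B_L(G,o)$. Since \texttt{READOUT} is almost local by hypothesis and is a relabeling-invariant aggregation---for mean readout simply $\mathbf h_{G_n}=\tfrac1n\sum_{v}\mathbf h_v^{(L)}=\mathbb E_{v\sim\mathcal P_n}[\mathbf h_v^{(L)}\mid G_n]$, so that \eqref{eq: lwc function definition} gives the convergence directly---composing it with the node-embedding map yields an almost-local function of $G_n$ whose limit is $\mathbb E_{(G,o)\sim\mu}[\mathbf h_o^{(L)}]$, which is exactly the limit-GNN output from the definition in Section~\ref{sec: GNN in the limit}. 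Because $\mathcal L$ is bounded, Lipschitz, has Lipschitz gradient (Assumption~\ref{assum: lipschitz}), and is itself almost local (Assumption~\ref{assum: local}), the maps $\mathbf W\mapsto\mathcal L(\mathbf W,\mathbf h_{G_n})$ and $\mathbf W\mapsto\nabla_{\mathbf W}\mathcal L(\mathbf W,\mathbf h_{G_n})$ are then almost local with limits $\mathcal L(\mathbf W,G)$ and $\nabla_{\mathbf W}\mathcal L(\mathbf W,G)$; the interchange of $\nabla_{\mathbf W}$ with the local limit is justified as in the inductive proof, via uniform-in-$\mathbf W$ convergence together with Lipschitzness of $\nabla\mathcal L$. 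Finally, the minibatch / importance-reweighting step $\tfrac1{|V_B|}\sum_{v\in V_B}\tfrac1{\nu_g(v)}\nabla\mathcal L(\mathbf z_v)$---whose index now runs over minibatch graphs rather than nodes---is handled exactly as in Theorem~\ref{thm: stochastic GNN}: with the uniform sampler it is unbiased for the subgraph gradient, whose bias relative to the limit gradient vanishes as $N_\epsilon\to\infty$. Feeding this into step (ii) yields both conclusions of Theorem~\ref{thm: stochastic GNN}, and hence Theorem~\ref{thm: transductive}.

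I expect the only genuine obstacle to be the \emph{uniformity over the weights} of the almost-locality through \texttt{READOUT}: the almost-local hypothesis is pointwise in the input configuration, whereas training moves $\mathbf W$ over a region of weight space, so one needs $\mathbb E_{v\sim\mathcal P_n}[\mathbf h_v^{(L)}\mid G_n]\to\mathbb E_{(G,o)\sim\mu}[\mathbf h_o^{(L)}]$ to hold uniformly over the bounded set of weights visited by Algorithm~\ref{alg: training}---bounded because $\eta$ is chosen small and $\mathcal L,\nabla\mathcal L$ are Lipschitz. This is resolved by the same Lipschitz-plus-$\epsilon$-net argument used for Theorem~\ref{thm: stochastic GNN}, now carried through the extra \texttt{READOUT} composition, together with the observation that the embeddings $\mathbf h_v^{(L)}$---being Lipschitz images of a bounded $\mathbf W$ applied to bounded input features---remain in a compact set, so the continuity and boundedness underlying the almost-locality of \texttt{READOUT} can be invoked uniformly. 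Apart from this bookkeeping, nothing in the transductive case differs from the inductive one; in particular the argument covers GIN with mean readout, and any architecture whose \texttt{READOUT} is a simple relabeling-invariant aggregation.
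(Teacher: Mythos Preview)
Your proposal is correct and follows the same route the paper takes: the paper does not give a separate detailed proof of Theorem~\ref{thm: transductive} but treats it as an immediate extension of Theorem~\ref{thm: stochastic GNN}, the point being that once \texttt{READOUT} is almost local, the graph-level loss inherits almost-locality and the proof machinery (Lemmas~\ref{lm: nabla bound-stochastic} and~\ref{lm: decay in loss}) applies without change. Your write-up is in fact more explicit than the paper's---in particular, your concern about uniformity of the almost-local convergence over the weight iterates $\mathbf W_t$ is a genuine technical point that the paper's proof of Theorem~\ref{thm: stochastic GNN} itself glosses over (Lemma~\ref{lm: nabla bound-stochastic} is stated for fixed $t$, and the choice of $N_\epsilon$ could in principle depend on $\mathbf W_t$), so your $\epsilon$-net remark is a welcome addition rather than an unnecessary complication.
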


%%%%%%%%%%%%%%%%%%%%%%%%%%%%%%%%%%%%%%%%%%%%%%%%%%%
%%%%%%%%%%%%%%%%%%% EXPERIMENTS %%%%%%%%%%%%%%%%%%%
%%%%%%%%%%%%%%%%%%%%%%%%%%%%%%%%%%%%%%%%%%%%%%%%%%%

\section{Experiments}\label{sec: experiment}

We validate our results empirically through two sets of experiments: an ablation study of node and computational graph sampling on a large citation network (PubMed, $\sim$20k nodes); and a more realistic example on a very large citation network (sample from ogbn-mag, 200k nodes). Dataset details are provided in Appendix \ref{appendix: experiment_details}. Common hyperparameters and training details are described in Appendix \ref{appendix: experiment_details}, and details specific to each experiment are listed in the corresponding sections.
% \todo{Isn't graph sampling already captured when any of the sampling-based GNN architectures are implemented? For example GraphSAGE? L: No... these are implemented separately in PyG, and also defined separately in \citep{hamilton2017inductive}. I.e., this paper has two contributions: the GraphSAGE architecture (the novelty of the model is the concatenation of neighbor embeddings) and neighborhood sampling. GraphSAGE can be implemented with or without neighborhood sampling.}

\subsection{Ablation Study on a Large Graph}

In first the experiment in this section, we consider node sampling independently, and then incorporate computational graph sampling in the second experiment.
%, and no sampling. 
In each case, we compare GNNs trained on the full $N$-node graph with the same type of GNN, but trained on a collection of $n$-node subgraphs. These subgraphs are sampled via breadth-first search from random seeds sampled at random from the full graph at regular training intervals $\gamma$ (in epochs). In Figures \ref{fig:node_sampling}--\ref{fig:comp_sampling}, the dashed lines correspond to the best test accuracy of the model trained on the full $N$-node graph. The solid lines are the per epoch test accuracy of the models trained on the collection of stochastic $n$-node subgraphs. 

\begin{figure*}[t]
\centering
\includegraphics[width=0.32\linewidth]{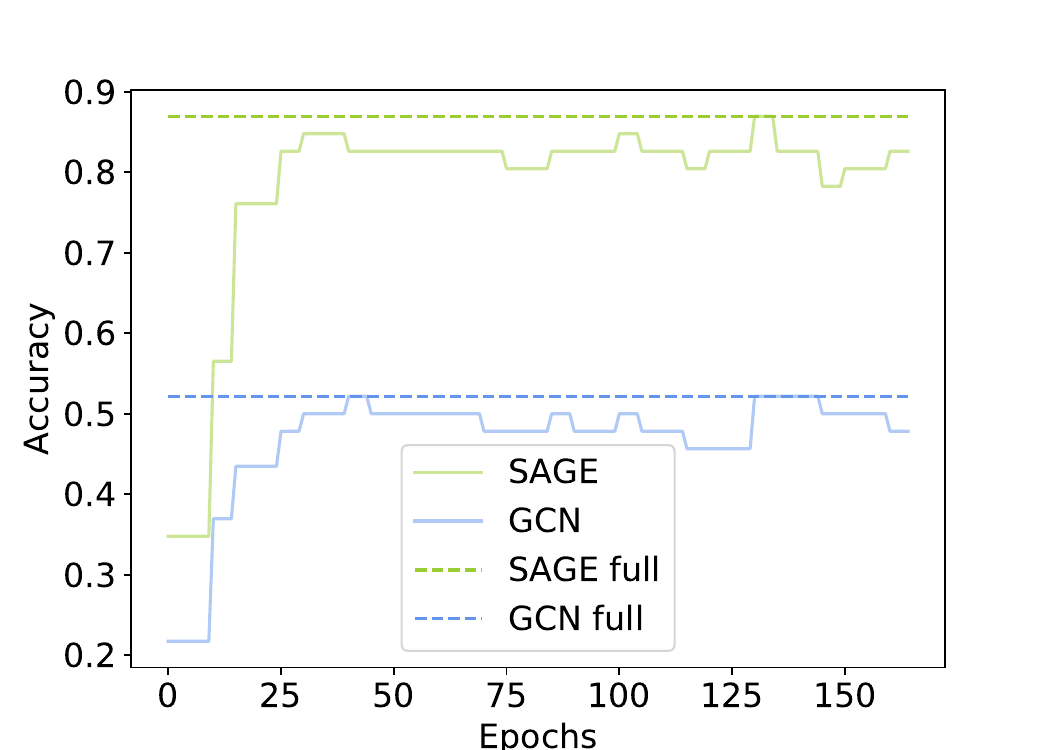}
\includegraphics[width=0.32\linewidth]{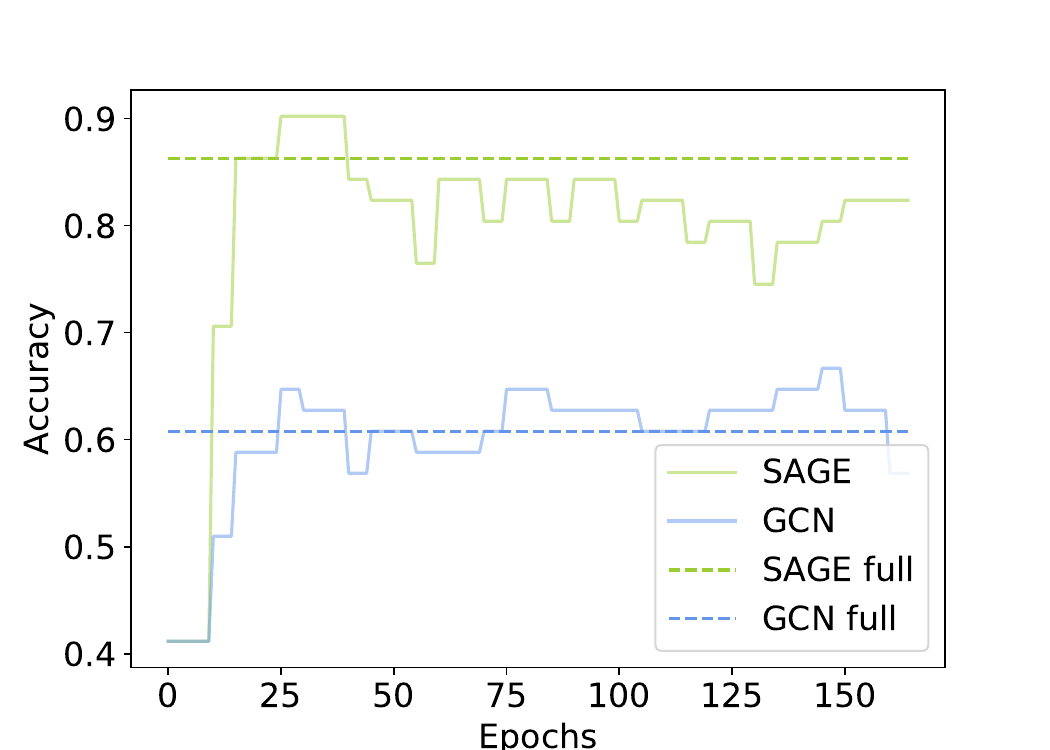}
\includegraphics[width=0.32\linewidth]{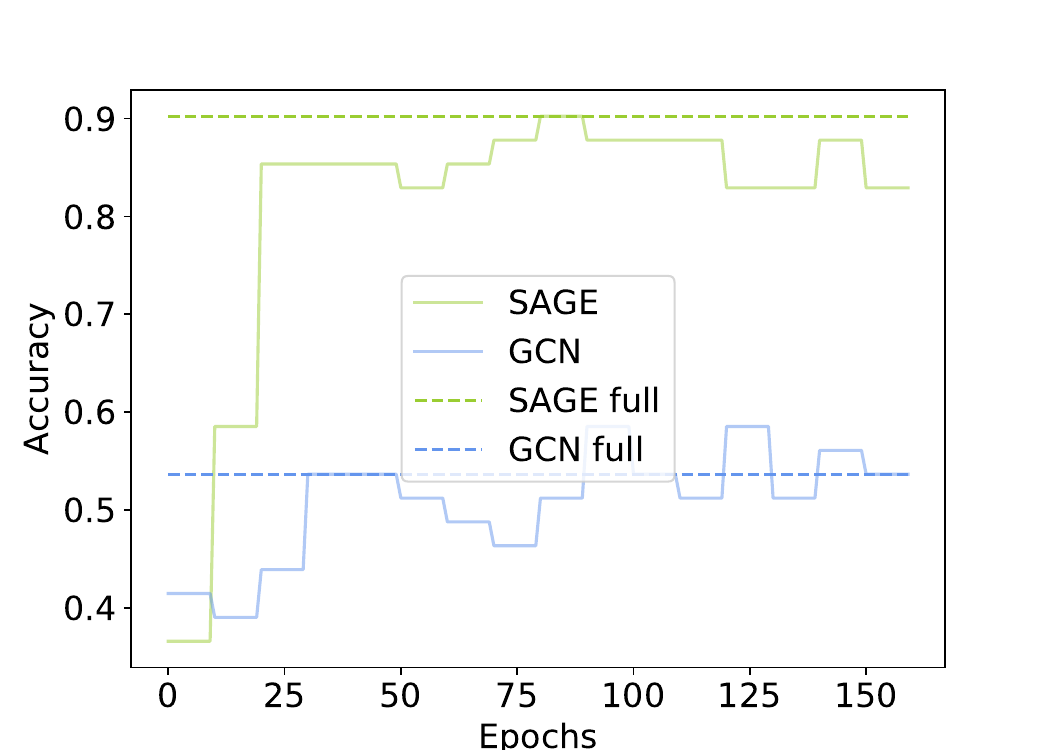}
\caption{Node sampling results for PubMed with batch size $32$. We consider three scenarios in terms of the small graph size $n$ and the graph sampling interval $\gamma$: $n=1500$, $\gamma=15$ epochs (left); $n=2000$, $\gamma=15$ epochs (center); $n=1500$, $\gamma=10$ epochs (right). Note that these graphs have size equal to approximately $10\%$ of the original graph size.}
\label{fig:node_sampling}
\end{figure*}

\noindent \textbf{Node sampling.}~We consider GNNs trained using the uniform random node sampling strategy (SGD) described in Section \ref{sec: sampling GCN} and used, e.g., in \citep{hamilton2017inductive}. This sampling technique consists of partitioning the nodes into batches and, at each step, only considering the nodes in the current batch to compute the gradient updates. 
%We fix the batch size at $256$ and sample half the nodes using negative sampling.
%For Cora and CiteSeer, $n=800$ and the GNNs are trained for $300$ epochs with a learning rate $1e{-4}$ and batch size $32$. For PubMed, $n=3000$
In Figure \ref{fig:node_sampling}, we consider three scenarios in terms of the small graph size $n$ and the graph sampling interval $\gamma$: $n=1500$, $\gamma=15$ epochs (left); $n=2000$, $\gamma=15$ epochs (center); $n=1500$, $\gamma=10$ epochs (right). Note that these graphs have size equal to approximately $10\%$ of the original graph size. The GNNs are trained for $150$ epochs with learning rate $1e{-4}$ and batch size $32$. In Figure \ref{fig:node_sampling}, we observe that the GNNs trained on collections of random subgraphs achieve comparable performance to the GNNs trained on the full graph. Increasing the graph size $n$ leads some improvement for the GCN, but causes GraphSAGE to overfit. Increasing the sampling rate increases the variation in accuracy, but leads to a slight improvement in performance for both architectures.
%, and even outperform them on Cora and CiteSeer. On PubMed, we observe more variance in the per epoch performance, but this is expected since PubMed is a much larger graph (see Table \ref{tab:stats}).

\begin{figure*}[t] 
\centering
\includegraphics[width=0.32\linewidth]{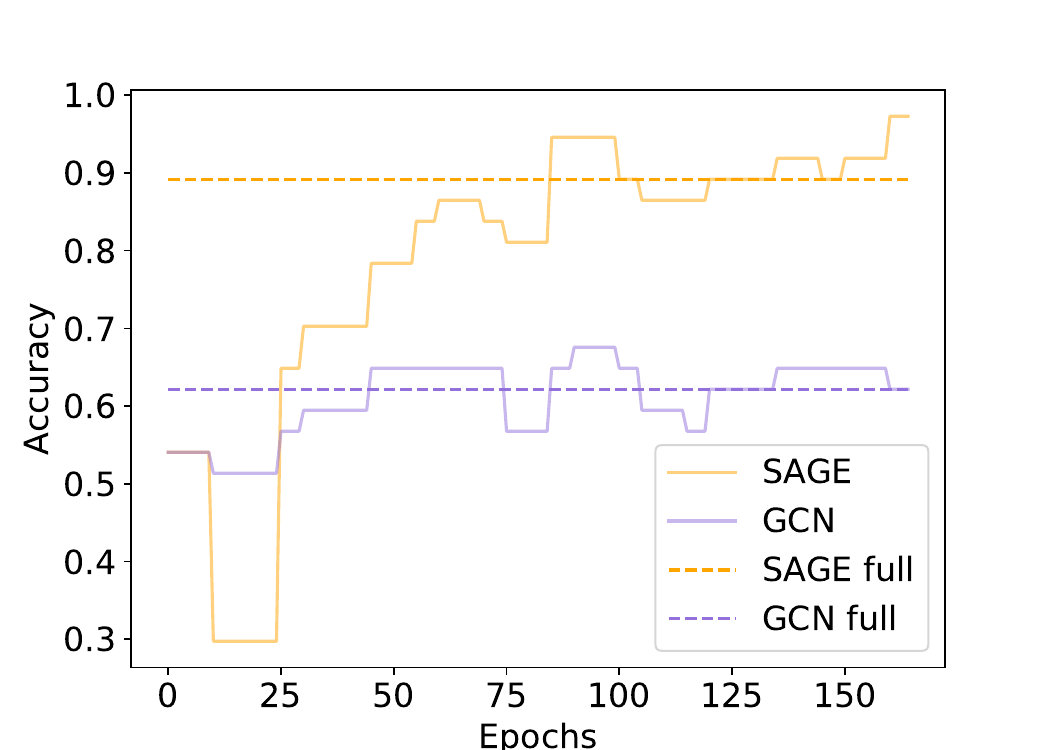}
\includegraphics[width=0.32\linewidth]{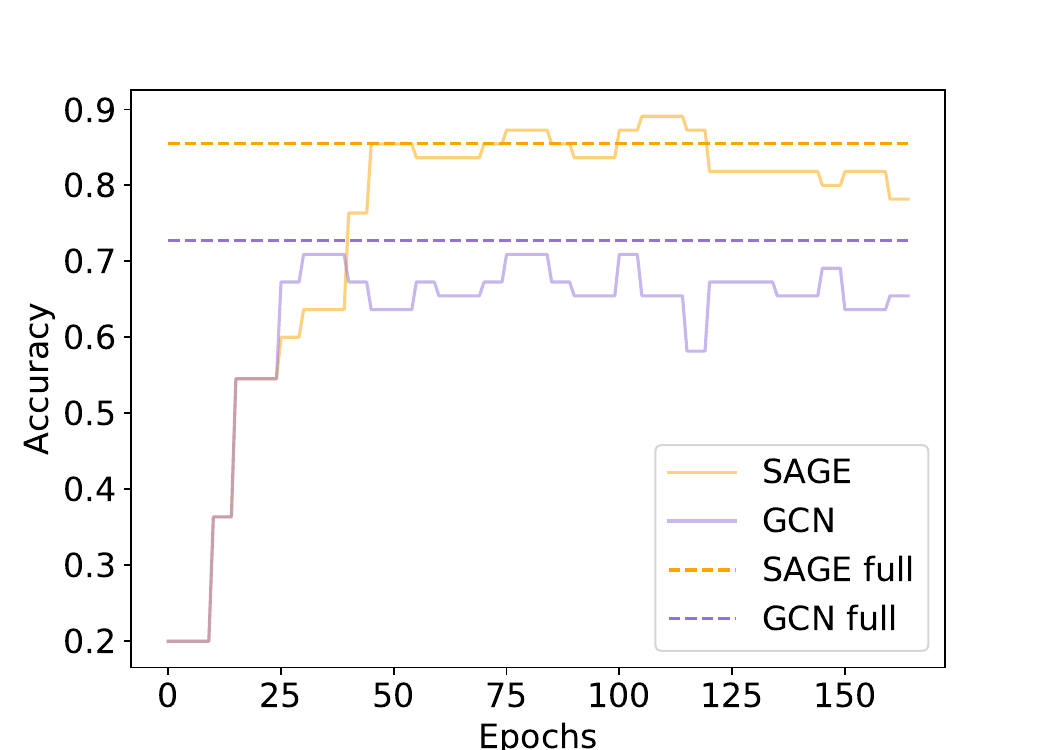}
\includegraphics[width=0.32\linewidth]{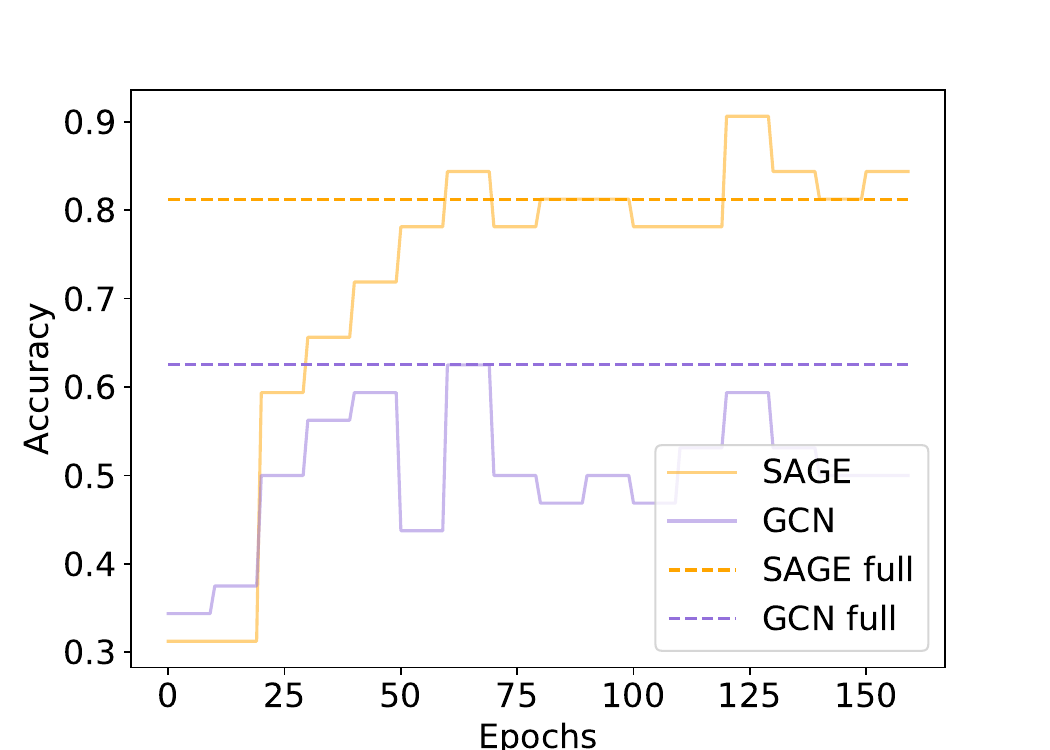}
\caption{Computational graph sampling results for PubMed with $K_1=K_2=32$ and batch size $32$. We consider three scenarios in terms of the small graph size $n$ and the graph sampling interval $\gamma$: $n=1500$, $\gamma=15$ epochs (left); $n=2000$, $\gamma=15$ epochs (center); $n=1500$, $\gamma=10$ epochs (right). Note that these graphs have size equal to approximately $10\%$ of the original graph size.}
\label{fig:comp_sampling}
\end{figure*}

\noindent \textbf{Computational graph sampling.}~Next, we consider GNNs which, in addition to employing node sampling, use the computational graph sampling strategy proposed by \citet{hamilton2017inductive}, called neighborhood sampling (see Appendix \ref{sec:app}). This technique consists of fixing parameters $K_\ell$ and, at each layer $\ell$, randomly sampling $K_\ell$ neighbors of each node from which to aggregate information. We fix $K_1=K_2=32$.   
The combinations of graph size $n$ and sampling interval $\gamma$ as well as the number of epochs and the learning rate are the same as in the previous experiment. The results are reported in Figure \ref{fig:comp_sampling}. We see that GNNs trained on collections of random subgraphs achieve comparable performance to GNNs trained on the full graph. Further, we observe some improvement in performance for GraphSAGE when the graph size is larger (center), and more variability in accuracy for both models when we increase the random subgraph sampling rate (right).
%and \red{outperform them in certain cases (e.g., GCN for Cora). We observe some overfitting for CiteSeer.}

Additional ablation results for two other citation networks---Cora and CiteSeer---can be found in Appendix \ref{appendix: saint}. We also provide additional results for GNNs without node or computational graph sampling in Appendix \ref{appendix: no_sampling}. 

\subsection{Application Example on a Very Large Graph}

Next, we consider a more realistic example on a very large graph: ogbn-mag. ogbn-mag is a heterogeneous citation network with 1,939,743 nodes representing authors, papers, institutions and fields of study \citep{hu2020open}. We focus exclusively on the paper-to-paper citation graph, which has 736,389 nodes. Due to memory limitations, we subsample it to 200,000 nodes. We consider a single experimental scenario using both node sampling with batch size $128$ and computational graph sampling with $K_1 = K_2 = 25$. The learning rate was $1e-2$.

The experiment results are reported in Figure \ref{fig:ogbn}. As before, the solid line corresponds to the GNN trained on a collection of randomly sampled subgraphs of size $n$, and the dashed line to the one trained on the full graph. 
We choose $n = 5000$ and resampling interval $\gamma = 10$ epochs for the former. Note that this choice of $n$ gives graphs with size equal to approximately 2.5\% of the size of full graph.

\begin{figure}
\centering
\includegraphics[width=0.6\textwidth]{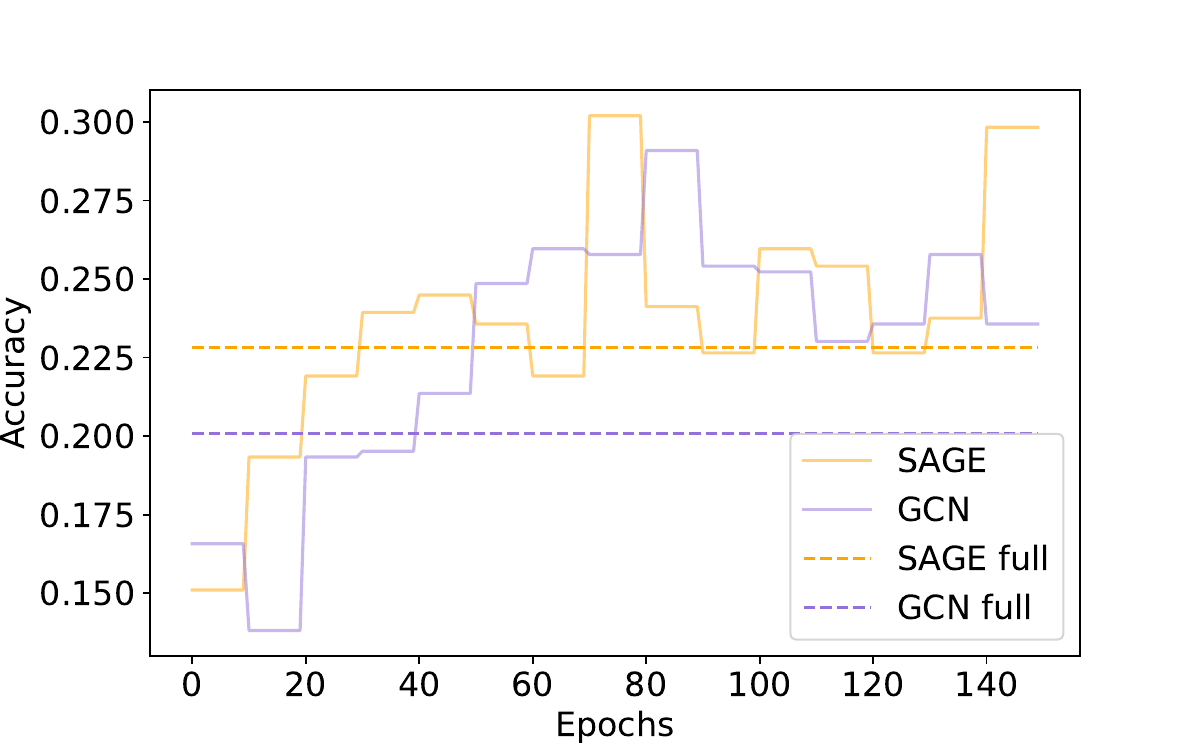}
\caption{Computational graph sampling results for ogbn-mag with $K_1 = K_2 = 25$ and batch size $128$. We consider one scenario in terms of the small graph size $n$ and the graph sampling interval $\gamma$: $n = 5000$, $\gamma = 10$ epochs.}
\label{fig:ogbn}
\end{figure}

\section{Conclusion and Future Steps}

 We have presented a novel theoretical framework for training GNNs on large graphs by leveraging the concept of local limits. Our algorithm guarantees convergence to an $\epsilon$-neighborhood of the learning GNN on the limit in $\mathcal{O}(1/\epsilon^2)$ training steps, which makes it a promising method for comparing different GNN architectures on large graphs efficiently.

Moving forward, a promising avenue for research is to explore the algorithm's robustness with adaptive sampling strategies, like those in ASGCN \citep{huang2018adaptive}, or with schemes that globally encompass the graph, exemplified by ClusterGCN \citep{chiang2019cluster}. It would be interesting to see how the convergence behavior of the algorithm is affected when the limit itself changes over time. Additionally, it would be valuable to explore whether the weights learned by sampling-based GNNs in the limit can be coupled with the actual GNN. Although it may not be possible in general, under certain assumptions, it may be feasible to establish such a connection, opening up new avenues for further research in this field.

% Question for future-self: What happens when the sampling scheme changes over time such as in ASGCN? ( Note that the limit itself could change over time in this case.)
% Can we couple sampling based GCN learned weights in the limit to the actual GCN? in general the answer should be no. Since the particular structure of graphs will change, for example assume the goal is to find the fraction of nodes with even degrees, then the answer in any sampling based GCN will be far from the actual answer. However, maybe under some assumptions we can prove it.

% % Bring the following in un-anonymous submission
% \section*{Acknowledgement}
%  Part of this work was done while the authors were visiting the Simons Institute for the Theory of Computing. We thank Megha Srivastava on her feedback on the initial draft of the paper.

\bibliography{myIEEEabrv,ref,bib-transferability}
\bibliographystyle{icml2023}

\pagebreak

\appendix
%%%%%%%%%%%%%%%%%%%%%%%%%%%%%%%%%%%%%%%%%%%%%%%%%%%%%%%%%%%%%%%%%%%%%%%%%%%%%%%
%%%%%%%%%%%%%%%%%%%%%%%%%%%%%%%%%%%%%%%%%%%%%%%%%%%%%%%%%%%%%%%%%%%%%%%%%%%%%%%

\section{More Related Work} \label{appendix: more_rel_work}

\noindent \textbf{Convergence and transferability of GNNs.}~The convergence of GNNs on sequences of graphs, also called transferability, has been studied in a number of works. \citet{ruiz20-transf} derive a non-asymptotic error bound for GNNs on dense graph sequences converging to graphons, and use it to show that GNNs can be trained on graphs of moderate size and transferred to larger graphs. These bounds are refined, and proved for a wider class of graphs, in \citep{ruiz2021transferability}. \citet{levie2019transferability} prove convergence and transferability of GNNs on graphs sampled from general topological spaces, and \citet{maskey2021transferability} particularize this analysis to graphons. \citet{keriven2020convergence} study the convergence and transferability of GNNs on graphs sampled from a random graph model where the edge probability scales with the graph size, allowing for graphs that are moderately sparse. \citet{roddenberry2022local} define local distributions of the neighborhoods of a graph to prove a series of convergence results for graph convolutional filters, and in particular their transferability across graphs of bounded degree. Note that this is different than what we do in this paper: we focus on \emph{sampling-based} GNNs, and prove convergence of their training on local subgraphs to GNNs trained on the large limit graph. More generally, random graph models have been a popular tool for understanding theoretical properties of GNNs, with several applications to topics including expressive power in community detection \citep{ruiz2022graph}, linear separability in semi-supervised classification \citep{baranwal2023effects}, heterophily \citep{ma2021homophily}.

\noindent\textbf{Pruning neural networks.}~ Another line of research closely related to our work revolves around neural network pruning during the training stage. This approach aims to improve training efficiency by selectively sampling connections between neurons \citep{zhu2017prune, gale2019state, strubell2019energy}. Notably, recent studies on the lottery ticket hypothesis \citep{frankle2018lottery, frankle2020linear} have demonstrated that by sampling subnetworks during training, comparable accuracy to the full network can be achieved while significantly reducing its complexity. In contrast, our work takes a divergent trajectory as we shift our focus towards sampling data characterized by an intrinsic network structure, instead of manipulating the neural network connections.

\section{GCN and GraphSAGE} \label{appendix: gcn_sage}

For illustrative purposes, we focus on the GCN and GraphSAGE architectures. We also experiment with these architectures in Section \ref{sec: experiment}.

\noindent \textbf{GCN.}~The layerwise propagation rule of GCN is given by \citep{kipf17-classifgcnn}
\begin{align} \label{eqn:gcn}
\mathbf H^{(\ell+1)} = \sigma\left(\mathbf D^{-1/2} \mathbf A \mathbf D^{-1/2} \mathbf H^{(\ell)} \mathbf W^{(\ell+1)}\right)
\qquad \text{ and }\qquad \mathbf H^{(0)} =  \mathbf X,
\end{align}
where $\mathbf D = \text{diag}(\mathbf A \boldsymbol{1})$ is the degree matrix; $\mathbf W^{(\ell+1)} \in \mathbb R^{F_\ell \times F_{\ell+1}}$ are the learnable convolution weights at layer $\ell+1$; and $\sigma$ is a pointwise nonlinearity such as the ReLU or the sigmoid. The adjacency matrix $\mathbf A$ is modified to include self-loops, hence the \texttt{AGGREGATE} and \texttt{COMBINE} operations can be written as a single operation.~Variants of GCN may consider $K$-hop graph convolutions instead of one-hop \citep{defferrard17-cnngraphs,gama18-gnnarchit,du2018graph}.

%\todo{\bf amin: is 2020 the first reference to multi-hop GCNs?}

\noindent \textbf{GraphSAGE.}~The layerwise propagation rule of GraphSAGE is given by \citep{hamilton2017inductive}
\begin{align} \label{eqn:sage}
\begin{split}
\mathbf{h}^{(\ell+1)}_{N(v)}&=\texttt{AGGREGATE}_\ell\left(\{\mathbf h^{(\ell)}_u,u\in N(v)\}\right) \\
\mathbf h^{(\ell+1)}_v& =  \sigma\left(\mathbf W^{(\ell+1)} \cdot \texttt{CONCAT}\left(\mathbf{h}^{(\ell)}_{v},\mathbf{h}^{(\ell+1)}_{N(v)}\right)\right)\qquad \text{ and }\qquad \mathbf h^{(0)}_v = \mathbf x_v,
\end{split}
\end{align}
where $\mathbf W^{(\ell+1)} \in \mathbb R^{F_{\ell+1}\times 2F_\ell}$ are learnable weights and $\sigma$ is a pointwise nonlinearity. Typical \texttt{AGGREGATE} operations are the mean, the sum, and the max. 

%In a Graph Convolutional Network, the computation process can be viewed as a message-passing algorithm. At each step, every node in the graph sends messages to its neighboring nodes. These messages are then combined to update the state of each node. For a given node $v$, the subgraph $G_v$ with $L$ layers on which the message-passing takes place is called the computational graph. The computational graph can be deterministic as in full-GCN \cite{kipf17-classifgcnn} or randomly sampled from the local neighborhood of a node which results in different variants of GCN and we will discuss later in Section~\ref{sec: sampling GCN}. 

\section{Classical Sampling-Based GNN Training Algorithm}\label{appendix: alg}
\begin{center}
\begin{minipage}{0.5\textwidth}
\begin{algorithm}[H]
    \footnotesize
   \caption{Classical Training}
   \label{alg: classic}
%\begin{algorithmic}
   %{\bfseries Input:} Graph $G(V,E)$; gradient sampler $\nu_g$; computational graph sampler $\nu_C$; aggregator functions $\text{AGGREGATE}_\ell$ for $\ell\in\{0,\dots,L\}$; depth $L$.

   \While{$|\nabla\mathcal L|>\epsilon$}{
   $\mathbf Z=$\texttt{SamplingBasedGNN}($G$)
   %;$\nu_g$;  $\nu_C$; $\texttt{AGGREGATE}_\ell, \texttt{COMBINE}_\ell$ for $\ell\in\{0,\dots,L\}$; $L$).
   
   $\nabla_{\mathbf W_t}\tilde{\mathcal L}_{\nu_g}(\mathbf Z)=\frac{1}{|V_B|}\sum_{v\in V_B}\frac{1}{\nu_g(v)} \nabla \mathcal L(\mathbf z_v)$
   
   $\mathbf{W}_{t+1}=\mathbf{W}_t-\eta \nabla_{\mathbf W_t} \nabla\tilde{\mathcal L}_{\nu_g}$
   
   $t\leftarrow t+1$
   }
%\end{algorithmic}
\end{algorithm}
\end{minipage}
\end{center}

\section{Examples of Almost Local Functions}\label{appendix: GL proof}

\begin{proposition}\label{prop: almost local fnc}
For a sequence of convergent graphs $\{G_n\}_{n\in\mathbb N}$, the following holds. 
\setlist{nolistsep}
\begin{enumerate}[noitemsep]
    \item Negative sampling:~ Let $\sigma$ be the Sigmoid function and $x\cdot y$ be the dot-product of two vectors $x$ and $y$, and let $g_\ell: \mathcal{G}_*:\mathbb R^K$ be a bounded and continuous function on rooted graphs with a finite radius $\ell\geq 0$. Then the functions $\smash{f(G_n,v,u)=\sigma \big(g_\ell(G,v)\cdot g_\ell(G,u)\big)}$ 
    and $\tilde f(G_n,v)=\mathbb{E}_{u\sim\mathcal{D}_n}[f(G_n,v,u)]$ are almost local. Here $D_n$ is a  distribution on nodes of $G_n$ such that its density, i.e., $n\mathbb P_{v\sim D_n}$, is local.
    % , and
    % ~Consider 
    %any bounded and continuous function $h$,  any almost local function $g$, and
    % a sequence of distribution $\{D_n\}_{n\in\mathbb N}$ on nodes $D_n$ with the limit $D$,
    % an almost local sampling weight function $D_n$,
    % such that 
    % the distance of any two nodes $u$ and $v$ drawn independently from $D_n$ grows with the size of the graph, i.e, $\smash{dist_{G_n}(u,v)\overset{\mathbb P}{\to}\infty}$.
    % Then the functions $\smash{f(G_n,v,u)=\sigma \big(x_v\dot y_v\big)}$ 
    % and $\tilde f(G_n,v)=\mathbb{E}_{u\sim\mathcal{D}_n}[f(G_n,v,u)]$ are almost local as well.
    % Further,  $D_n$ is a uniform random distribution 
    % and $h(.)=\sigma(\text{dot}(.))$ satisfies the requirements. 
    
    \item Normalized adjacency matrix: ~When the degree sequence is uniformly integrable\footnote{A random variable $X$ is said to be uniformly integrable if the probability that it exceeds a certain threshold value approaches zero as the threshold increases towards infinity, i.e., for any $\epsilon>0$ there exists $K_\epsilon$ such that $\mathbb P(X>K_\epsilon)<\epsilon$.}, then re-weighting any local function with respect to the normalized adjacency matrix is almost local. Formally, if $g$ is a bounded and continuous function, then   the function
$\smash{f(G_n,v)=g(G_n,v)\frac{deg(v)}{\frac{1}{|V(G)|}\sum_u deg(u)}}$ is almost local, and the limit can be written as,
\[\mathbb{E}_{v\sim\mathcal{P}_n}[f(G_n,v)|G_n]\overset{\mathbb{P}}{\to}\mathbb{E}_{(G,o)\sim\mu}\Big[g(G,o)\frac{deg(o)}{\bar{d}}\Big].\]
\end{enumerate}
\end{proposition}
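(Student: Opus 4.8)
\textbf{Proof proposal for Proposition~\ref{prop: almost local fnc}.}

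The plan is to verify each claim directly against the definition of almost local functions, which requires exhibiting the limit function $\tilde f$ and showing that the empirical average over random roots (resp. pairs/tuples of roots) in $G_n$ converges in probability to the expectation of $\tilde f$ under $\mu$. For part~(1), negative sampling, I would first treat the two-argument function $f(G_n,v,u)=\sigma\big(g_\ell(G_n,v)\cdot g_\ell(G_n,u)\big)$. Since $g_\ell$ is bounded, continuous, and has finite radius $\ell$, its value depends only on $B_\ell(G_n,v)$; the function $(x,y)\mapsto\sigma(x\cdot y)$ is bounded and Lipschitz on bounded sets, so the composition $\big((G_n,v),(G_n,u)\big)\mapsto f$ is a bounded, $d_{loc}$-continuous function of the pair of rooted graphs. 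Local convergence in probability extends from single random roots to independent tuples of random roots (this is standard, e.g.\ via the factorization of the limit in Definition of almost local functions, or \citep[Ch.~2]{RemcoVol2}), so $\mathbb E_{v,u\sim\mathcal P_n}[f\mid G_n]\overset{\mathbb P}{\to}\mathbb E_{(G^{(1)},o_1),(G^{(2)},o_2)\sim\mu}[\sigma(g_\ell(G^{(1)},o_1)\cdot g_\ell(G^{(2)},o_2))]$, giving $\tilde f$. For the reweighted version $\tilde f(G_n,v)=\mathbb E_{u\sim\mathcal D_n}[f(G_n,v,u)]$, the assumption that the density $n\,\mathbb P_{v\sim\mathcal D_n}$ is itself a local function lets me write the inner expectation as $\frac1n\sum_u (n\mathbb P_{\mathcal D_n}(u))\,f(G_n,v,u)$, i.e.\ an empirical average of the product of two bounded local functions of the pair $(v,u)$; products of bounded local functions are local, so the same tuple-convergence argument applies.

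For part~(2), the normalized adjacency reweighting, the obstacle is that $f(G_n,v)=g(G_n,v)\,\mathrm{deg}(v)\big/\bar d_n$ with $\bar d_n=\frac1n\sum_u\mathrm{deg}(u)$ is \emph{not} bounded (degrees can be arbitrarily large) and the normalization $\bar d_n$ is a global quantity, so neither continuity nor boundedness holds directly. The key steps are: (i) observe that $\bar d_n\to\bar d:=\mathbb E_\mu[\mathrm{deg}(o)]$ since $\mathrm{deg}(o)$ is a local function and, under uniform integrability, local convergence of the degree distribution upgrades to convergence of the first moment (this is exactly where uniform integrability is used — it rules out mass escaping to infinity); (ii) handle the unbounded factor $g(G_n,v)\mathrm{deg}(v)$ by truncation: for a threshold $K$, split $\mathrm{deg}(v)=\mathrm{deg}(v)\wedge K + (\mathrm{deg}(v)-K)^+$. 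The truncated part $g(G_n,v)\,(\mathrm{deg}(v)\wedge K)$ is a bounded local function, so its empirical average converges to $\mathbb E_\mu[g(G,o)(\mathrm{deg}(o)\wedge K)]$ by Definition~\ref{def: lwc}; the tail part has empirical average bounded by $\|g\|_\infty\cdot\frac1n\sum_u(\mathrm{deg}(u)-K)^+$, which is small uniformly in $n$ for large $K$ by uniform integrability (plus, on the limit side, $\mathbb E_\mu[(\mathrm{deg}(o)-K)^+]\to0$). (iii) Combine via an $\varepsilon/3$ argument: choose $K$ to kill both tails, use local convergence for the truncated part, and use $\bar d_n\to\bar d$ for the denominator (noting $\bar d>0$ whenever the graph has edges, so division is continuous). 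This yields $\mathbb E_{v\sim\mathcal P_n}[f(G_n,v)\mid G_n]\overset{\mathbb P}{\to}\mathbb E_{(G,o)\sim\mu}[g(G,o)\,\mathrm{deg}(o)/\bar d]$, which is the claimed limit function $\tilde f$.

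The main obstacle I anticipate is making the truncation argument in part~(2) fully rigorous \emph{in probability} rather than in expectation: the quantity $\frac1n\sum_u(\mathrm{deg}(u)-K)^+$ is a random variable (a function of $G_n$), and I need uniform integrability of the degree sequence to guarantee that it is small with high probability for all large $n$ simultaneously — this requires interpreting the uniform integrability hypothesis at the level of the convergent sequence (i.e.\ $\limsup_n \mathbb E[\frac1n\sum_u(\mathrm{deg}(u)-K)^+]\to0$ as $K\to\infty$, which is the natural reading given the footnote's definition applied to the size-biased or uniform-root degree) and then a Markov-inequality step to pass to a statement holding in probability. The negative-sampling part is comparatively routine once the tuple-convergence principle is invoked; the only mild care needed there is confirming that a product (or Sigmoid of an inner product) of finite-radius local functions remains genuinely $d_{loc}$-continuous and bounded, which follows since everything factors through finitely many hop-neighborhoods and $\sigma$ is globally bounded and locally Lipschitz.
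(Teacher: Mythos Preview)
Your proposal is correct and follows essentially the same route as the paper. Part~(2) is virtually identical: the paper also rewrites the quantity as a ratio, truncates the degree at a threshold to get a bounded local function for the main term, controls the tail via the uniform integrability assumption, and passes from expectation to a statement in probability by Markov's inequality. For Part~(1) the paper frames the argument slightly differently---emphasizing that two uniformly random roots have disjoint $\ell$-neighborhoods with high probability, so the embeddings are asymptotically independent, and then invoking a second-moment computation---whereas you package this as ``tuple convergence''; but the disjoint-neighborhoods observation is exactly how one proves tuple convergence, so the two are the same idea in different clothing (and your version is, if anything, more complete than the paper's sketch).
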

 % Note that almost-locality is preserved under addition, subtraction, multiplication, and finite composition as long as the output of the function is bounded. 
 % In the first part of the proposition, we prove almost the locality of a  generalization of  negative sampling. To apply our result to GraphSAGE, which uses  negative sampling in unsupervised learning \cite{hamilton2017inductive}, we can set $g(G,v\mathbf x)$, and $h(.)=\sigma(\text{dot}(.))$ and then apply the proposition.

\begin{proof}[Proof of Proposition~\ref{prop: almost local fnc}]
% the intuition
{\bf Part 1 (Negative sampling).}
We start with the proof for negative sampling. % Assume that node-embeddings $\mathbf{z_n}$ depend on $\ell$-neighborhoods of the nodes, i.e., $\ell$ is the number of layers in GNN.
The main idea is that, the value of $g_\ell(G,v)$ only depends on $\ell$ neighborhood of the node $v$. On the other hand, given the assumption on $D_n$ two random nodes drawn independently from $D_n$ are with high probability more than $2\ell$ far apart. As a result, their corresponding  node embedding $z_v=g_\ell (G,v)$ and $z_u=g_\ell(G,u)$ is with high probability independent of each other.  So, we can use the fact that if two independent random variables converge in probability, then
their product also converges. 

Next, we formalize this intuition using the second-moment method. The convergence of the first moment is trivial by the 
\begin{align*}
    \mathbb E_{G_n}\big[\mathbb E_{u,v\sim\mathcal{P}_n}[f(G_n,u,v)|G_n]\big]=\mathbb E_{G_n}\big[\mathbb E_{u,v\sim\mathcal{P}_n}[\sigma(g_v\cdot g_u)|G_n]\big]
\end{align*}
I want to prove it converges to
\[ \mathbb E_{(G^{(1)},o_1),(G^{(2)},o_2)\sim \mu}\big[\sigma (g(G^{(1)},o_1), g(G^{(2)},o_2))\big]
\]
% \todo{needs to be fixed}
% The proof finishes by applying Corollary 2.20 of \citep{RemcoVol2}, which shows that the distance of two vertices $u,v\in V(G_n)$ chosen uniformly at random grows with $n$.
% formalization with second moment method

{\bf Part 2 (Normalized adjacency matrix).} The proof follows the classic proofs on the locality of functions such as centrality coefficients as in \citep[Chapter 2.4]{RemcoVol2}.
First, note that we can rewrite:
\begin{align}\label{eq:apndx - normalized}
    \mathbb E_{v\sim\mathcal{P}_n}[f(G_n,v)|G_n]= \frac{\mathbb E_{v\sim\mathcal{P}_n}[deg(v)g(G_n,v)|G_n]}{\mathbb E_{v\sim\mathcal{P}_n}[deg(v)|G_n]}.
\end{align}
The local convergence in probability \eqref{eq: lwc function definition} applies to only bounded and continuous functions. 
However, the graph might have unbounded degrees that could make the enumerator or the denominator of \eqref{eq:apndx - normalized} unbounded. We will control its effect using the uniform integrability assumption on the degree sequence.

First, we will focus on the enumerator.
In particular, for any fixed integer $\Delta>0$, we can split 
\begin{align*}
    \mathbb E_{v\sim\mathcal{P}_n}[deg(v)g(G_n,v)|G_n]= \mathbb E_{v\sim\mathcal{P}_n}[ deg(v)g(G_n,v) 1(deg(v)\leq \Delta)|G_n]\\ \qquad+ \mathbb E_{v\sim\mathcal{P}_n}[g(G_n,v) deg(v) 1(deg(v)> \Delta)|G_n].
\end{align*}
The first term, $deg(v)g(G_n,v) 1(deg(v)\leq \Delta)$, is already a bounded continuous function and we can apply \eqref{eq: lwc function definition}, to get 
\begin{align*}
   \mathbb E_{v\sim\mathcal{P}_n}[deg(v)g(G_n,v) 1(deg(v)\leq \Delta)|G_n]\overset{\mathbb P}{\to}   \mathbb E_{(G,o)\sim\mu}[ deg(o) g(G,o) 1(deg(o)\leq \Delta)].
\end{align*}
For the second term, we bound it using uniform integrability. Since $g$ is a bounded function, there exists $M$ as an upper bound for it. The uniform integrability implies that for any $\epsilon>0$ there exists a large enough $N_\epsilon$, such that for all $n>N_\epsilon$,
\begin{align}
 \mathbb E_{v\sim\mathcal{P}_n}[deg(v)g(G_n,v)]\leq  \mathbb E_{v\sim\mathcal{P}_n}[deg(v)M]\leq \epsilon^2.
\end{align}
Then using Markov inequality, 
\begin{align}
 \mathbb P\Big(\mathbb E_{v\sim\mathcal{P}_n}[deg(v)g(G_n,v)]\geq \epsilon|G_n\Big)\leq  \frac{1}{\epsilon}\mathbb E_{v\sim\mathcal{P}_n}[deg(v)M]\leq \epsilon.
\end{align}
Therefore, 
\begin{align*}
   \mathbb E_{v\sim\mathcal{P}_n}[deg(v)g(G_n,v)|G_n]\overset{\mathbb P}{\to}   \mathbb E_{(G,o)\sim\mu}[ deg(o) g(G,o) ].
\end{align*}
Similarly, we can get 
\[
\mathbb E_{v\sim\mathcal{P}_n}[deg(v)|G_n]\overset{\mathbb P}{\to}   \mathbb E_{(G,o)\sim\mu}[ deg(o)  ],
\]
which would prove the second part of the proposition.
\end{proof}

%%%%%%%%%%%%%%%%%%%%%%%%%%%%%%%%%%%%%%%%%%%%%%%%%%%%%%%%%%%%%%%%%%%%%%%%%%%%%%%
%%%%%%%%%%%%%%%%%%%%%%%%%%%%%%%%%%%%%%%%%%%%%%%%%%%%%%%%%%%%%%%%%%%%%%%%%%%%%%%

\section{Proof of Theorem~\ref{thm: stochastic GNN}}\label{apendix: proof main}
% In this section, we prove Theorem~\ref{thm: stochastic GNN} and bring the proof of the second theorem based on importance sampling in Appendix~\ref{appendix: importance sampling}. 
We begin the proof by relating the gradient of the loss function on finite samples to the gradient of the loss function in the limit.
To formalize the proof, let $\mathcal L_{\nu_g}(W_t,G)$  be the loss of applying $W_t$ (coefficients at iteration $t$ of the algorithm) to graph $G$ with node sampling $\nu_g$ \footnote{ For simplicity, we don't show the input feature in this representation. But all results apply to convergent graphs with input features.}.

In this section, we present the proof of Theorem~\ref{thm: stochastic GNN}, which provides a theoretical guarantee for the convergence of the sampling-based GNN training algorithm presented in Algorithm~\ref{alg: training}. 
%We also refer interested readers to Appendix~\ref{thm: stochastic GNN importance sampling} for the proof of the second theorem based on importance sampling.

To start the proof, we first establish a connection between the gradient of the loss function on finite samples and the gradient of the loss function in the limit. Specifically, we use the notation $\mathcal L_{\nu_g}(W_t,G)$ to denote the loss of applying the coefficients $W_t$ at iteration $t$ to a graph $G$ with node sampling $\nu_g$\footnote{Note that for simplicity, we do not explicitly show the input feature in this notation, but all results apply to convergent graphs with input features.}.

Our proof proceeds in two main steps. First, we show that in each iteration of the algorithm, the gradient of the loss on the sampled graphs is close to the expected gradient of the loss in the limit infinite graph (Lemma~\ref{lm: nabla bound-stochastic}). Second, we show that the loss on finite graphs decreases at each step with a high probability for the correct choice of learning rates (Lemma~\ref{lm: decay in loss}).

\begin{lemma}\label{lm: nabla bound-stochastic}
Given the assumptions of Theorem~\ref{thm: stochastic GNN},  for any  $t>0$, then the loss in iteration $t$ of Algorithm~\ref{alg: training} on a finite graph $G_n$with appropriately sized  mini-batches is with high probability close to the expected loss on the limit. In particular, for any $\epsilon>0$ there exists $N_\epsilon$ and $\mathcal{Y}_\epsilon$, such that for $n>N_\epsilon$, when either the mini-batch is large enough
 (i.e., $V_B\geq\mathcal{Y}_\epsilon$) but at the same time not too large (i.e., $V_B=o(\sqrt{|V(G_n)|})$)  or  includes all nodes (i.e., $|V_B|=|V(G_n)|$), 
\[\mathbb P\Big(|\nabla\tilde{\mathcal L}_{\nu_g}(W_t,G_n)- \mathbb E_{\mu}[\nabla \mathcal L (W_t,G)]|\geq\epsilon\Big)\leq \epsilon,\]
where the probability is over the possible randomness of $G_n$, the randomness of SGD, and the computational graph sampler $\nu_C$.
\end{lemma}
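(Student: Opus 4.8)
\textbf{Proof proposal for Lemma~\ref{lm: nabla bound-stochastic}.}

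The plan is to decompose the quantity $\nabla\tilde{\mathcal L}_{\nu_g}(W_t,G_n)-\mathbb E_{\mu}[\nabla\mathcal L(W_t,G)]$ into two contributions, bound each separately, and combine via a union bound. Write $\nabla\bar{\mathcal L}(W_t,G_n) := \mathbb E_{v\sim\mathcal P_n}\big[\tfrac{1}{\nu_g(v)}\nabla\mathcal L(\mathbf z_v)\,\big|\,G_n\big]$, the per-node gradient averaged over \emph{all} nodes of $G_n$ (still with randomness from the computational graph sampler $\nu_C$ integrated out, or kept — see below). The first contribution, $\nabla\tilde{\mathcal L}_{\nu_g}(W_t,G_n)-\nabla\bar{\mathcal L}(W_t,G_n)$, is the minibatch sampling error; the second, $\nabla\bar{\mathcal L}(W_t,G_n)-\mathbb E_\mu[\nabla\mathcal L(W_t,G)]$, is the graph-limit error. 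I would show each is below $\epsilon/2$ with probability at least $1-\epsilon/2$ for $n$ and $|V_B|$ in the stated ranges.

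For the graph-limit term, the key observation is that, for fixed weights $W_t$, the map $(G,v)\mapsto \tfrac{1}{\nu_g(v)}\nabla_{\mathbf W}\mathcal L(\mathbf z_v)$ is an almost local function of the rooted graph. This is where Assumption~\ref{assum: local} enters: the computational graph sampler $\nu_C$, the $\texttt{AGGREGATE}_\ell$ and $\texttt{COMBINE}_\ell$ operations, and $\mathcal L$ are all almost local, and composition of almost local functions with the (Lipschitz, by Assumption~\ref{assum: lipschitz}) gradient map $\nabla\mathcal L$ remains almost local; since $\nu_g$ is the uniform node sampler we have $\nu_g(v)=1$, so no reweighting subtlety arises here (the Proposition~\ref{prop: almost local fnc} machinery would be needed for FastGCN-type $\nu_g$, but not for SGD). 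Hence by the definition of almost local functions,
\[
\mathbb E_{v\sim\mathcal P_n}\Big[\tfrac{1}{\nu_g(v)}\nabla\mathcal L(\mathbf z_v)\,\big|\,G_n\Big]\overset{\mathbb P}{\to}\mathbb E_{(G,o)\sim\mu}\big[\nabla\mathcal L(W_t,(G,o))\big],
\]
which gives: for any $\epsilon>0$ there is $N_\epsilon$ so that for $n>N_\epsilon$ this term is within $\epsilon/2$ with probability $\ge 1-\epsilon/2$. Here the randomness of $\nu_C$ is handled by noting that $\nu_C$ is itself almost local, so its effect is already absorbed into the limit; alternatively, one conditions on $\nu_C$'s randomness and applies the almost-local convergence pointwise.

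For the minibatch term, condition on $G_n$ (and on the realized computational subgraphs). Each summand $\tfrac{1}{\nu_g(v)}\nabla\mathcal L(\mathbf z_v)=\nabla\mathcal L(\mathbf z_v)$ is bounded — $\mathcal L$ is bounded and Lipschitz with Lipschitz gradient by Assumption~\ref{assum: lipschitz}, and on a graph of bounded-degree local structure the embeddings $\mathbf z_v$ lie in a compact set — so the minibatch average is an average of i.i.d.\ (sampling with replacement) bounded random vectors with mean $\nabla\bar{\mathcal L}(W_t,G_n)$. A Hoeffding / Bernstein bound then gives $\mathbb P(|\nabla\tilde{\mathcal L}_{\nu_g}-\nabla\bar{\mathcal L}|\ge\epsilon/2\mid G_n)\le 2\exp(-c|V_B|\epsilon^2)\le\epsilon/2$ once $|V_B|\ge\mathcal Y_\epsilon := C'\epsilon^{-2}\log(1/\epsilon)$. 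The upper restriction $|V_B|=o(\sqrt{|V(G_n)|})$ is what lets us treat the minibatch as effectively sampled \emph{with replacement} (birthday-bound: collisions are negligible), so the i.i.d.\ concentration applies cleanly; the degenerate case $|V_B|=|V(G_n)|$ makes the minibatch term identically zero. Taking a union bound over the two events yields the claim with $2\epsilon$ in place of $\epsilon$, and rescaling $\epsilon$ absorbs the constant.

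The main obstacle is the graph-limit step: making rigorous that the composition defining the per-node gradient is genuinely almost local, \emph{uniformly enough in $W_t$} that the resulting $N_\epsilon$ can be chosen along the whole (random) trajectory of Algorithm~\ref{alg: training}. One needs the Lipschitz-in-$W$ control of Assumption~\ref{assum: lipschitz} to transfer a single-$W$ convergence statement to a neighborhood of $W$ values, plus a compactness/covering argument over the region of weight space the iterates can reach (which is itself bounded because the loss is bounded and the step size $\eta$ is fixed). Handling the interaction of the $\nu_C$ randomness with the local-convergence limit — i.e., verifying that "expected over $\nu_C$" commutes with "limit over $n$" — is the other delicate point, and is exactly what the almost-local hypothesis on $\nu_C$ is designed to supply.
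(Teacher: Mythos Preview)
Your proposal follows the same two-term decomposition as the paper (minibatch $\to$ full-batch on $G_n$; full-batch on $G_n$ $\to$ limit), invokes almost-local convergence for the second term and a Hoeffding bound for the first, exactly as the paper does.

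The one substantive discrepancy is your explanation of the restriction $|V_B|=o(\sqrt{|V(G_n)|})$. You attribute it to making without-replacement sampling behave like with-replacement via a birthday bound on \emph{node} collisions. But under your own setup---condition on $G_n$, draw minibatch nodes i.i.d., sample $\nu_C$ independently per node---the summands are already i.i.d.\ bounded and Hoeffding applies with no upper size restriction at all, so your stated reason does not actually account for the constraint. The paper's reason is different: it uses the bound to ensure that the $K$-hop \emph{neighborhoods} $B_K(G_n,v_i)$ of the minibatch nodes are pairwise disjoint with high probability (the relevant birthday-type estimate is $|V_B|^2\Delta^K/|V(G_n)|$, not $|V_B|^2/|V(G_n)|$, after first controlling the fraction of nodes whose $K$-ball has max degree $\le\Delta$). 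Disjoint neighborhoods are what make the per-node losses---each a function of a $K$-neighborhood, possibly of \emph{several} rooted neighborhoods in the almost-local sense (e.g.\ negative sampling)---genuinely independent; this is precisely the content of the Remark following the lemma. Your argument is not incorrect, but it renders the upper bound on $|V_B|$ vacuous and misses the structural role the paper assigns to it.
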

\begin{remark}
The Lemma requires an upper bound on the minibatch size to ensure that the embeddings and their loss are independent. This is necessary to control for the possibility of the local neighborhood of sampled nodes not being disjoint. 
\end{remark}
\begin{proof}
First, consider the case that the minibatch $V_B$ includes all nodes, i.e., we use gradient descent instead of SGD to compute the gradient of loss. Then, the loss on the finite graph converges to the limit by the assumption that the loss is almost local. In particular, for any $\epsilon'>0$ there exist $N_{\epsilon'}$  such that for $n>N_{\epsilon'}$,
\begin{equation}\label{eqn: nabla GD convg}
    \mathbb P\Big(|\nabla\tilde{\mathcal L}(W_t,G_n)- \mathbb E_{\mu}[\nabla \mathcal L (W_t,G)]|\geq\epsilon'\Big)\leq \epsilon',
\end{equation}
where the probability is over the possible randomness of $G_n$, and the weights $W_t$ (which in turn depends on the randomness of the computational graph sampler $\nu_C$). In fact, as the size of the graphs in the sequence ${G_n}_{n\in\mathbb N}$ grows, we have convergence in probability,
\[\nabla\tilde{\mathcal L}(W_t,G_n)\overset{\mathbb P}{\to}\mathbb E_{\mu}[\nabla \mathcal L (W_t,G)].\] Note that there is no expectation on the left-hand side of neither Lemma \ref{lm: nabla bound-stochastic} nor \eqref{eqn: nabla GD convg} nor the above expression, as the loss (and its gradient) are invariant to the choice of root in $G_n$.
%\todo{explain why this is expectation over all nodes on the left hand side}

Next, we consider the case where $V_B$ is an i.i.d subset of nodes. The idea is to use of the fact that the loss is almost local, and hence only depends on a bounded neighborhood of  a few nodes. So, if we draw a small enough minibatch $|V_B|= o(\sqrt{|V(G_n)|})$ then the loss of different nodes should be independent of each other, and we can use Hoeffding bound to control the error. We formalize this idea next.

Let $v_1,\ldots, v_{V_B}$ be the sampled nodes in the minibatch.  First, note that since the loss is almost local, there exists $K_\epsilon$ such that there exists  a function on $K_\epsilon$ neighborhood of nodes which is a $(1-\epsilon/4)$ approximation of the loss.
We continue by proving that the local neighborhoods of the sampled nodes in the minibatch are disjoint with high probability. For this purpose, let $I_{V_B}=\{\forall i,j\in[V_B], \quad B_K(G_n,v_i)\cap B_K(G_n,v_i)=\emptyset\}$ be the event that $K$-neighborhoods are disjoint. We want to prove that there exists $N_\epsilon$ such that for $|V(G_n)|\geq N_\epsilon$ and any $|V_B|=o\big( \sqrt{ |V(G_n)|}\big)$,
\begin{equation}\label{eq: Y-B}
    \mathbb P(I_{V_B}\text{ does not happen.})\leq \frac{\epsilon}{4}.
\end{equation}
Define ${V}_{K,\Delta}$ as the set of nodes such that the maximum degree in their $K$ neighborhood is at most $\Delta$.
Since the sequence $\{G_n\}_{n\in\mathbb N}$ converges in the local sense,  for all
$\epsilon>0$, there exists $\Delta<\infty$ and $N_\epsilon'<\infty$ such that for $n\geq N_\epsilon'$, with probability $1-\frac{\epsilon}{4}$ 
we have $\frac{|V_{k_\epsilon,\Delta}|}{n}\geq 1-\frac{\epsilon}{4}$. Let $E_\epsilon$ be the event that
$\frac{|V_{k_\epsilon,\Delta}|}{n}\geq 1-\frac{\epsilon}{4}$.   Then by a union bound
\[\mathbb P (I_{V_B} \text{ does not happen})\leq  \mathbb P(I_{V_B}\text{ does not happen}\mid E_\epsilon)+\mathbb P(E_\epsilon)\leq |V_B|^2\frac{\Delta^K}{|V(G_n)|}+\frac{\epsilon}{8}.\]
Since $\Delta$ and $K$ are independent from $|V(G_n)|$, by increasing $N_\epsilon'$, if necessary, we can assume that $|V_B|\Delta^K\leq  \frac{\epsilon}{8}|V(G_n)|$ and hence proving \eqref{eq: Y-B}.

Now, conditioned on the event $I_{V_B}$, we can apply Hoefdding bound. Let $X_i=\mathcal L(G_n,v_i)$, where $v_i\sim\mathcal{P}_n$ is drawn uniformly at random. 
Then,
\begin{align*}
    \mathbb P(|\nabla\tilde{\mathcal{L}_{V_B}}-\nabla\tilde{\mathcal{L}_{V_B}}|\geq \frac{\epsilon}{2})
    &\leq\mathbb P(|\nabla\tilde{\mathcal{L}_{V_B}}-\nabla\tilde{\mathcal{L}_{V_B}}|\geq\frac{ \epsilon}{2}\mid  I_{V_B}) +\mathbb P (I_{V_B} \text{ does not happen}) \\ 
    &\leq exp(-{\frac{|V_B|\epsilon}{C^2}})+\frac{\epsilon}{4}.
\end{align*}
So, if we choose $|V_B|$ between $o( \sqrt{|V(G_n)|})$ and $\Omega(\log(1/\epsilon))$, we will get
\[ \mathbb P(|\nabla\tilde{\mathcal{L}_{V_B}}-\nabla\tilde{\mathcal{L}_{V_B}}|\geq \frac{\epsilon}{2})\leq \frac{\epsilon}{2}.\]
This together with \eqref{eqn: nabla GD convg} gives the result.
\end{proof}

In the following lemma, we demonstrate that the loss in the limit decays in each iteration of the algorithm by applying the above lemma. Notably, we only rely on the Lipschitz property of the loss function in our proof and do not make any other use of the locality of  the loss function, as long as Lemma~\ref{lm: nabla bound-stochastic} holds.
\begin{lemma}\label{lm: decay in loss}
Given the assumption of Theorem~\ref{thm: stochastic GNN}, fix some $\epsilon>0$. Also assume the learning rate $\eta$ is smaller than $1/C$, where $C$ is the loss function's Lipschitz constant.
Then there exists some $N_\epsilon$ such that if  Algorithm~\ref{alg: training} is trained on graphs of size larger than $N_\epsilon$, then
the loss on the limiting graph decreases in  each iteration,
\begin{align*}
\mathbb P\Big(\mathbb E_\mu[\mathcal L(W_{t+1},G)]\leq\mathbb E_\mu[\mathcal L (W_t,G)]-\frac{\eta}{4}||\mathbb E_\mu[\nabla\mathcal L(W_t,G)]||^2\Big)\geq 1-\epsilon,
\end{align*}
where the probability is over the possible randomness of $G_n$, the randomness of SGD, and the computational graph sampler $\nu_C$.
\end{lemma}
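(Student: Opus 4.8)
The plan is to establish a standard "descent lemma" style bound for gradient descent on a Lipschitz-smooth objective, but applied carefully to the \emph{limit} loss $\mathbb E_\mu[\mathcal L(W,G)]$ while the update step is actually taken using the \emph{empirical} gradient $\nabla\tilde{\mathcal L}_{\nu_g}(W_t,G_n)$ computed on a finite sample. First I would write $W_{t+1}=W_t-\eta\, g_t$ where $g_t=\nabla\tilde{\mathcal L}_{\nu_g}(W_t,G_n)$, and abbreviate $\bar g_t=\mathbb E_\mu[\nabla\mathcal L(W_t,G)]$. Since $\nabla\mathcal L$ is $C$-Lipschitz in $W$ by Assumption~\ref{assum: lipschitz}, the function $W\mapsto\mathbb E_\mu[\mathcal L(W,G)]$ is $C$-smooth (the Lipschitz bound passes through the expectation), so the quadratic upper bound gives
\begin{equation*}
\mathbb E_\mu[\mathcal L(W_{t+1},G)]\leq \mathbb E_\mu[\mathcal L(W_t,G)] + \langle \bar g_t, W_{t+1}-W_t\rangle + \frac{C}{2}\|W_{t+1}-W_t\|^2.
\end{equation*}
Substituting the update and using $\eta\leq 1/C$ yields
\begin{equation*}
\mathbb E_\mu[\mathcal L(W_{t+1},G)]\leq \mathbb E_\mu[\mathcal L(W_t,G)] - \eta\langle \bar g_t, g_t\rangle + \frac{\eta}{2}\|g_t\|^2.
\end{equation*}

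The next step is to control the cross term by writing $g_t=\bar g_t+(g_t-\bar g_t)$, so that $-\eta\langle\bar g_t,g_t\rangle + \tfrac{\eta}{2}\|g_t\|^2 = -\tfrac{\eta}{2}\|\bar g_t\|^2 + \tfrac{\eta}{2}\|g_t-\bar g_t\|^2$ (a completion of squares). This reduces everything to bounding the deviation $\|g_t-\bar g_t\|$, which is precisely what Lemma~\ref{lm: nabla bound-stochastic} provides: for any target accuracy, choosing the sample size $N_\epsilon$ large enough (and the minibatch in the prescribed range), we have $\|g_t-\bar g_t\|\leq \delta$ with probability at least $1-\epsilon$, where $\delta$ is at our disposal. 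On that high-probability event,
\begin{equation*}
\mathbb E_\mu[\mathcal L(W_{t+1},G)]\leq \mathbb E_\mu[\mathcal L(W_t,G)] - \frac{\eta}{2}\|\bar g_t\|^2 + \frac{\eta}{2}\delta^2.
\end{equation*}
To reach the claimed form with the coefficient $\eta/4$, I would split into two regimes: if $\|\bar g_t\|\geq 2\delta$, then $\tfrac{\eta}{2}\delta^2\leq\tfrac{\eta}{8}\|\bar g_t\|^2$ and we get the decrease with room to spare; if $\|\bar g_t\|<2\delta$, the iterate is already in the $O(\delta)$-neighborhood of a stationary point of the limit loss, and one handles this case by noting the asserted inequality is only needed up to the stopping accuracy $\epsilon$ — i.e.\ choose $\delta$ as a sufficiently small function of $\epsilon$ so that the "bad" regime is exactly the $\epsilon$-neighborhood that Theorem~\ref{thm: stochastic GNN} targets. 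This matching of $\delta$ to $\epsilon$ is a bookkeeping step but needs to be done consistently with how $N_\epsilon$ is later chosen in the proof of the theorem.

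The main obstacle I anticipate is not the smoothness calculus, which is routine, but rather the interface with Lemma~\ref{lm: nabla bound-stochastic}: that lemma bounds the deviation at a \emph{fixed} $W_t$, whereas here $W_t$ is itself random (it depends on all previous samples and SGD/computational-graph-sampler randomness). One must argue that the bound still applies conditionally — i.e.\ condition on $W_t$, apply the lemma's guarantee uniformly (the lemma's $N_\epsilon$ does not depend on $W_t$, only on the architecture and $\epsilon$, so this is fine), and then integrate out. A second subtlety is that the probability in the lemma is taken jointly over $G_n$, the SGD randomness, and $\nu_C$, and we need the \emph{same} randomness to appear in the conclusion of this lemma; since the lemma already packages all three sources, the union bound with the event $\|g_t-\bar g_t\|\leq\delta$ is clean. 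Finally, I would remark that, as the lemma statement notes, \emph{only} the Lipschitz/smoothness property of $\mathcal L$ is used in this step; all the local-limit machinery is quarantined inside Lemma~\ref{lm: nabla bound-stochastic}, which is what makes this decomposition clean.
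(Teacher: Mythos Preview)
Your proposal is correct and follows essentially the same route as the paper: the paper derives the smoothness inequality via an interpolation function $g(s)=\mathbb E_\mu[\mathcal L(W_t-s\eta\, g_t,G)]$ and the fundamental theorem of calculus, whereas you invoke the quadratic upper bound (descent lemma) directly, but both arrive at the identical intermediate bound $-\tfrac{\eta}{2}\|\bar g_t\|^2+\tfrac{\eta}{2}\|g_t-\bar g_t\|^2$ after completing the square and then finish by appealing to Lemma~\ref{lm: nabla bound-stochastic}. Your explicit case-split on the size of $\|\bar g_t\|$ and your remark about conditioning on the random $W_t$ are in fact more careful than the paper, which leaves those bookkeeping points implicit.
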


\begin{proof}
We use a helper function to smooth out the difference  between the two steps of the loss function. 
Let (in the following the expectation is over the limit $G$),
\[g(\epsilon)=\mathbb E_\mu[\mathcal L(W_{t}-\epsilon\eta \nabla\mathcal L(W_t, G_n), G))].\]
Then $g(1)=\mathbb E[\mathcal L(W_{t+1},G)]$ and $g(0)=\mathbb E[\mathcal L(W_{t},G)]$. This definition of a helper function has been classically used in the literature to prove convergence of the loss \citep{cervino2021training,bertsekas2000gradient}.

By differentiating the helper function, $\frac{\partial g}{\partial \epsilon}=-\eta \nabla\mathcal L(W_{t},G_n) \mathbb E_\mu[\nabla\mathcal L(W_{t}-\epsilon\eta \mathcal L(W_{t},G_n), G))].$
So, we can write, 
\begin{align*}
    g(1)-g(0)&=\int_0^1\frac{\partial g}{\partial \epsilon }d\epsilon.\\
    &=-\eta \nabla\mathcal L(W_{t},G_n)\int_0^1\mathbb E_\mu[\nabla\mathcal L(W_{t}-\epsilon\eta \mathcal L(W_{t},G_n), G))]d\epsilon
\end{align*}
Then we add and subtract $\mathbb{E}_\mu[\nabla\mathcal L(W_{t},G)]$ to get
\begin{align*}
  & \mathbb E_\mu[\mathcal L(W_{t+1},G)]-\mathbb E_\mu[\mathcal L(W_{t},G)]\\
  &  =-\eta \nabla\mathcal L(W_{t},G_n)\mathbb E[\nabla\mathcal L(W_{t},G)]  +\eta\nabla\mathcal L(W_{t},G_n)\Big( \int_0^1\nabla \mathcal L(W_{t}
  -\epsilon\eta \nabla\mathcal L(W_t,G_n), G)\\&-\mathbb E_\mu[\nabla\mathcal L(W_t, G)]d\epsilon\Big)
\end{align*}
Now since $\nabla \mathcal L$ is Lipschitz (with constant $C$),
% Then by applying Lemma~\ref{lm: nabla bound},
\begin{align*}
  &  \mathbb E_\mu[\mathcal L(W_{t+1},G)]-\mathbb E_\mu[\mathcal L(W_{t},G)]\\
  &  \leq -\eta \nabla\mathcal L(W_{t},G_n)\mathbb E[\nabla\mathcal L(W_{t},G)] +\eta| \nabla\mathcal L(W_{t},G_n)|\Big( \int_0^1\epsilon\eta C||\nabla\mathcal L(W_t, G_n)||d\epsilon\Big)\\
  &=-\eta \nabla\mathcal L(W_{t},G_n)\mathbb E[\nabla\mathcal L(W_{t},G)]+\frac{\eta^2 C}{2}||\nabla\mathcal L(W_t, G_n)||^2\\
  &=\frac{\eta^2 C-\eta}{2}||\nabla\mathcal L(W_t, G_n)||^2-\frac{\eta}{2}\Big(||\mathbb E_\mu[\nabla\mathcal L(W_{t},G)]||^2-||\nabla\mathcal L(W_t, G_n)-\mathbb E_\mu[\nabla\mathcal L(W_{t},G)]||^2\Big).
\end{align*}
To finish the proof, we can bound the first term by choosing the learning rate $\eta$ smaller than $1/C$, and the second term by using Lemma~\ref{lm: decay in loss}.
\end{proof}

Now, we are ready to prove the theorem.
  For proof, we analyze the stopping time $t^*$, which is the first iteration at which the expected gradient of the loss with respect to the coefficients falls below a threshold $\epsilon$. Then we use Lemma~\ref{lm: decay in loss} to bound this stopping time.
\begin{proof}[Proof of  Theorem~\ref{thm: stochastic GNN}]
Given  $\epsilon>0$ define the stopping time 
\[t^*=\inf_t\{\mathbb{E}_{\mu,W_t}(\nabla \mathcal L(W_t,G))\leq \epsilon\},\]
where the expectation is both over the randomness of the limit $G$ and the coefficients $W_t$. Note that the randomness of $W_t$ is due to the randomness of sampling the graph $G_n$. %\red{(L: Why is the expectation also over $W_t$? Is it because of the randomness of $W_0$?).}\blue{Y: no it's because of randomness of $G_n$. }
We write loss as the sum of differences of loss in each iteration,
\begin{align*}
    \mathbb{E}_{\mu,W_t}\big( \mathcal L(W_{0},G)-\mathcal L(W_{t^*},G)\big)=
        \mathbb{E}_{\mu,W_t}\big( \sum_{t=0}^{t^*-1}\mathcal L(W_{t},G)-\mathcal L(W_{t+1},G)\big).
\end{align*}
We can take the expected value with respect to the randomness of $t^*$,
\begin{align*}
   \mathbb E_{t^*} \mathbb{E}_{\mu,W_t}\big(\mathcal L(W_{0},G)-\mathcal L(W_{t^*},G)\big)= \sum_{t^*=0}^{\infty}  \mathbb{E}_{\mu,W_t}\big(     \sum_{t=0}^{t^*-1} \mathcal L(W_{t},G)-\mathcal L(W_{t+1},G)\big)\mathbb P(t^*).
\end{align*}
    By applying Lemma~\ref{lm: decay in loss} for $t<t^*$,
    \[\mathbb P\Big( \mathbb{E}_\mu\big(\mathcal L(W_{t},G)-\mathcal L(W_{t+1},G)\big)\geq \frac{\eta}{4}\epsilon^2\Big)\geq 1-\epsilon.\]
    By applying this to the previous inequalities:
    \begin{align*}
   \mathbb E_{t^*} \mathbb{E}_{\mu,W_t}\big( \mathcal L(W_{0},G)-\mathcal L(W_{t^*},G)\big)\geq
     \frac{\eta\epsilon^2(1-\epsilon)}{4} \sum_{t^*=0}^{\infty}  t^*\mathbb P(t^*)=     \frac{\eta\epsilon^2(1-\epsilon)}{4}\mathbb E[t^*].
\end{align*}
Since loss is non-negative
\[ \frac{4}{\eta\epsilon^2(1-\epsilon)}\mathbb E_{t^*} \mathbb{E}_{\mu}\big( \mathcal L(W_{0},G)\big)\geq  \mathbb E[t^*] .\]
\end{proof}

% As an application of the proof of above lemma, we state the gradient convergence for the case that $\nu_g$ is proportional to the normalized adjacency matrix.

% \begin{corollary}
% Given the assumptions of Theorem~\ref{thm: GNN} and local finite samplers $\nu_C$ (Assumptions~\ref{assum: local} and \ref{assum: finite sampler}), further assume the node sampling is based on normalized adjacency matrix. Then if the second moment of (weighted) degrees is uniformly integrable, the result of Lemma~\ref{lm: nabla bound-stochastic} hold.
% \end{corollary}

% \begin{theorem}\label{thm: stochastic}
% Let ${G_n}_{n\in\mathbb N}$ be a sequence of convergent graphs (Assumption~\ref{assum: GL}). Then, there exists a small enough learning rate $\eta$ such that for any $\epsilon>0$, training sampling based GNN as in Algorithm~\ref{alg: training} with a Lipschitz loss function (Assumption~\ref{assum: lipschitz}) on the sequence of convergent graphs with learning rate $\eta$ converges to the $\epsilon$-neighborhood of learning the sampling-based GNN on the limit (with the same gradient and computational graph sampling functions) in $\tilde O(\frac{1}{\epsilon^2})$ expected steps.
% \end{theorem}

%%%%%%%%%%%%%%%%%%%%%%%%%%%%%%%%%%%%%%%%%%%%%%%%%%%%%%%%%%%%%%%%%%%%%%%%%%%%%%%
%%%%%%%%%%%%%%%%%%%%%%%%%%%%%%%%%%%%%%%%%%%%%%%%%%%%%%%%%%%%%%%%%%%%%%%%%%%%%%%

%%% Applications %%%

\section{Applications}\label{sec:app}
Our main result applies to various GNN architectures, including GCN, GraphSAGE, and GIN with mean readout; and various sampling mechanisms, such as neighborhood sampling (GraphSAGE), FastGCN, and shaDoW-GNN. 
% We first state the corollary of our result for models on estimating node-embedding that do not use importance sampling in gradient estimation and then state the result for GraphSAINT.  
% Given our result, a practitioner can compare any of these frameworks by training them on small samples from the large input graph, and then compare the obtained loss.
In each of the following sections, we will discuss how each model fits into our unified framework (Algorithm~\ref{alg: training}), and explain why they satisfy the assumptions of the main theorem.

\noindent\textbf{GCN.}
The GCN architecture \citep{kipf17-classifgcnn}, as described in \eqref{eqn:gcn}, applies the convolutional layer on the entire computational graph of a node. So, for a $L$-layer GCN and a given node $v$, the computational graph sampler $\mu_C(v)$ returns the entire $L$-neighborhood of $v$. In addition, it is common to use SGD to compute the gradient in GCNs. Both the gradient and computational graph samplers satisfy the locality assumption \ref{assum: local}, and hence, our main theorem applies.

\noindent\textbf{GIN.}
In the Graph Isomorphism Network (GIN)
\citep{xu2018GIN}, the \texttt{AGGREGATE} and \texttt{COMBINE} operations consist of a multi-layer perceptron applied to the sum of each node's embeddings with their neighbors' embeddings. In its standard form, GIN thus takes in the entire computational graph of a node. Since the \texttt{AGGREGATE} and \texttt{COMBINE} operations are local, in a node-level task GIN satisfies all of our assumptions, hence our results hold. In a graph-level task, our results hold provided that the readout is a mean aggregation, which is permutation invariant and, unlike the sum, does not increase with the graph size.

% \begin{corollary}\label{cor: GNN}
% Let $\{G_n\}_{n\in\mathbb N}$ be a sequence of (possibly random) convergent graphs (Assumption~\ref{assum: GL}). Then, there exists a small enough learning rate $\eta$ such that for any $\epsilon>0$, training the full GCN as in Algorithm~\ref{alg: training} with $\nu_g$ and $\nu_c$ sampling the entire space, a constant number of layers, and Lipschitz loss function (Assumptions~\ref{assum: lipschitz} and ~\ref{assum: layers}) on the sequence of convergent graphs with learning rate $\eta$ converges to the $\epsilon$-neighborhood of learning GCN on the limit in $\smash{\tilde O\big(\frac{1}{\epsilon^2}\big)}$ expected steps.
% \end{corollary}

\noindent\textbf{GraphSAGE with neighborhood sampling.}
GraphSAGE \citep{hamilton2017inductive} is a popular GNN architecture that generates node embeddings by concatenating information from each node's local neighborhood. They also propose to train GraphSAGE with a computational graph sampling technique called neighborhood sampling, where they sample a fixed number of neighbors for each node. The computational graph sampler $\nu_C$ assigns probabilities proportional to $\smash{1/{deg(v)\choose K_\ell }}$ to all sets of size $K_\ell$ from the neighbors of node $v$ at layer $\ell$, where $K_\ell$ is the number of nodes to sample at layer $\ell$ and $deg(v)$ is the degree of node $v$. They also use SGD for gradient sampler $\nu_g$.

% Our framework applies to GraphSage to bound the number of samples it needs.
Another novelty of the GraphSAGE approach was to suggest unsupervised learning based on computing loss with negative sampling, which is almost local per Proposition~\ref{prop: almost local fnc}. Therefore, our result applies to both their semi-supervised and unsupervised training.

% \begin{corollary}
% Fix some $\epsilon>0$. Then, there exists some $N_\epsilon$ such that training a GraphSAGE with loss function satisfying the assumptions in Theorem~\ref{thm: stochastic GNN} on the convergent sequence $\{G_n\}_{n\in\mathbb N}$ with $|V(G_n)|\geq N\epsilon$ converges to the $\epsilon$-neighborhood of learning GraphSAGE on the limit in $\smash{O\big(\frac{1}{\eta\epsilon^2}\big)}$ expected number of steps.
% \end{corollary}

\noindent\textbf{FastGCN.}
Fast GCN \citep{chen2018fastgcn} relies on layerwise sampling to address  scalability issues GNNs. %A pioneering method in this regard is FastGCN \cite{chen2018fastgcn}, which tackles the neighborhood expansion problem by introducing a layer-specific importance sampling scheme. 
The computational graph sampler in FastGCN subsamples nodes from each layer based on the normalized adjacency matrix, as expressed by the equation
\[q(u;v)=\frac{\|\mathbf A'(u,v)\|^2}{\|\sum_{u'\in N(v)}\mathbf A'(u',v)\|^2},\]
which by Proposition~\ref{prop: almost local fnc} is almost local.
So, for a node $v$ already sampled in the computational graph, $\nu_C$ samples a fixed number of nodes $k_\ell$ in the next layer w.r.t $q(u;v)$.
% This method has a smaller variance than GraphSAGE, and \cite{chen2018fastgcn} showed that when the number of samples goes to infinity then SGD converges.
This architecture also uses SGD as the gradient sampler. %Therefore, this framework satisfies the assumption of our main result.

% \begin{corollary}
% Fix some $\epsilon>0$. Then there exists some $N_\epsilon$ such that training a FastGCN with loss function satisfying the assumptions in Theorem~\ref{thm: stochastic GNN} on the convergent sequence $\{G_n\}_{n\in\mathbb N}$ with $|V(G_n)|\geq N\epsilon$ converges to the $\epsilon$-neighborhood of learning a FastGCN on the limit in $\smash{O\big(\frac{1}{\eta\epsilon^2}\big)}$ expected number of steps.
% \end{corollary}

\noindent\textbf{shaDoW-GNN.}
shaDoW-GNN \citep{zeng2021decoupling} is a method that tackles the scalability issue of GNNs by subsampling a subgraph for each node in a minibatch. Specifically, for a $L'$-layer GNN,  shaDoW-GNN selects a subgraph with nodes that are at most $L$ hops away from each node in the minibatch, where $L \leq L'$. They decouple nodes that appear in more than one sampled subgraph by keeping two copies of them. Their framework allows either to  keep the whole $L$-neighborhood or to sample from it, similar to GraphSAGE or FastGCN. So, the computational graph sampler $\nu_C$ is similar to one of the previous frameworks, with the only difference being that it creates new copies in memory for each node sampled multiple times.
% Once the subgraph is selected, they run full GCN on each subgraph. 

\noindent\textbf{GNNAutoScale.} A recent addition to the family of GNNs is GNNAutoScale\citep{fey2021gnnautoscale}, which integrates well with our framework. It leverages historical embeddings of out-of-sample neighbors during training, merging minibatch sampling and historical embeddings.  This method aligns well with our framework for the following reason: Each iteration of embedding calculation is convergent, as shown by Lemma~\ref{lm: nabla bound-stochastic}. So, one can use historical embeddings as `features’ for subsequent iterations during information aggregation, so it satisfies the Assumption~\ref{assum: local}. Therefore, we can view GNNAutoScale's approach through the lens of our framework.

\noindent\textbf{GraphFM.} This framework by \citep{yu2022graphfm}, although bearing similarities with GNNAutoScale, distinguishes itself by including historical embeddings of nodes within the one-hop boundary of selected minibatches. This novel incremental update strategy remains consistent with the principles of our framework, indicating a promising compatibility.

\noindent\textbf{LMC.} Another model, LMC\citep{shi2022lmc}, while echoing GNNAuto-scale in many respects, shows unique differences in aggregation during its forward/backward propagation. Its localized aggregation strategies remain consistent with our proposed framework, satisfying with the assumptions of our result.

\noindent\textbf{IBMB.} This method by \citep{gasteiger2022IBMB} employs a distinctive approach by computing influence scores for nodes and subsequently optimizing the selection of influential nodes for computation. Despite its broad methodology of calculating node influence, the implementation specifics, especially using pagerank computation and localized node selection, align well with our framework (given that pagerank is a local function as shown in \citep{garavaglia2020local}).

\section{Experiment Details} \label{appendix: experiment_details}

\noindent \textbf{Citation networks.}~Cora, CiteSeer, PubMed and ogbn-mag are citation networks commonly used to benchmark GNNs. Their nodes represent papers, and their edges are citations between papers (in either direction). Each paper is associated with a bag-of-words vector, and the task is to use both the citation graph and the node information to classify the papers into $C$ classes. Relevant dataset statistics are presented in Table \ref{tab:stats}.

\noindent \textbf{Training details.}~In all experiments, we use PyTorch Geometric \citep{fey2019fast} and consider a GraphSAGE \citep{hamilton2017inductive} and a GCN \citep{kipf17-classifgcnn} architectures with $2$ layers and embedding dimensions $64$ and $32$ respectively ($64$ and $64$ for ogbn-mag). In the first and the second layers, the nonlinearity is a ReLU, and in the readout layer, a softmax. We minimize the negative log-likelihood (NLL) loss and report the classification accuracy. We consider the train-validation-test splits from the full Planetoid distribution from PyTorch Geometric for PubMed, Cora, and CiteSeer; and from Open Graph Benchmark for ogbn-mag. To train the models, we use ADAM with the standard forgetting factors \citep{kingma17-adam}.

\section{Additional Experiments on Cora and CiteSeer} \label{appendix: saint}

In this section, we repeat the experiments of Section \ref{sec: experiment} for the Cora and CiteSeer datasets, whose corresponding dataset statistics can be found in Table \ref{tab:stats}. The results for node sampling, computational graph sampling, and no sampling are described below.

\begin{figure*}[t]
\centering
\includegraphics[width=0.32\linewidth]{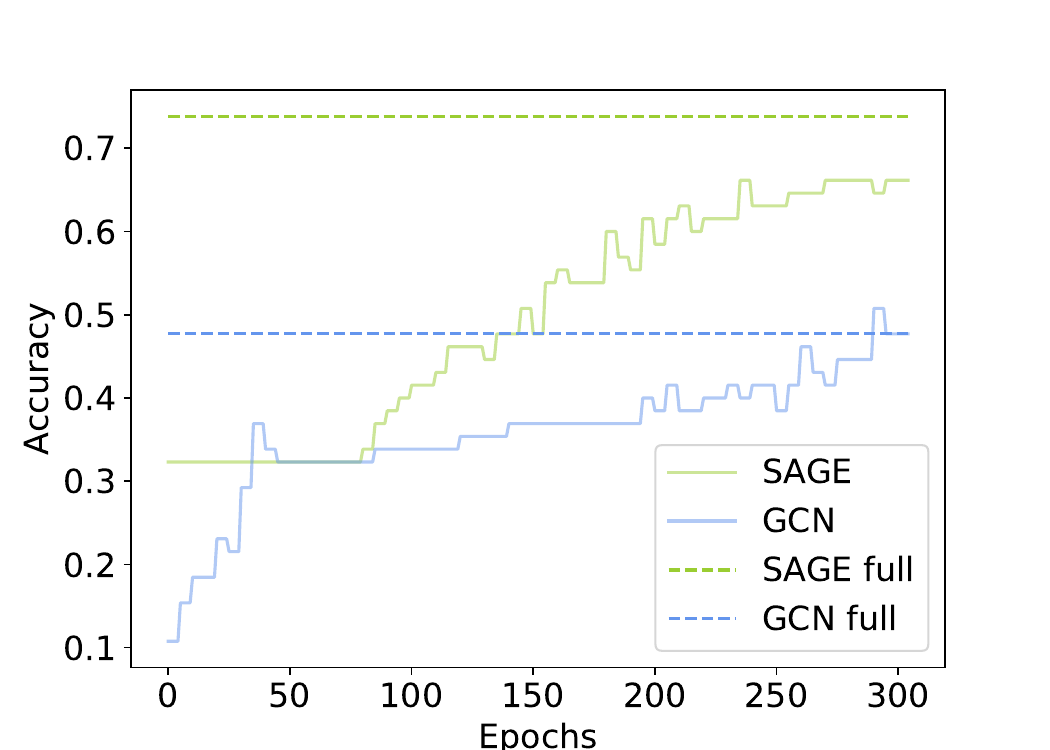}
\includegraphics[width=0.32\linewidth]{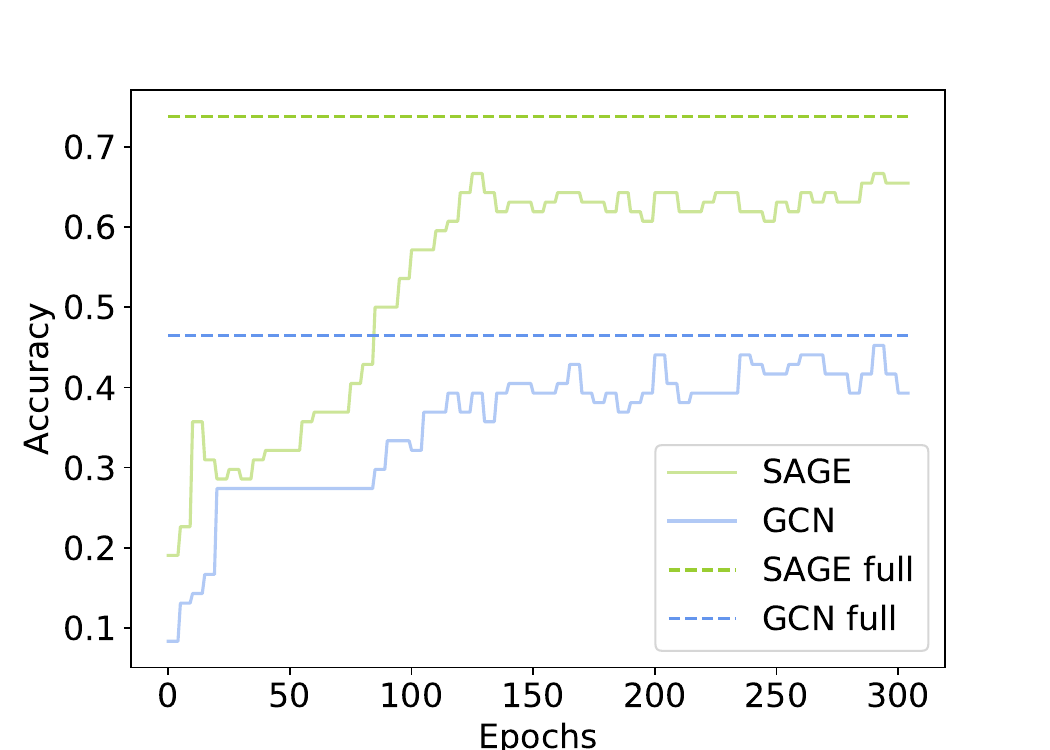}
\includegraphics[width=0.32\linewidth]{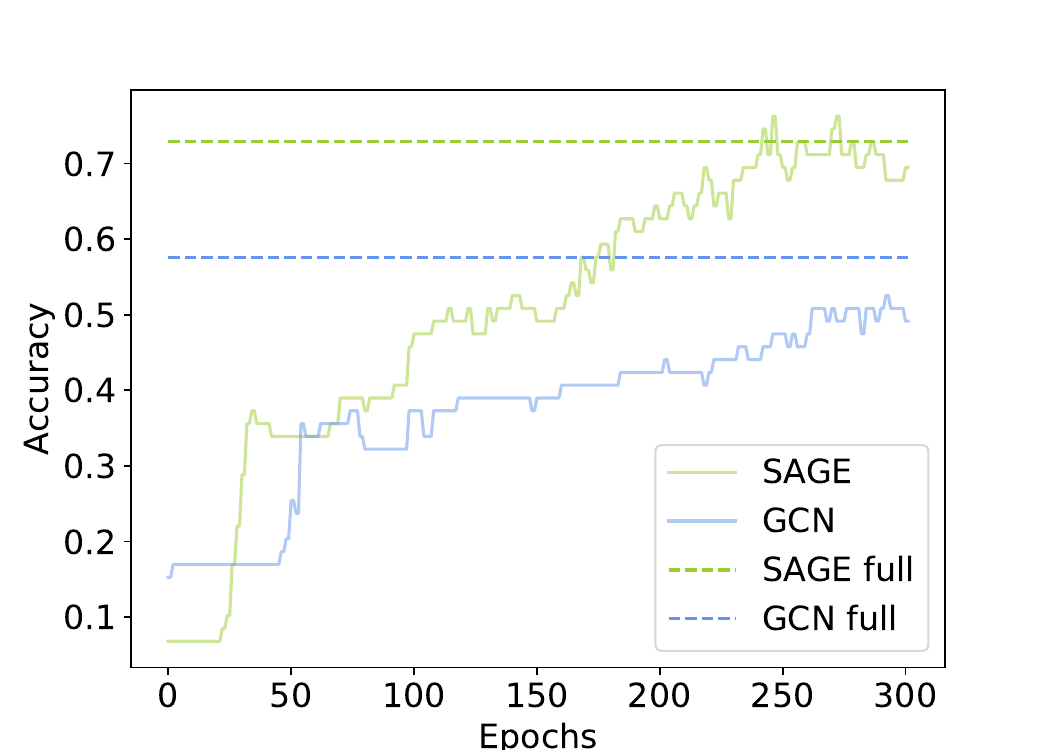}
\includegraphics[width=0.32\linewidth]{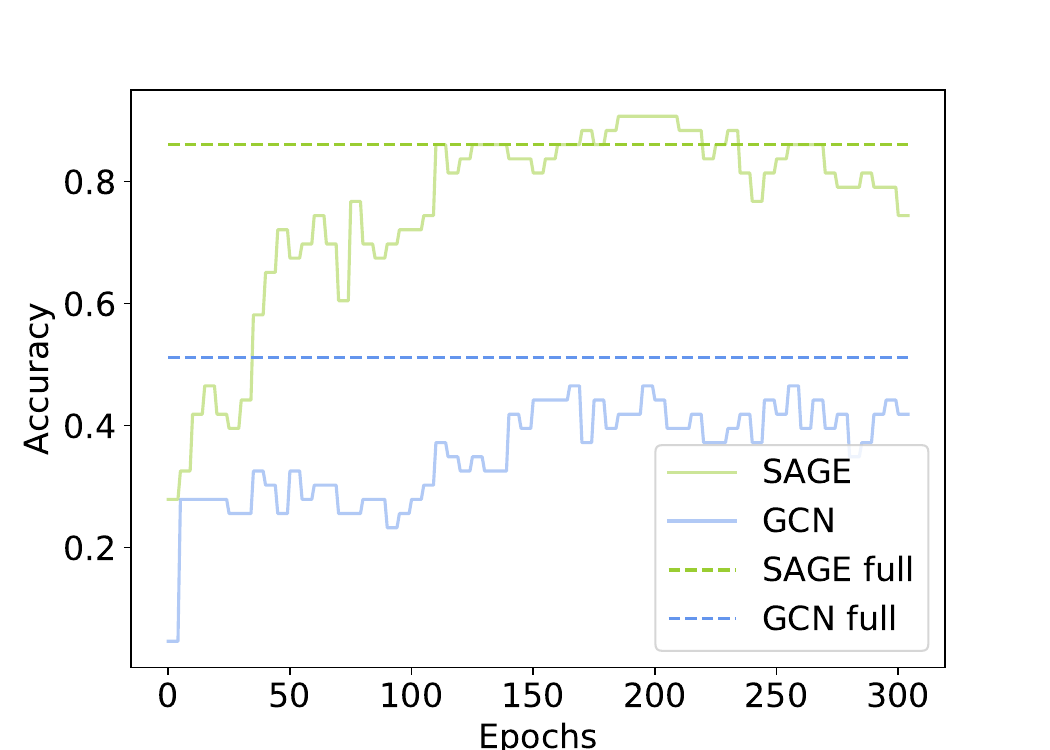}
\includegraphics[width=0.32\linewidth]{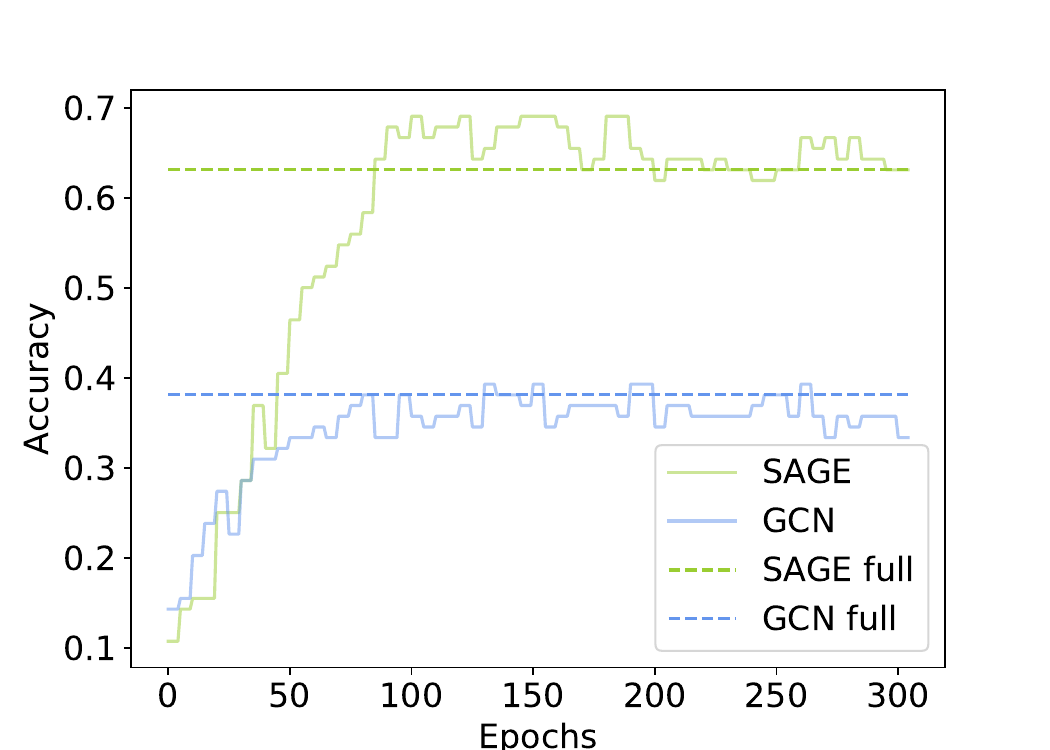}
\includegraphics[width=0.32\linewidth]{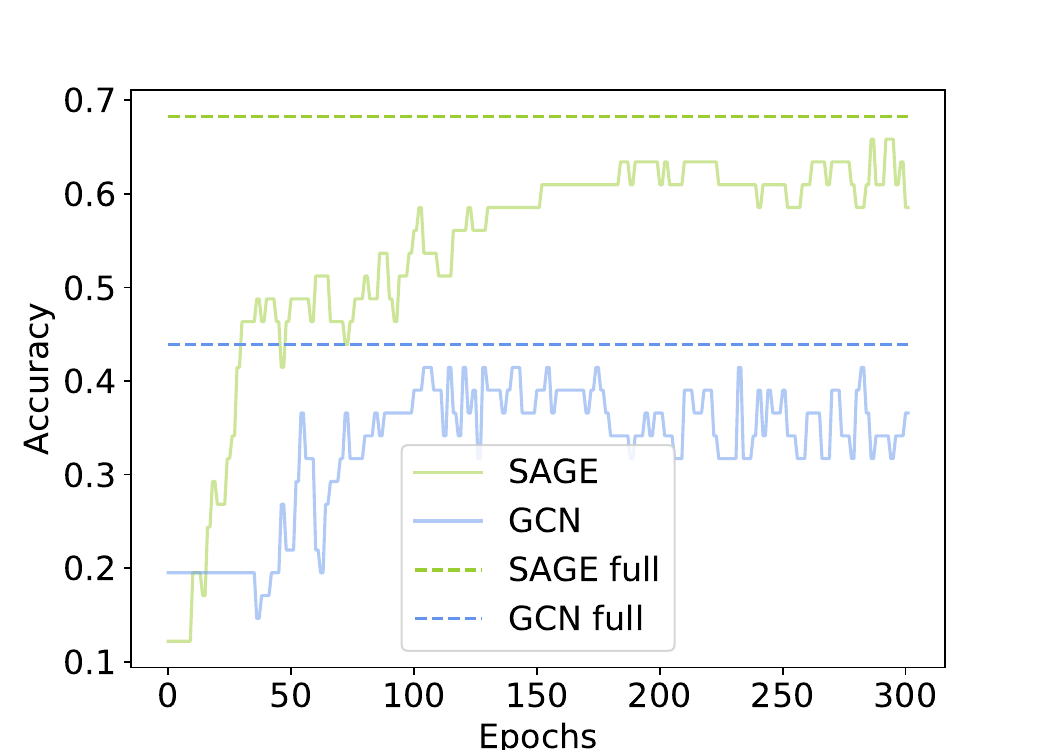}
\caption{Node sampling results for Cora (top) and CiteSeer (bottom) with batch size $32$. We consider three scenarios in terms of the small graph size $n$ and the graph sampling interval $\gamma$: $n=300$, $\gamma=5$ epochs (left); $n=500$, $\gamma=5$ epochs (center); $n=300$, $\gamma=2$ epochs (right). Note that these graphs have size equal to approximately $10$-$16\%$ of the original graph size.}
\label{fig:node_sampling_cora_cite}
\end{figure*}

\noindent \textbf{Node sampling.}~In Figure \ref{fig:node_sampling_cora_cite}, we consider three scenarios in terms of the small graph size $n$ and the graph sampling interval $\gamma$: $n=300$, $\gamma=5$ epochs (left); $n=500$, $\gamma=5$ epochs (center); $n=300$, $\gamma=3$ epochs (right). Note that these graphs have size equal to approximately $10-16\%$ of the original graph size. The GNNs are trained for $300$ epochs with a learning rate $1e{-4}$ and batch size $32$. In Figure \ref{fig:node_sampling_cora_cite}, we observe that resampling graphs every $\gamma=5$ epochs, at either $n=300$ (left) or $n=500$ (center), is not enough to train both models (but especially GraphSAGE) to correctly classify nodes on the full Cora graph under $300$ epochs. However, decreasing the resampling interval to $\gamma=2$ (right) helps. In the case of CiteSeer, $n=300$ and $\gamma=5$ (left) are enough to match the accuracy of the model trained on the full graph, but the models learn faster when $n=500$ (center). Increasing the sampling rate (right) increases the variability in accuracy and worsens performance.

\noindent \textbf{Computational graph sampling.}~In this experiment, we fix the neighborhood sizes for computational graph sampling in both layers at $K_1=K_2=32$, and also consider node sampling with batch size $32$.
The combinations of graph size $n$ and sampling interval $\gamma$, and the number of epochs and the learning rate are the same as in the node sampling experiment. The results are reported in Figure \ref{fig:comp_sampling_cora_cite}. For Cora, we observe the best results for $n=500$ and $\gamma=5$ (center), and for CiteSeer, for $n=300$ and $\gamma=5$ (left). On both datasets, increasing the sampling rate (right) increases the variability in performance, which is undesirable. In the case of CiteSeer, increasing the graph size (center) leads to some overfitting for GraphSAGE.

\begin{figure*}[t] 
\centering
\includegraphics[width=0.32\linewidth]{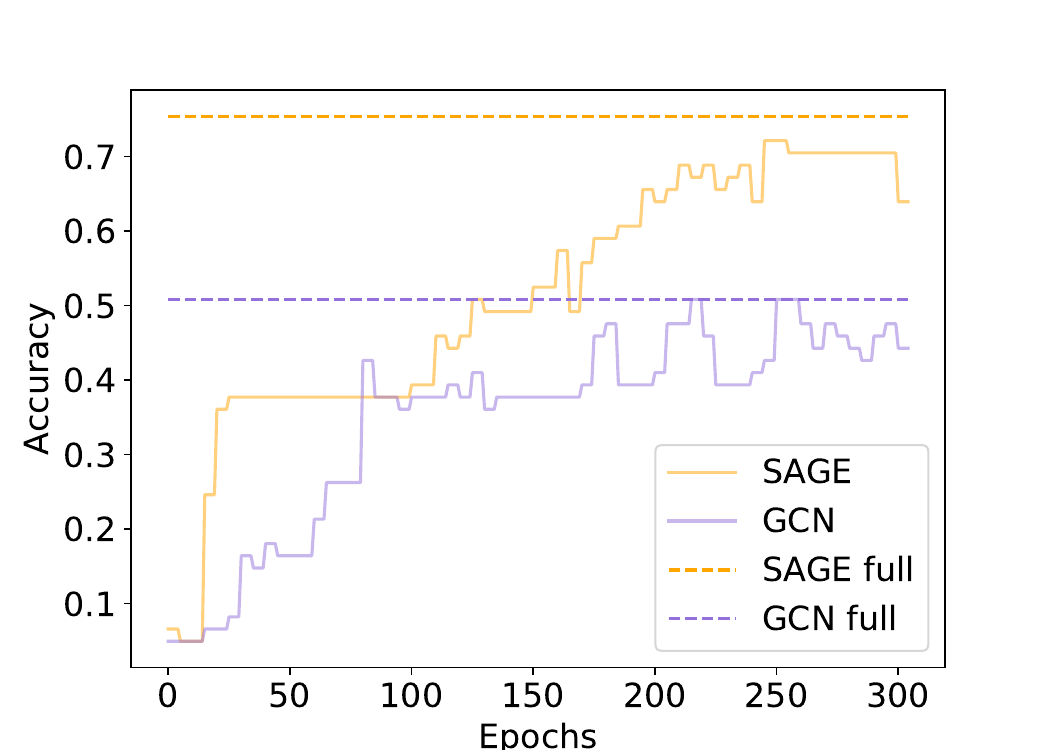}
\includegraphics[width=0.32\linewidth]{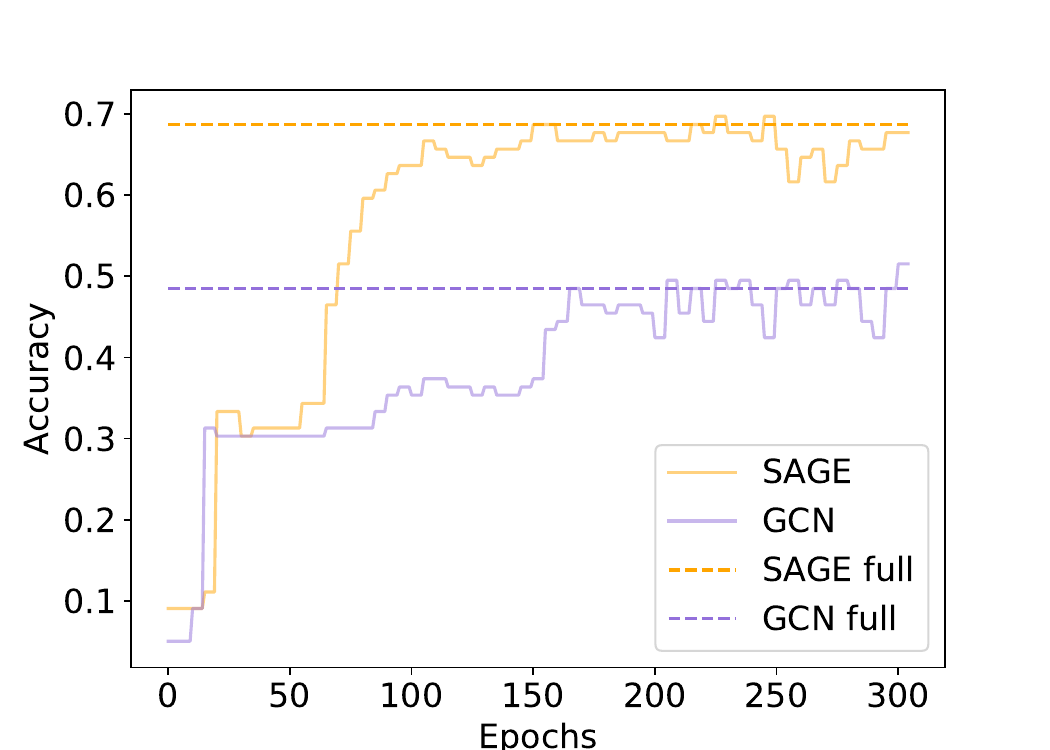}
\includegraphics[width=0.32\linewidth]{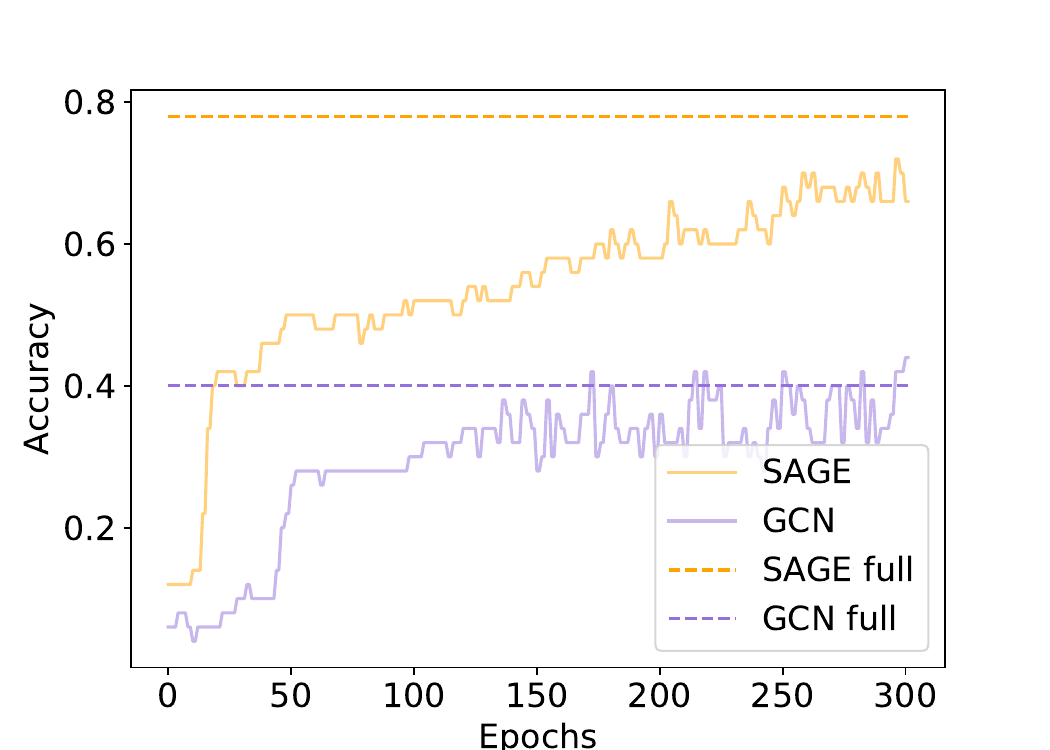}
\includegraphics[width=0.32\linewidth]{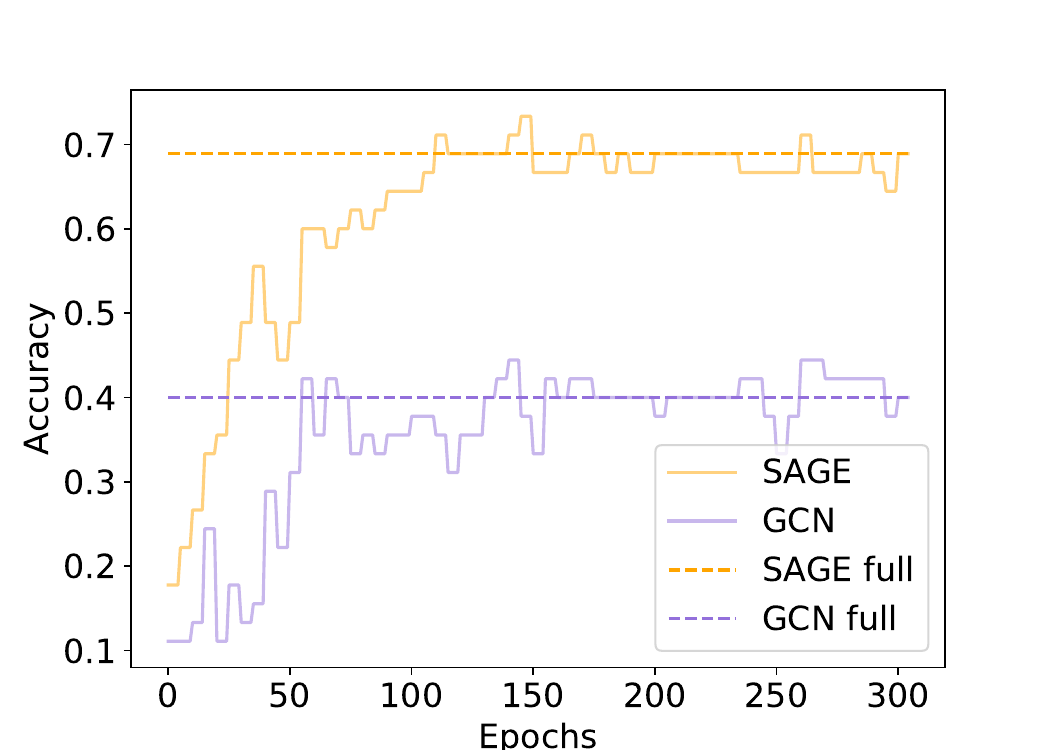}
\includegraphics[width=0.32\linewidth]{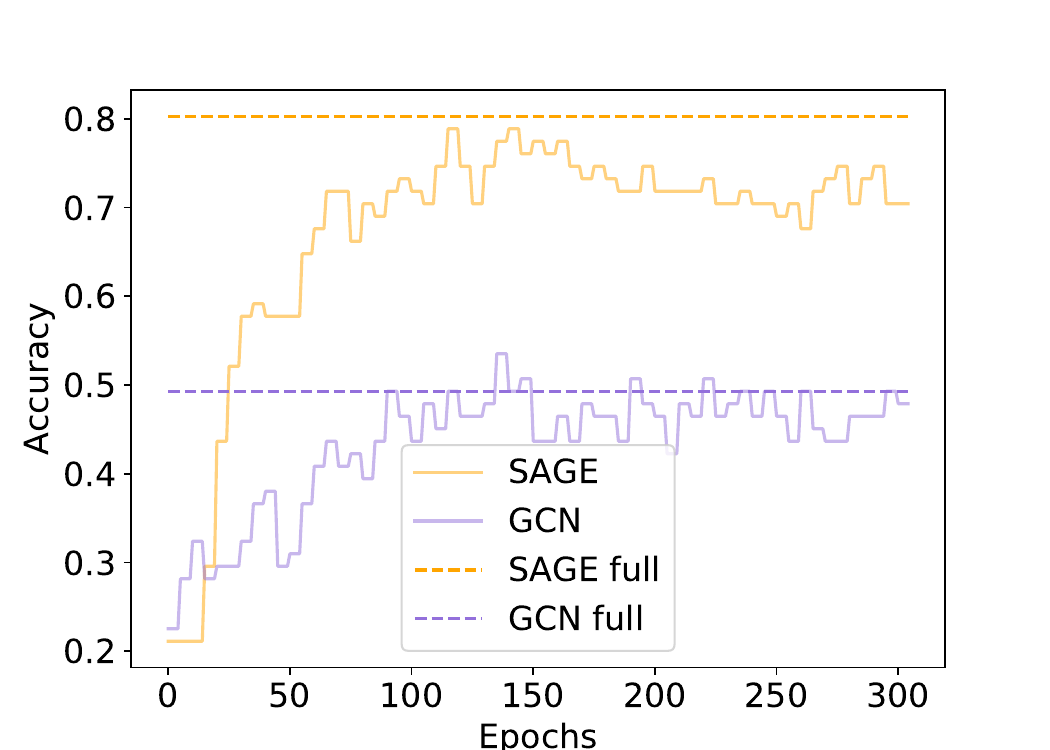}
\includegraphics[width=0.32\linewidth]{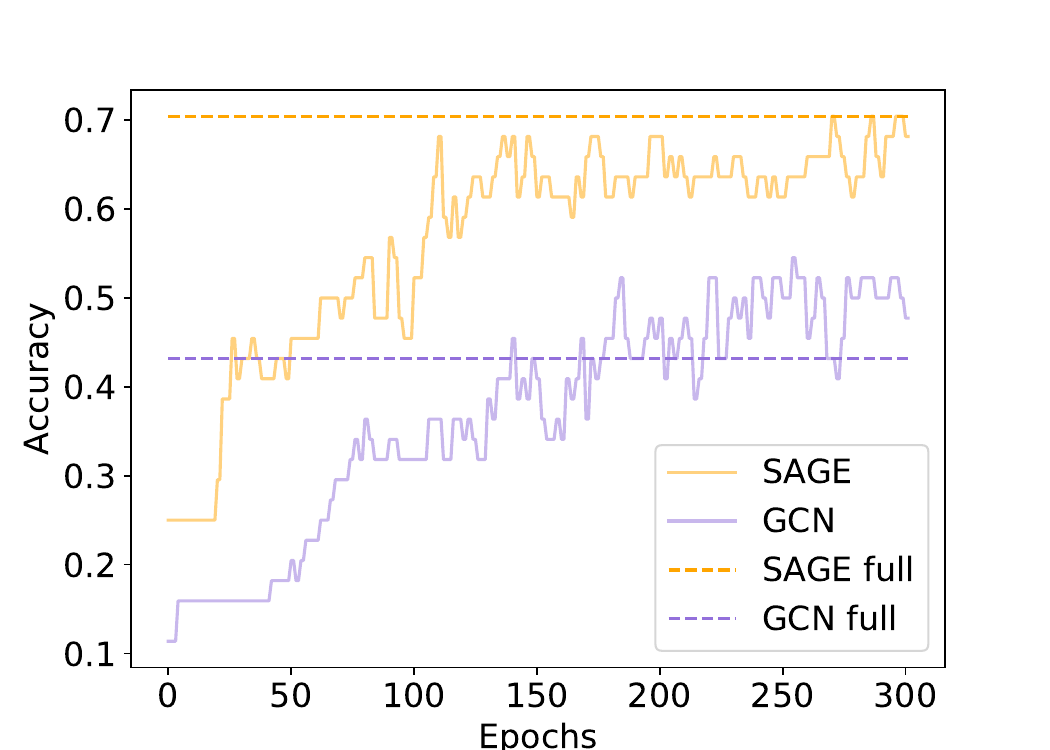}
\caption{Computational graph sampling results for Cora (top) and CiteSeer (bottom) with $K_1=K_2=32$ and batch size $32$. We consider three scenarios in terms of the small graph size $n$ and the graph sampling interval $\gamma$: $n=300$, $\gamma=5$ epochs (left); $n=500$, $\gamma=5$ epochs (center); $n=300$, $\gamma=2$ epochs (right). Note that these graphs have size equal to approximately $10$-$16\%$ of the original graph size.}
\label{fig:comp_sampling_cora_cite}
\end{figure*}

\begin{table}[t]
\caption{Dataset statistics.}
\centering
\begin{tabular}{lcccc}
\hline
         & Nodes ($N$) & Edges & Features & Classes ($C$)  \\
\hline
Cora     & 2708  & 10556 & 1433     & 7       \\
CiteSeer & 3327  & 9104  & 3703     & 6       \\
PubMed   & 19717 & 88648 & 500      & 3       \\
ogbn-mag & 1,939,743 & 21,111,007	& 128 &  349 \\
\hline
\end{tabular}
\label{tab:stats}
\end{table}

\section{Experiments without Node and Computational Graph Sampling} \label{appendix: no_sampling}

\begin{figure*}[t]
\centering
\includegraphics[width=0.32\linewidth]{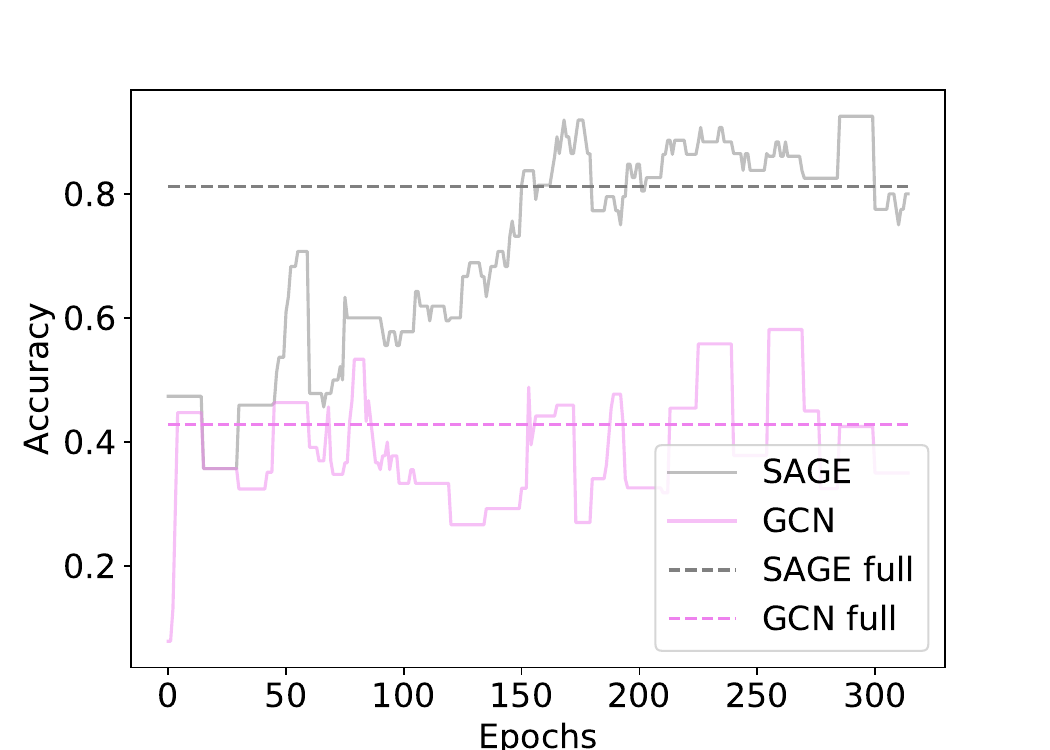}
\includegraphics[width=0.32\linewidth]{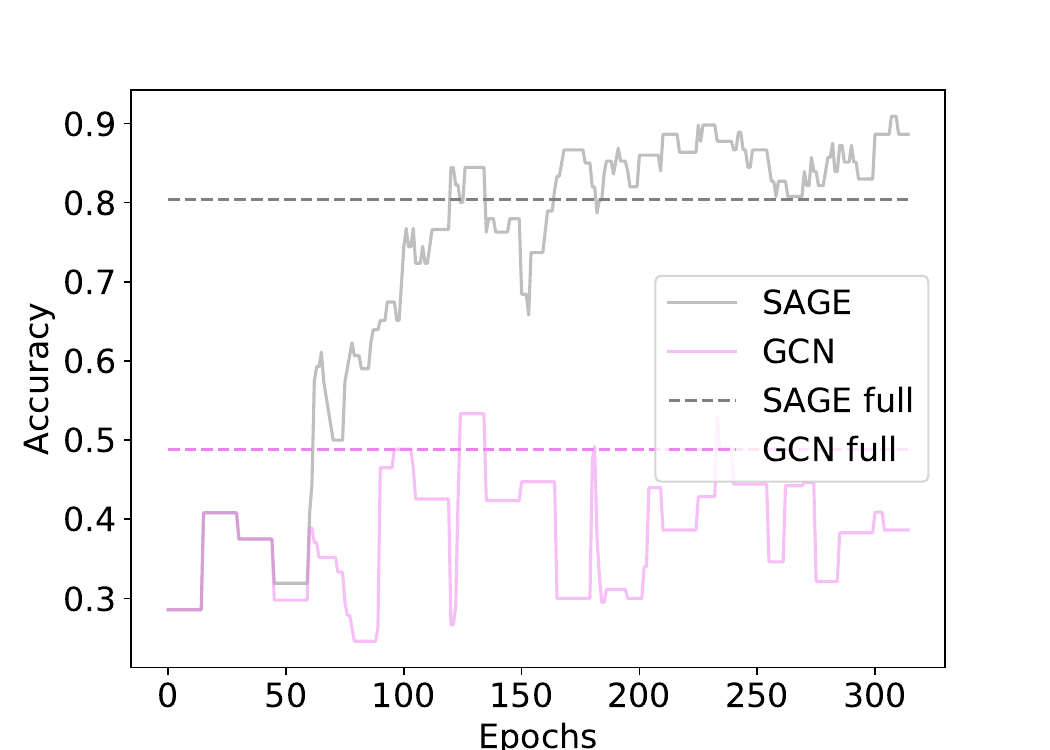}
\includegraphics[width=0.32\linewidth]{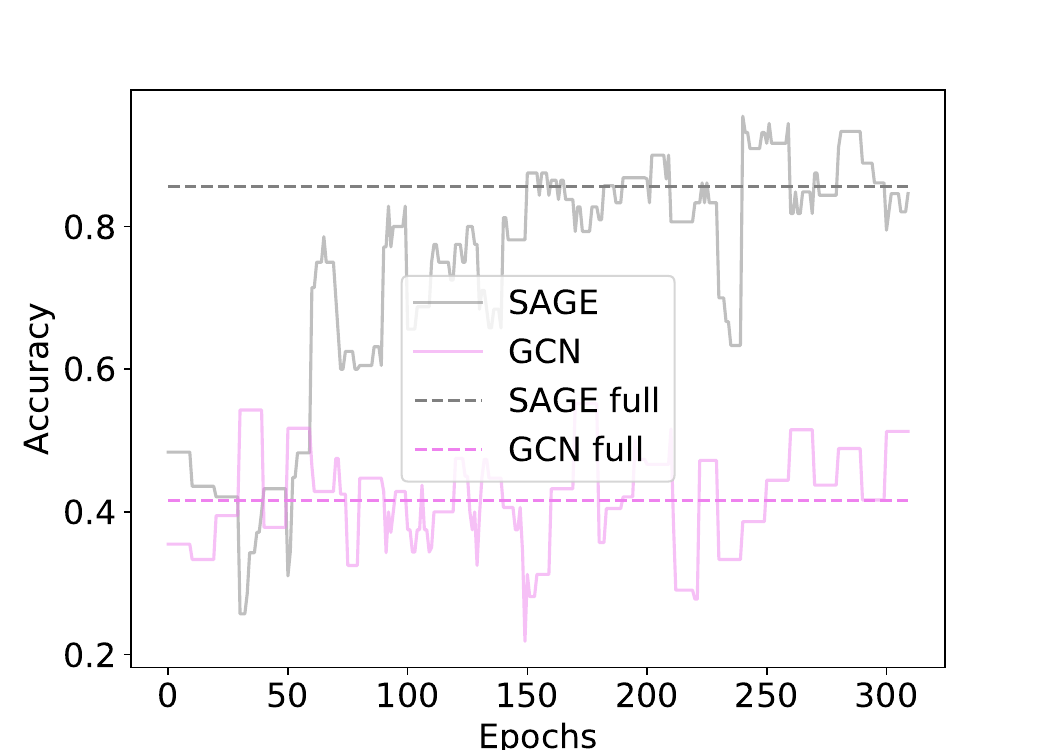}
\caption{Results for PubMed without any sampling other than the graph sequence sampling. We consider three scenarios in terms of the small graph size $n$ and the graph sampling interval $\gamma$: $n=1500$, $\gamma=15$ epochs (left); $n=2000$, $\gamma=15$ epochs (center); $n=1500$, $\gamma=10$ epochs (right). Note that these graphs have size equal to approximately $10\%$ of the original graph size.}
\label{fig:no_sampling}
\end{figure*}

Here, we consider GNNs without any form of sampling other than the $n$-node random graph sequences sampled from the target graph.
%For Cora and CiteSeer, $n=800$ and the GNNs are trained for $150$ epochs with a learning rate $1e{-3}$. For PubMed, $n=3000$ and
The combinations of graph size $n$ and sampling interval $\gamma$ are the same as in the two previous experiments, and the GNNs are trained for $300$ epochs with a learning rate $1e{-3}$. The results are reported in Figure \ref{fig:no_sampling} for PubMed and in Figure \ref{fig:no_sampling_cora_cite} for Cora and CiteSeer. The GraphSAGE models trained on the random graph sequences generally achieve better performance on the random graph sequences than on the target graph, with slight accuracy improvement when $n$ is increased and higher variability in accuracy when $\gamma$ is decreased. The GCN performance is subpar in both cases and for all combinations of $n$ and $\gamma$. We observe more variability in accuracy than in Figures \ref{fig:node_sampling} and \ref{fig:comp_sampling}, which is expected since in the absence of node sampling, the gradient updates are calculated at all nodes; and in the absence of computational graph sampling, the effective graph neighborhoods are less regular. It is also interesting to note that gradient and computational graph sampling provide good inductive bias in this experiment, as the test accuracy achieved by the GNNs with node and computational graph sampling, in Figures \ref{fig:node_sampling},\ref{fig:node_sampling_cora_cite} and \ref{fig:comp_sampling},\ref{fig:comp_sampling_cora_cite}, are higher than those achieved by the GNNs without sampling in Figures \ref{fig:no_sampling}--\ref{fig:no_sampling_cora_cite}.

\begin{figure*}[t]
\centering
\includegraphics[width=0.32\linewidth]{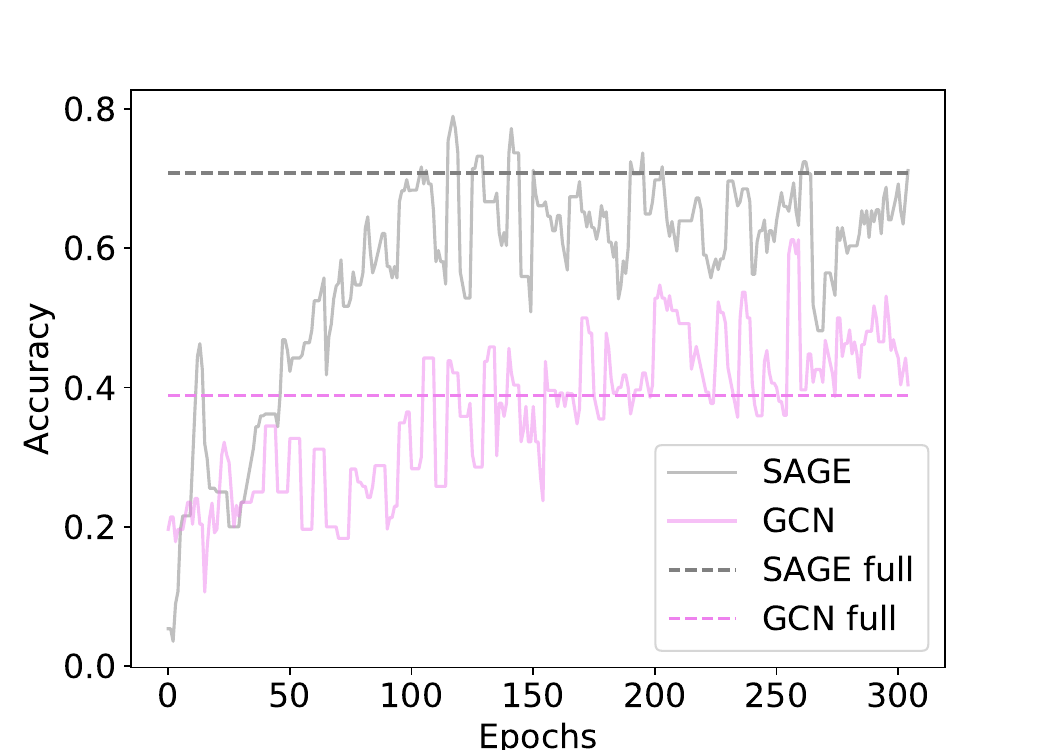}
\includegraphics[width=0.32\linewidth]{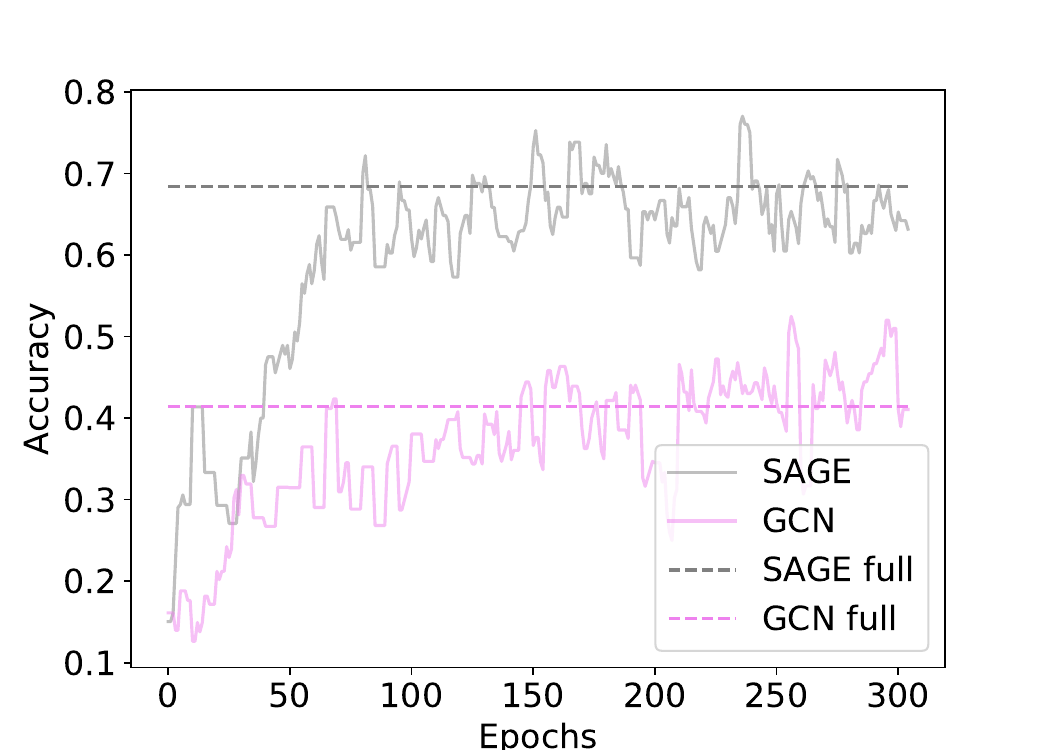}
\includegraphics[width=0.32\linewidth]{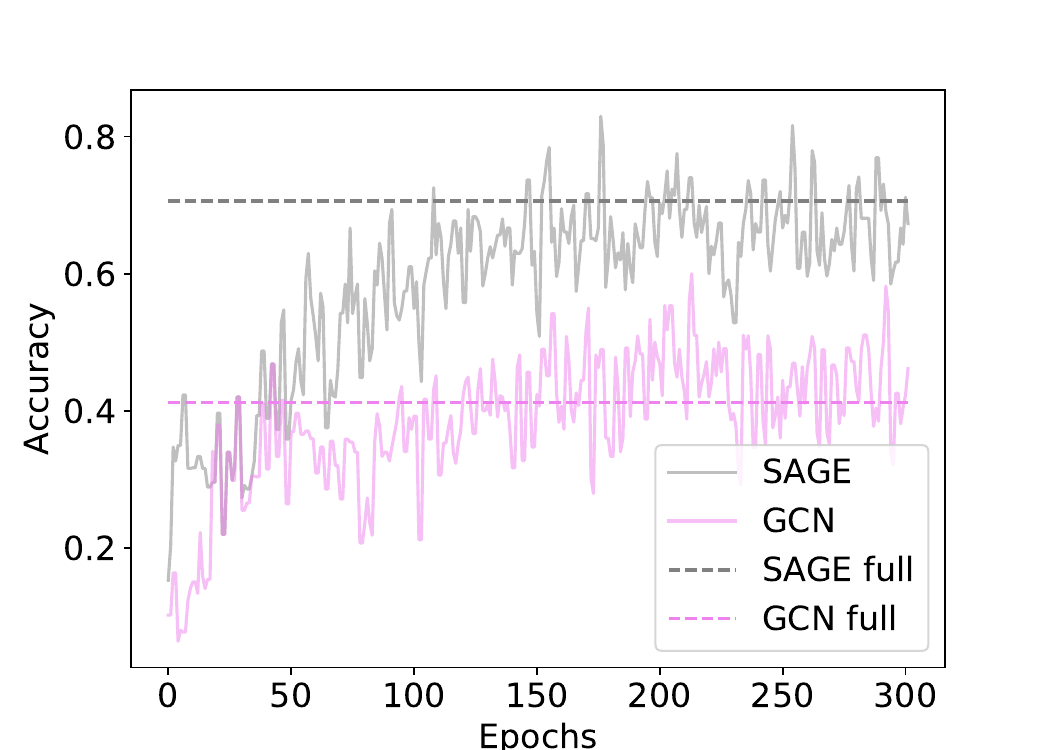}
\includegraphics[width=0.32\linewidth]{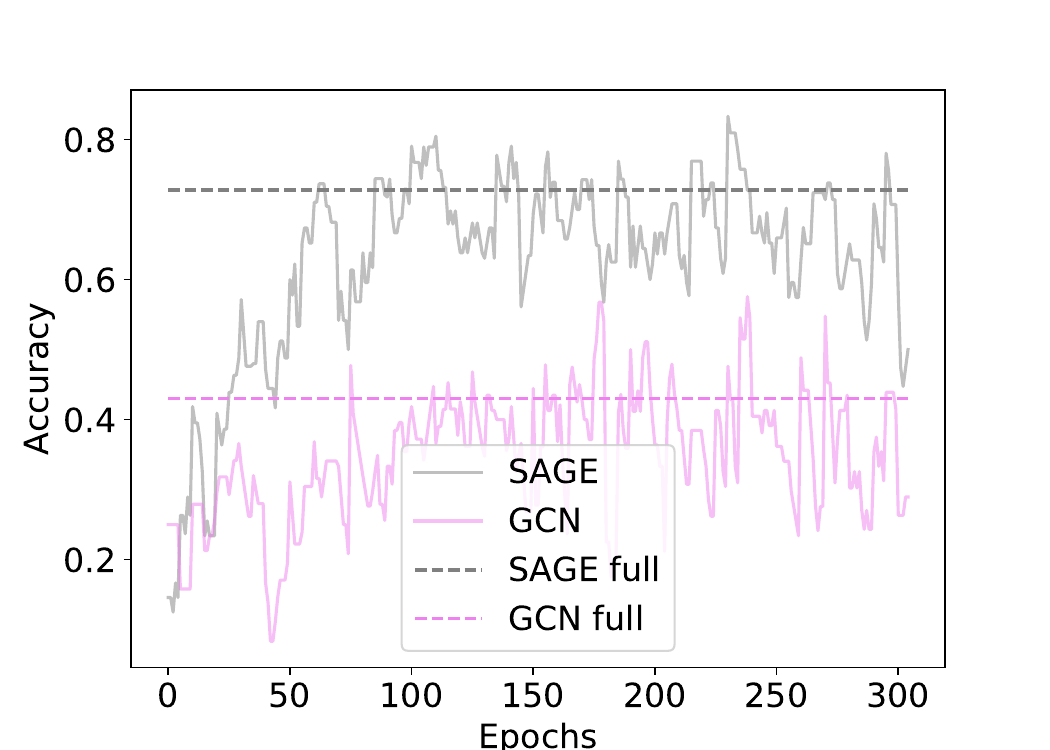}
\includegraphics[width=0.32\linewidth]{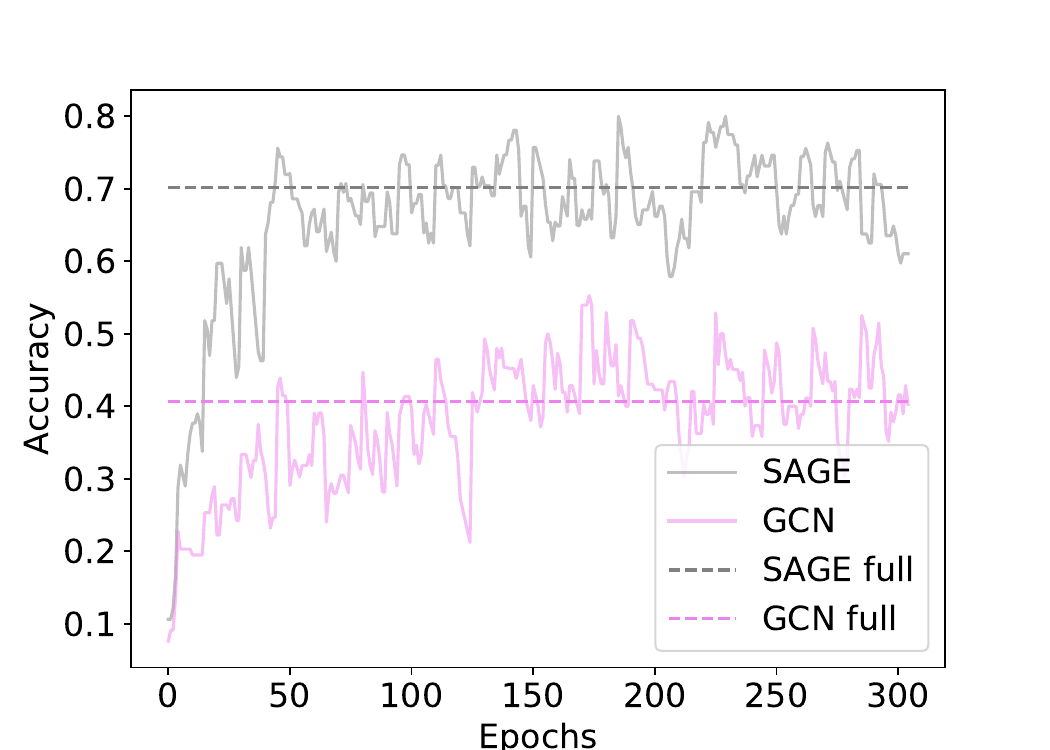}
\includegraphics[width=0.32\linewidth]{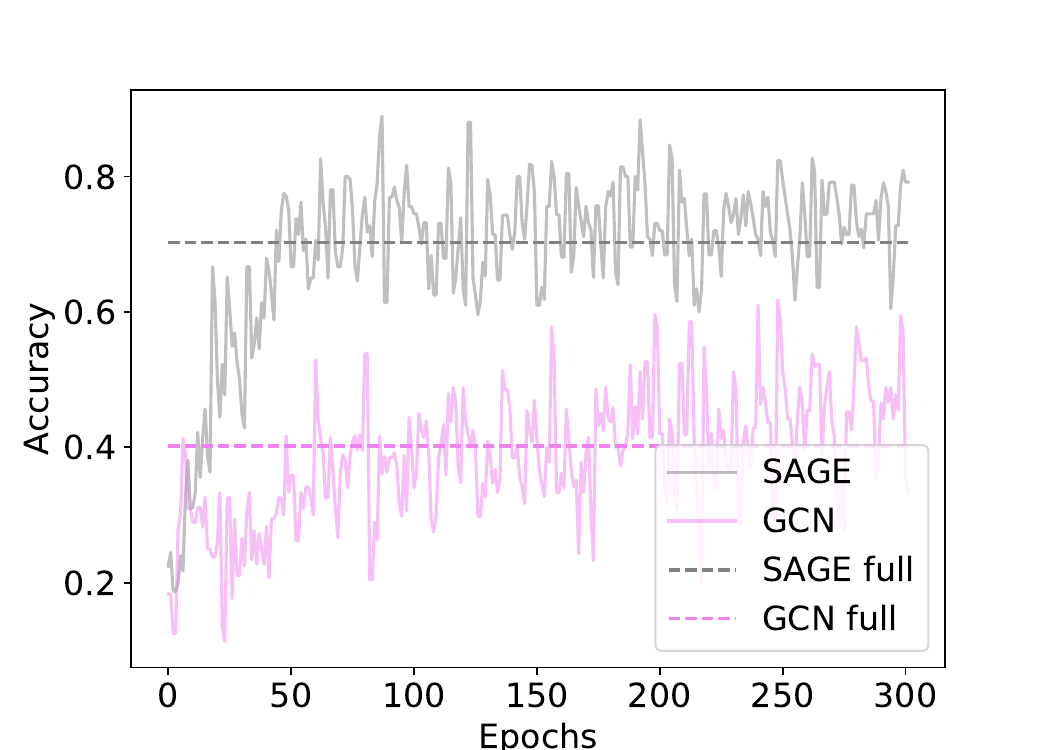}
\caption{Results for Cora (top) and CiteSeer (bottom) without any sampling other than the graph sequence sampling. We consider three scenarios in terms of the small graph size $n$ and the graph sampling interval $\gamma$: $n=300$, $\gamma=5$ epochs (left); $n=500$, $\gamma=5$ epochs (center); $n=300$, $\gamma=2$ epochs (right). Note that these graphs have size equal to approximately $10$-$16\%$ of the original graph size.}
\label{fig:no_sampling_cora_cite}
\end{figure*}

\end{document}